\documentclass{article}


\usepackage[final]{neurips_2025}




\usepackage[utf8]{inputenc} 
\usepackage[T1]{fontenc}    
\usepackage{hyperref}       
\usepackage{url}            
\usepackage{booktabs}       
\usepackage{amsfonts}       
\usepackage{nicefrac}       
\usepackage{microtype}      
\usepackage{xcolor}         

\usepackage{graphicx}
\usepackage{subfigure}
\usepackage{booktabs} 


\usepackage[ruled]{algorithm2e}
\usepackage{float}

\usepackage{amsmath}
\usepackage{amssymb}
\usepackage{amsthm}
\usepackage{fmtcount}
\usepackage{mathtools}
\usepackage{bigints}    
\usepackage[capitalize,noabbrev]{cleveref}
\usepackage{multirow}

\theoremstyle{plain}
\newtheorem{theorem}{Theorem}[section]

\newtheorem{lemma}[theorem]{Lemma}

\theoremstyle{definition}

\theoremstyle{remark}



\def\cC{\mathcal C}

\def\cF{\mathcal F}

\def\cH{\mathcal H}

\def\cN{\mathcal N}

\def\cX{\mathcal X}

\newcommand{\E}{\mathbb{E}}

\newcommand{\bc}{\begin{center}}
\newcommand{\ec}{\end{center}}
\newcommand{\be}{\begin{equation}}
\newcommand{\ee}{\end{equation}}
\newcommand{\been}{\begin{equation*}}
\newcommand{\eeen}{\end{equation*}}
\newcommand{\ba}{\begin{array}}
\newcommand{\ea}{\end{array}}
\newcommand{\bean}{\setlength\arraycolsep{2pt}\begin{eqnarray*}}
\newcommand{\eean}{\end{eqnarray*}}
\newcommand{\bea}{\setlength\arraycolsep{2pt}\begin{eqnarray}}
\newcommand{\eea}{\end{eqnarray}}
\newcommand{\ben}{\begin{enumerate}}
\newcommand{\een}{\end{enumerate}}
\newcommand{\bed}{\begin{itemize}}
\newcommand{\eed}{\end{itemize}}

\usepackage{kotex}

\title{Knowledge Distillation of Uncertainty \\ using Deep Latent Factor Model}

%

\author{%
  Sehyun Park  \\
  Department of Statistics\\
  Seoul National University\\
  \texttt{ps\_hyen@snu.ac.kr}
  \And
  Jongjin Lee\\
  Samsung Research\\
  \texttt{ga0408@snu.ac.kr}
  \And
  Yunseop Shin\\
  Department of Statistics\\
  Seoul National University\\
  \texttt{dbstjq48@snu.ac.kr}
  \AND
  Ilsang Ohn\\
  Department of Statistics\\
  Inha University\\
  \texttt{ilsang.ohn@inha.ac.kr}
  \And
  Yongdai Kim\thanks{Corresponding author.}\\
  Department of Statistics\\
  Seoul National University\\
  \texttt{ydkim0903@gmail.com}
}

\begin{document}

\maketitle

\begin{abstract}
Deep ensembles deliver state-of-the-art, reliable uncertainty quantification, but their heavy computational and memory requirements hinder their practical deployments to real applications such as on-device AI. 
Knowledge distillation compresses an ensemble into small student models, but existing techniques struggle to preserve uncertainty partly because reducing the size of DNNs typically results in variation reduction. 
To resolve this limitation, we introduce a new method of distribution distillation (i.e. compressing a teacher ensemble into a student distribution instead of a student ensemble) called Gaussian distillation, which estimates the distribution of a teacher ensemble through a special Gaussian process called the deep latent factor model (DLF) \footnote{The source code of DLF is publicly available at \url{https://github.com/sehyun1094/DLF}} by treating each member of the teacher ensemble as a realization of a certain stochastic process.
The mean and covariance functions in the DLF model are estimated stably by using the expectation-maximization (EM) algorithm. 
By using multiple benchmark datasets, we demonstrate that the proposed Gaussian distillation outperforms existing baselines. In addition, we illustrate that Gaussian distillation works well for fine-tuning of language models and distribution shift problems.
\end{abstract}

\section{Introduction} \label{section: intro}

While DNNs have succeeded tremendously in various AI tasks, the rapid increase in their model sizes has raised a concern about high computational resource demands, which limits  their applications to real world applications such as
on-device AI \citep{sanh2019distilbert}, and thus developers have increasingly compressed large-scale language models into much smaller models \citep{sanh2019distilbert, gu2023minillm, team2023gemini, abdin2024phi}. 
A representative tool for compression is knowledge distillation (KD), which constructs a smaller DNN that mimics a given large-scale DNN \citep{hinton2015distilling}. 

Another concern is that the inherent over-parameterization of a single DNN makes them susceptible to overfitting, leading to overconfident predictions. 
When training predictive models, it is essential to learn models not only accurate but also reliable.
For reliable prediction, proper quantification of uncertainty has become an important topic in AI research \citep{gal2016uncertainty, malinin2018predictive, mariet2020distilling}. 
Deep ensemble (an ensemble of multiple DNNs) has received much attention not only for its strong predictive performance but also for its ability to quantify prediction uncertainty \citep{lakshminarayanan2017simple, hendrycks2019benchmarking}.
An ensemble of DNNs can mitigate overconfident predictions by reflecting the uncertainty (i.e., the variation of multiple predictions made by members of an ensemble) when making a final decision.

Since deep ensemble requires even more computational resources than DNNs, KD of deep ensemble is
necessary for improving its applicability.
Several works have focused on distilling a given teacher ensemble to a student ensemble instead of 
distilling a teacher ensemble to a single student DNN to keep the uncertainty as much as possible \citep{mariet2020distilling, tran2020hydra, wen2020batchensemble, nam2021diversity, nam2022improving, zhang2023adaptive}. Algorithms for KD of deep ensemble can be roughly categorized into two approaches: one-to-one distillation and distribution distillation. 
One-to-one distillation compresses each member in a teacher ensemble to a smaller DNN, which becomes a member of a student ensemble. 
Various weight-sharing architectures for student DNNs, along with their corresponding learning algorithms, have been proposed  \citep{tran2020hydra, nam2021diversity, nam2022improving, ferianc2022simple}. 
On the other hand, distribution distillation treats each ensemble member in a teacher ensemble as an independent realization of a certain distribution whose parameters are modeled by a student DNN. 
\citep{malinin2019ensemble} and \citep{ryabinin2021scaling} assume that the conditional class probability vector of each member in a teacher ensemble follows a Dirichlet distribution and devise a method to estimate the parameters in the Dirichlet distribution using a student DNN.

There are still limitations in existing KD methods for deep ensemble.
One-to-one distillation methods tend to lose a significant amount of uncertainty in a teacher ensemble when they compress large DNNs into smaller ones, 
while performance of Dirichlet distillation (the distribution distillation with a Dirichlet distribution) is inferior to one-to-one distillation partly \citep{tran2020hydra, nam2022improving} because of instability in learning the parameters in the Dirichlet distribution.

The aim of this paper is to propose a new distribution distillation method that is numerically stable in learning and superior to other baselines in uncertainty quantification. 
In our proposed method, we treat each member in a teacher ensemble as an independent realization of a Gaussian process and estimate the mean and covariance functions of the Gaussian process based on observed predictions of 
members in a teacher ensemble.
For this purpose, we propose the deep latent factor (DLF) model where the mean and covariance functions are modeled by a student DNN and implement an EM algorithm to estimate the maximum likelihood estimator (MLE) of the student DNN. We call our method {\it Gaussian distillation}.

Our contributions are summarized as follows.
\begin{itemize}
	\item We propose a new distribution distillation method based on a specially designed Gaussian process called the DLF model that achieves superior performance in uncertainty quantification to other baselines.
	\item We develop an EM algorithm to estimate the student DNN in the DLF model. In particular,
	we propose a way of finding a good initial solution by maximizing
    the penalized complete log-likelihood.
	\item We do numerical experiments to show that Gaussian distillation outperforms other baselines for both regression and classification. We also illustrate that Gaussian distribution is a useful tool for fine-tuning language models.
    \item We apply the pre-trained DLF to distribution shift problems and show numerically that it outperforms baselines.
\end{itemize}

\section{Preliminaries} \label{section: preliminaries}

\subsection{Prediction uncertainty}

In a nutshell, quantifying prediction uncertainty in supervised tasks involves efficiently estimating the predictive distribution of the output $y$ given a new input denoted as $p(y|\boldsymbol{x}^{\text new})$.
The variation in the predictive distribution can be used as a measure of uncertainty in prediction.

A typical way of estimating the predictive distribution begins with a parametric generative model for the input and output pair.
Let $p(y|\boldsymbol{x},\theta)$ be the conditional distribution of the output $y\in \mathcal{Y}$ given an input $\boldsymbol{x} \in \mathcal{X}\subset \mathbb{R}^d,$ where 
$\theta \in \Theta$ is an unknown parameter.
Then, we try to estimate $\theta$ based on training data $(\boldsymbol{x}_1,y_1),\ldots,(\boldsymbol{x}_m,y_m)$ such that $p(y|\boldsymbol{x},\hat\theta)$ is as close as possible to $p^*(y|\boldsymbol{x}),$ where $\hat\theta$ is an estimate of $\theta$ and $p^*(y|\boldsymbol{x})$ is the true conditional distribution.
For example, the MLE minimizes the empirical KL divergence between $p(y|\boldsymbol{x},\theta)$ and $p^*(y|\boldsymbol{x}).$

It is well known, however, that the variation in $p(y|\boldsymbol{x},\hat\theta)$ is smaller than that in $p^*(y|\boldsymbol{x})$ because $p(y|\boldsymbol{x},\hat\theta)$ does not take into account the uncertainty in estimating $\hat\theta$. Thus, making a decision solely with $p(y|\boldsymbol{x},\hat\theta)$ would lead in overconfident results.
A proper uncertainty quantification in prediction should consider not only uncertainty in $p^*(y|\boldsymbol{x})$ (aleatory) \citep{gal2016uncertainty, malinin2018predictive} but also uncertainty in $\hat\theta$ (epistemic).

A popular way of considering both aleatory and epistemic uncertainties in prediction is to use an ensemble.
We construct multiple estimates $\hat\theta_1,\ldots,\hat\theta_n$ of $\theta$ and then estimate the predictive distribution as
$\hat{p}(y|\boldsymbol{x})= \sum_{i=1}^n p(y|\boldsymbol{x},\hat\theta_i)/n,$ which we call
the averaged prediction model.
For deep learning, the two most representative methods of constructing multiple estimates are deep ensemble \citep{lakshminarayanan2017simple, dietterich2000ensemble, laurent2022packed} and Bayesian DNNs \citep{neal1996monte, welling2011bayesian, gal2016dropout, wilson2020bayesian, izmailov2021bayesian, sharma2023bayesian, kong2023masked}.
Deep ensemble generates multiple estimates by learning a DNN with different initial parameter, while Bayesian DNNs generate $\hat\theta$s from the posterior distribution.
In this paper, we focus on deep ensemble, but our proposed method can be applied to Bayesian DNNs without modification.

\subsection{Review of ensemble distillation}

As mentioned in Introduction, deep ensemble has an intrinsic limitation in its practical applications due to high computational costs and times along with demands for substantial memory to store and process multiple prediction models.
To resolve this problem, KD of deep ensemble has received much attention \citep{hinton2015distilling,mariet2020distilling,tran2020hydra,wen2020batchensemble, nam2021diversity, nam2022improving, zhang2023adaptive,malinin2019ensemble}. 
A basic idea of KD of deep ensemble is to approximate large DNNs in a teacher ensemble by smaller student DNNs.
A naive approach of KD of deep ensemble is to approximate the averaged prediction model
$\hat{p}(y|\boldsymbol{x})$ of a teacher ensemble by a small single DNN \citep{zhang2023adaptive, fukuda2017efficient, kwon2020adaptive}.
This naive approach, however, does not perform well since it is hard to distill the uncertainty in $\hat{p}(y|\boldsymbol{x})$ into a single student DNN.

A remedy is to distill a teacher ensemble
into a student ensemble.
Several methods have been proposed for this purpose, which can be roughly divided into two categories that are explained in the subsequent subsections.

\subsubsection{One-to-one distillation}\label{one-to-one distillation}
The main idea of one-to-one distillation is to construct multiple student models, each of which corresponds to each teacher model. That is for given $n$ many teacher models $p_{i}^{(t)}(y|\boldsymbol{x}), i=1,\ldots,n,$ $n$ many student models $p_{i}^{(s)}(y|\boldsymbol{x})$ are constructed. 
To save computation time and memory further,
various special neural network architectures for $n$ student models $p_{i}^{(s)}(y|\boldsymbol{x}), i=1,\ldots,n$
have been proposed. Examples are \textit{Hydra} \citep{tran2020hydra}, 
\textit{Batch Ensembles} (BE) and \textit{Latent Batch Ensemble} (LBE) \citep{wen2020batchensemble, nam2022improving}.
See Appendix \ref{appendix:one-to-one distillation} for details.
\subsubsection{Distribution distillation}

Distribution distillation assumes that teacher models are independent realizations of a stochastic model with
unknown parameters modeled by a student DNN and
estimates the student DNN based on the prediction values of the teacher models \citep{malinin2019ensemble, ryabinin2021scaling}.
To be more specific, for classification problems, we 
assume that  $\big(p^{(t)}_i(y|\boldsymbol{x}\big), y=1,\ldots,c), i=1,\ldots,n$ for a given $\boldsymbol{x}$
are independently generated from the Dirichlet distribution with 
parameters $\alpha_1(\boldsymbol{x}),\ldots,\alpha_c(\boldsymbol{x})$ and 
model these parameters by a student DNN.
Once the student DNN is learned, ensemble members
are generated from the learned Dirichlet distribution
and aggregated in the prediction phase. We call this method {\it{Dirichlet distillation}}.
See Appendix \ref{appendix:distribution distillation} for details.

\section{The Proposed Method} \label{section:proposed method}

We propose a new method of distribution distillation.
The main idea of the proposed method is that we treat members in a teacher ensemble as independent realizations of a Gaussian process and estimate the mean and covariance functions of the Gaussian process by a student DNN.
Then, in the inference phase, we generate ensemble members from the estimated Gaussian process.
We call our proposed method \textit{Gaussian distillation}.
See Figure \ref{fig:overal process DLF} for the overall process of Gaussian distillation.
\begin{figure}[h]
    \centering
    \includegraphics[width=\linewidth]{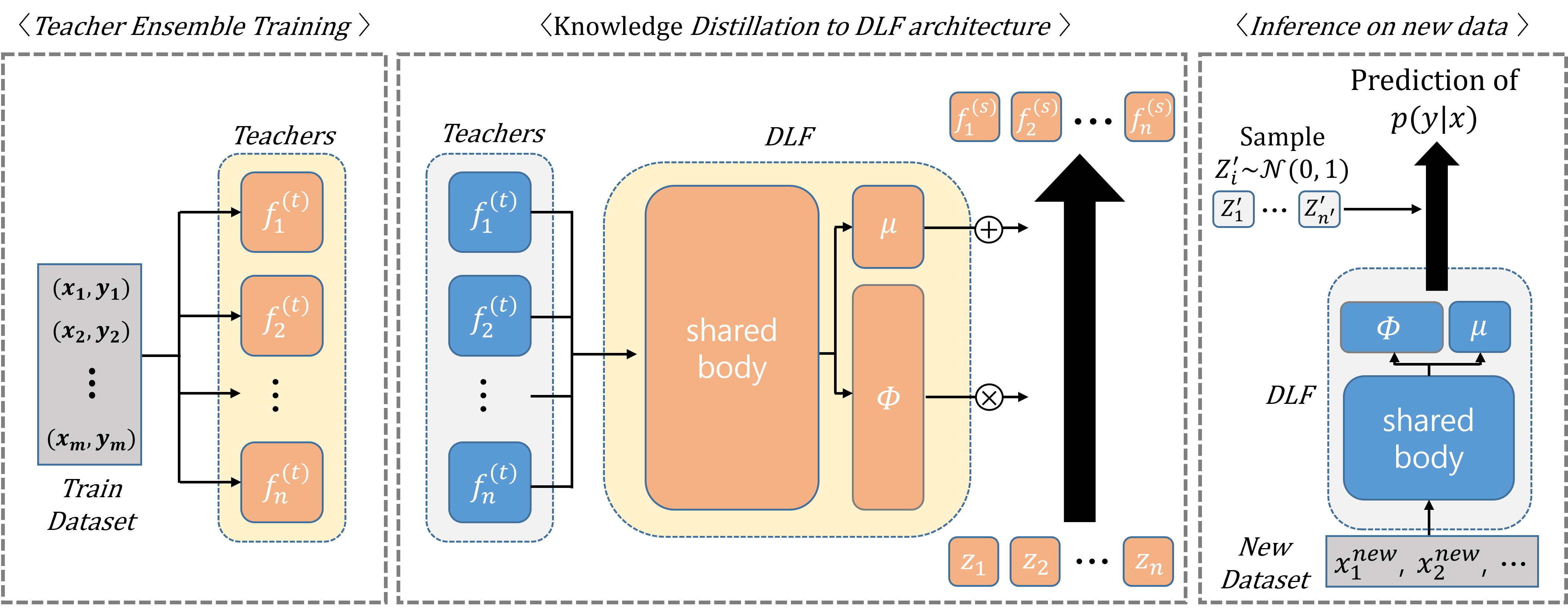}
    \caption{Overall process of Gaussian distillation}
    \label{fig:overal process DLF}
\end{figure}

A technical difficulty of this idea is to model and estimate the covariance function. 
To resolve this problem, we use the DLF, which is an extension of the standard linear factor model \citep{bartholomew2011latent}, where the mean and factor loading are modeled by a student DNN.

For the probabilistic model of data, we consider $y=f(\boldsymbol{x})+\epsilon,$
where $\epsilon\sim \mathcal{N}(0,\sigma_\epsilon^2)$ for regression problems and
$p(y|\boldsymbol{x})=\exp(f_y(\boldsymbol{x}))/\sum_{v=1}^c \exp(f_v(\boldsymbol{x}))$
for classification problems.
Thus, a teacher ensemble for regression problems consists of multiple teacher models 
for $f$ as well as multiple estimates of $\sigma_\epsilon^2$, while
a teacher ensemble for classification problems consists of multiple teacher models for
multivariate functions $\boldsymbol{f}(\cdot)=(f_1(\cdot),\ldots, f_c(\cdot))$.



\subsection{Deep Latent Factor model}\label{Deep Latent Factor model}

In this subsection, we introduce special Gaussian processes for $f(\cdot)$ and $\boldsymbol{f}(\cdot),$ respectively.

\paragraph{Univariate case}
The DLF model for a univariate random function $f:\mathcal{X} \rightarrow \mathbb{R}$ is defined as
\begin{equation}
	f(\cdot)=\mu_{\theta}(\cdot)+\Phi_{\theta}(\cdot)^\top \boldsymbol{Z},
\end{equation}
where $\mu_{\theta}(\cdot):\mathcal{X}\rightarrow \mathbb{R}$ is the mean function, $\Phi_{\theta}(\cdot):\mathcal{X} \rightarrow \mathbb{R}^q$ is the factor loading function and $\boldsymbol{Z} \sim \mathcal{N}_q(\boldsymbol{0}, \mathbb{I}_q)$ is the latent factor. Here, $\mathcal{N}_q$ is the $q$-dimensional Gaussian distribution and $\mathbb{I}_q$ is the $q$-dimensional identity matrix. In the DLF model, we set $(\mu_{\theta}(\cdot),\Phi_{\theta}(\cdot)^\top)$ by a student DNN parameterized by $\theta$ which has $q+1$ output nodes. 

It is easy to see that the DLF is a Gaussian process with mean function $\mu_{\theta}(\cdot)$
and covariance function $\Sigma_{\theta}(\cdot,\cdot)=\Phi_{\theta}(\cdot) ^\top\Phi_{\theta}(\cdot).$
Once we have $n$ many teacher models $f_1(\cdot),\ldots, f_n(\cdot),$
we assume them to be independent realizations of the DLF model and estimate the mean and factor loading functions. 

\paragraph{Multivariate case}
The DLF model for a multivariate function $\boldsymbol{f}(\cdot)=(f_1(\cdot),\ldots,f_c(\cdot))^\top$
is  defined as 
\begin{equation}
    \boldsymbol{f}(\cdot) = \mu_{\theta}(\cdot) + L \boldsymbol{Z} \Phi_{\theta}(\cdot) , 
\end{equation} 
where $\mu_{\theta}(\cdot):\mathcal{X}\rightarrow \mathbb{R}^{c}$ is the mean function,
$\Phi_{\theta}(\cdot):\mathcal{X} \rightarrow \mathbb{R}^q$ is the factor loading function, $L \in \mathbb{R}^{c \times c}$ is a lower-triangular matrix and $\boldsymbol{Z} \sim  \mathcal{MN}_{c, q}(0,\mathbb{I}_{c},\mathbb{I}_{q}).$  Here,  $\mathcal{MN}_{c, q}(0,\mathbb{I}_{c},\mathbb{I}_{q})$ is a matrix-variate Gaussian distribution.
It can be shown that the DLF model is a multivariate Gaussian process $\mathcal{MGP}_{c}(\mu_{\theta}, \Sigma, \Lambda)$ with the mean function $\mu_{\theta}(\cdot),$  covariance function $\Sigma(\cdot,\cdot)=\Phi_{\theta}(\cdot)^\top\Phi_{\theta}(\cdot)$ and parameter matrix $\Lambda = L L^\top$.
For the definition of multivariate Gaussian process, see \citep{chen2023multivariate}.

\subsection{Estimation of the mean and factor loading}\label{Estimation of the mean and factor loading}

The main idea of Gaussian distillation is to estimate
the mean and factor loading functions by maximizing the corresponding log-likelihood, assuming that teacher models are independent realizations of the DLF. 
For optimization, we use the EM algorithm \citep{rubin1982algorithms}.
In this section, we explain the EM algorithm for Gaussian distillation.
For ease of notation, we only consider the univariate DLF model, and refer to Appendix \ref{EM algorithm when Multivariate} for the multivariate DLF.

Suppose that $n$ many teacher models $f_1(\cdot),\ldots, f_n(\cdot)$ are given.
Gaussian distillation consists of three steps.
The first step is to choose $m$-many design points $\mathcal{D}^{\mathrm{design}} = \{\boldsymbol{x}_{1}^{(d)}, \ldots, \boldsymbol{x}_{m}^{(d)}\}$.
We will discuss how to choose the design points in Section \ref{choice of design points}. 
The second step is to calculate the vectors of prediction values of each teacher model at the design points to have $\boldsymbol{f}_i=\big(f_i(\boldsymbol{x}_j^{(d)}), j=1,\ldots,m\big)^\top$ for $i=1,\ldots,n.$
The final step is to estimate the parameter $\theta$ 
in the DLF model assuming that
$f_1(\cdot),\ldots, f_n(\cdot)$ are independent realizations of a random function following the DLF model.
Since $\boldsymbol{f}_i$s are independent Gaussian random vectors, the MLE can be obtained by use of the EM algorithm as follows.

To make the EM algorithm numerically stable, we consider the noisy DLF model which assumes that $\boldsymbol{f}_i=\boldsymbol{\tilde{f}}_i+\boldsymbol{v}_i,$ where $\boldsymbol{v}_i \sim \mathcal{N}_m(\mathbf{0}, \sigma_f^2 \mathbb{I}_m)$ and $\boldsymbol{\tilde{f}}_i=(\tilde{f}_i(\boldsymbol{x}_1),\ldots,\tilde{f}_i(\boldsymbol{x}_m))^\top$ with $\tilde{f}_i(\cdot)$s following the DLF model. 
Specifically, each $\tilde{f}_i$ is expressed as $\tilde{f}_i(\cdot) = \mu_{\theta}(\cdot) + \Phi_{\theta}(\cdot)^\top \boldsymbol{z}_i,$ where $\boldsymbol{z}_i \sim \mathcal{N}_q(\mathbf{0}, \mathbb{I}_q)$ denotes the latent factor corresponding to the $i$-th function realization.
Then, we obtain the MLE of the parameter $\theta$ in the mean and factor loading functions as well as $\sigma_f^2.$
We abuse the notation to write $\theta=(\theta,\sigma_f^2)$ unless there is any confusion.

The complete log-likelihood is given as
\begin{equation}
\begin{split}    
    \ell^{com}(\theta | \boldsymbol{f}_{1:n}, \boldsymbol{z}_{1:n}) 
    =&-\frac{nm}{2} \log(2 \pi \sigma_{f}^{2}) - \frac{nq}{2} \log(2 \pi)  - \frac{ \sum_{i=1}^{n}\boldsymbol{z}_{i}^{\top} \boldsymbol{z}_{i}}{2} \\
    & - \frac{ \sum_{i=1}^{n} (\boldsymbol{f}_i - \boldsymbol{\mu}_{\theta} - \boldsymbol{\Phi}_{\theta} \boldsymbol{z}_{i}    )^\top (\boldsymbol{f}_i - \boldsymbol{\mu}_{\theta} - \boldsymbol{\Phi}_{\theta} \boldsymbol{z}_{i}  ) }{2 \sigma_{f}^{2}},
\end{split}
\end{equation}
where $\boldsymbol{f}_{1:n}=\{\boldsymbol{f}_1,\ldots,\boldsymbol{f}_n\}, 
\boldsymbol{z}_{1:n}=\{\boldsymbol{z}_1,\ldots,\boldsymbol{z}_n\}, \boldsymbol{\mu}_{\theta}=
(\mu_{\theta}(\boldsymbol{x}_1^{(d)}),\ldots, \mu_{\theta}(\boldsymbol{x}_m^{(d)}))^\top$
and $\boldsymbol{\Phi}_{\theta} = (\Phi_{\theta}(\boldsymbol{x}_1^{(d)}),\ldots,\Phi_{\theta}(\boldsymbol{x}^{(d)}_m))^\top$ is an $m \times q$ matrix.

For a given parameter $\theta^{(t-1)}$ at time $t-1,$
the E-step is to calculate the conditional expectation of the complete log-likelihood 
 $Q(\theta|\theta^{(t-1)})=\mathbb{E}_{\boldsymbol{z}_{1:n} \mid \boldsymbol{f}_{1:n}, \theta^{(t-1)}}[\ell^{com}(\theta | \boldsymbol{f}_{1:n}, \boldsymbol{z}_{1:n})],$ whose formula is given in Appendix \ref{Appendix:Estimation of the mean and factor loading}. 
In the M-step, we update $\theta^{(t)}$ by a stochastic gradient descent algorithm on mini-batches.
The EM algorithm is summarized in Algorithm \ref{alg:1} in Appendix \ref{Appendix:Estimation of the mean and factor loading}.

\paragraph{Choice of the initial parameter} Note that the EM algorithm may converge to a local optimum, and the choice of an initial solution significantly impacts the final estimate. For the DLF model, where the factor loading involves a complex DNN structure, this issue becomes even worse. Moreover, the identifiability issue of the factor loading
makes initializing the EM algorithm from a well-chosen starting point become even more crucial.

For searching a good initial solution, we pretrain the DLF model by maximizing the
following penalized complete log-likelihood with respect to $\boldsymbol{\theta}$ and $\boldsymbol{z}_i$s where
\begin{equation}
    \ell^{pen}(\boldsymbol{\theta},\boldsymbol{z}_{1:n}|\boldsymbol{f}_{1:n})=
       \ell^{com}(\boldsymbol{\theta}|\boldsymbol{f}_{1:n},\boldsymbol{z}_{1:n})
       - \lambda \mathcal{D}_{\operatorname{MMD}}(\boldsymbol{z}_{1:n},\boldsymbol{z}_{1:n}^{\prime} )
\end{equation}
for $\lambda>0,$ where $\boldsymbol{z}_{1:n}^{\prime}$ are samples generated from the standard Gaussian distribution and
$\mathcal{D}_{\operatorname{MMD}}$ is the Maximum Mean Discrepancy (MMD) with the RBF kernel.
The term MMD is introduced to make the distribution of the estimated $\boldsymbol{z}_{1:n}$ similar to the standard Gaussian distribution.

\paragraph{KD for $\sigma_\epsilon^2$} For regression problems, we need a KD method for $\sigma_\epsilon^2.$
Let $\sigma_{\epsilon,1}^2,\ldots,\sigma_{\epsilon,n}^2$ be estimators of $\sigma_\epsilon^2$ provided by each teacher model. We assume that $\sigma_{\epsilon,i}^2, i=1,\ldots,n$ are independently generated from the inverse
gamma distribution and estimate the parameter in the distribution accordingly.
In the inference phase, we generate ensemble members of $\sigma_\epsilon^2$ from the estimated inverse gamma distribution.

\subsection{Comparison with other baselines}
\label{sec3.4}

\paragraph{Comparison with Hydra}  
Recall that we model $(\mu_\theta(\cdot),\Phi_\theta(\cdot)^\top)^\top$
by a DNN with $q+1$ many heads. Note that the DLF model assumes
$f_i(\cdot)=\mu_\theta(\cdot)+\Phi_\theta(\cdot)^\top \boldsymbol{z}_i$
for $i=1,\ldots,n,$ and thus we can interpret $(\mu_\theta(\cdot),\Phi_\theta(\cdot)^\top)^\top$
as the body and $\boldsymbol{z}_i$s as the model-specific weights at the head.
In view of sharing the body, the DLF model is quite similar to the conventional multi-head structure used in Hydra \citep{tran2020hydra}. The main difference is that the DLF model treats $\boldsymbol{z}_i$s as random quantities and thus
integrates out before estimating the MLE while Hydra treats $\boldsymbol{z}_i$s as fixed effects and
estimates $\theta$ and $\boldsymbol{z}_i$s simultaneously by minimizing a given loss.
It is well-known that treating random effects as fixed effects is highly susceptible to bias \citep{breslow1995bias, lin1996bias, engel1998simple, skrondal2004generalized}. Our experimental results in Section \ref{experiments} amply demonstrate that treating $\boldsymbol{z}_i$s as random is better than
treating $\boldsymbol{z}_i$s as fixed effects.

\paragraph{Comparison with Dirichlet distillation}
At least, there are two advantages of Gaussian distillation compared to  Dirichlet distillation.
Gaussian distillation can be applied to both regression and classification models while Dirichlet distillation is only applicable to classification problems.
The second advantage is that estimation of the mean and factor loading in the DLF model is easier than estimation of the parameter in the Dirichlet distribution owing to the nice EM algorithm. 
This stability makes Gaussian distillation perform better than Dirichlet distillation. 
The inferior performance of Dirichlet distillation compared to one-to-one distillation, as reported in \citep{tran2020hydra, nam2021diversity}, is confirmed by our experiments in Section \ref{experiments}.

\subsection{Choice of design points}\label{choice of design points}

Let $\hat{\mu}(\cdot)$ and $\hat{\Sigma}(\cdot,\cdot)$ be the estimate of $\mu_*(\cdot)$ and $\Sigma_*(\cdot,\cdot)$ by the DLF model with design points $\mathcal{D}^{\mathrm{design}}$.
For a given $\boldsymbol{x}\in \mathcal{X}$, let $\hat{p}_{\boldsymbol{x}}$ and $p_{*,\boldsymbol{x}}$ be the distributions of $f(\boldsymbol{x})$ under the assumption that $f(\cdot)$ is a Gaussian process with the parameters $(\hat{\mu},\hat\Sigma)$ and $(\mu_*,\Sigma_*),$ respectively.
In Theorem \ref{theory-new} in Appendix \ref{Theorical Results}, we prove that $
\sup_{\boldsymbol{x}\in \mathcal{D}^{\mathrm{design}}} d_1(\hat{p}_{\boldsymbol{x}},p_{*,\boldsymbol{x}})$ converges to 0 as $n\rightarrow \infty$ if we choose the architecture of a student DNN for $(\mu_\theta(\cdot),\Phi_\theta(\cdot)^{\top})$ appropriately, where $d_1$ is the $\ell_1$ metric.

For $\boldsymbol{x}\not\in \mathcal{D}^{\mathrm{design}},$ if
$\hat{\mu}$ and $\mu_*$, as well as $\hat\Sigma$ and $\Sigma_*$ are (coordinate-wise) Lipschitz, it can be shown that
$ d_1(\hat{p}_{\boldsymbol{x}},p_{*,\boldsymbol{x}})
\le d_1(\hat{p}_{\boldsymbol{x}_{(1)}},p_{*,\boldsymbol{x}_{(1)}})+C \|\boldsymbol{x}-\boldsymbol{x}_{(1)}\|$
for a positive constant $C,$ where $\boldsymbol{x}_{(1)}$ is the nearest point in $\mathcal{D}^{\mathrm{design}}$ to 
$\boldsymbol{x}.$ See Theorem \ref{theory-new} in Appendix \ref{Theorical Results}.
Note that the term $ \|\boldsymbol{x}-\boldsymbol{x}_{(1)}\|$ is affected by the choice of design points.
Suppose that $\boldsymbol{x}$ is a realization of a random vector $\boldsymbol{X}\sim \mathbb{P}.$
Then, the expected nearest-neighbor distance 
$\E_{\boldsymbol{X}\sim \mathbb{P}} \|\boldsymbol{X}-\boldsymbol{X}_{(1)}\|$ 
becomes smaller when the design points are located
in a higher-density region of $\mathbb{P}$. This observation suggests that
design points similar to test data would be better. Validation data (dataset whose distribution is the same as
the training data used for learning a teacher ensemble)  would be a promising candidate for the design points. 
See Appendix \ref{ablation:design points type} for numerical experiments.

\section{Application to distribution shift problems} \label{application distribution shift}

The pre-trained DLF can be applied to distribution shift problems.
We say that given new data is shifted in distribution if the distribution of new data is different from that of training data. Distribution shift problems, whose aim is to efficiently learn a prediction model on new data when the size of new data is small, have been studied extensively \citep{lei2021near, wu2021online, bai2022adapting, baby2023online, garg2023rlsbench, rosenfeld2023almost}.
A popular method is to learn a DNN on training data first and retrain the head of the DNN on new data while the body is fixed \citep{kumar2022fine, lee2022surgical}.

Note that the DLF model is given as
\begin{equation}
\label{eq:ds-1}
f_j(\cdot)=\hat\mu_j(\cdot)+\sum_{k=1}^{q} \sum_{l=1}^c \hat{\Phi}_k(\cdot) \hat{L}_{jl} z_{jlk},
\end{equation}
for $j=1,\ldots,c,$
where $z_{jkl}$s are independent standard Gaussian random variables.
For distribution shift problems, we can treat $(\hat{\mu}(\cdot),\hat{\Phi}(\cdot)^\top, \hat{L})$ as a learned body and
$z_j$s are the weights in the prediction head.
Then we learn only the weights of the head on new data while fixing the body.
In Section \ref{Application to distribution shift problems}, we show empirically that this method outperforms
its competitors. The superior performance of the DLF model for distribution shift problems indicates that Gaussian distillation is good at not only uncertainty quantification but 
estimating the feature vector $(\hat{\mu}(\cdot),\hat{\Phi}(\cdot)^\top).$

\section{Experiments} \label{experiments}

In this section, we investigate  Gaussian distillation by analyzing multiple benchmark datasets.
We compare Gaussian distillation with existing baselines including the naive distillation
(one-to-one distillation without sharing weights between student DNNs, small-Ens), Hydra \citep{tran2020hydra} and BE \citep{wen2020batchensemble} for regression and classification problems as well as fine-tuning of language models in view of uncertainty quantification. For classification, we also evaluate Proxy–Dirichlet Distillation (Proxy-End$^{2}$) \citep{ryabinin2021scaling} and Ensemble Distillation via Flow Matching (EDFM) \citep{parkensemble}.
In addition, we show that a pre-trained DLF outperforms its competitors for distribution shift problems.

\subsection{Uncertainty quantification for regression and classification problems}

\subsubsection{Regression case}

\paragraph{Datasets}
We analyze six benchmark datasets from the UCI repository \citep{asuncion2007uci}
including Boston housing, Concrete, Energy, Wine, Power Plant, and Kin8nm. 
Each dataset is randomly split into 90\% training and 10\% testing, and teacher models are trained following the experimental protocol of \citep{bui2016deep}. 
We repeat this procedure 10 times to obtain 10 measures of the evaluation metrics of each methods and report the averages (with the standard errors). 
See Appendix \ref{experimental details: regression case} for details of implementation.

\paragraph{Results}
Table \ref{regression_result} presents the results of the four evaluation metrics (see Appendix \ref{Evaluation metric:Reg} for the definitions) 
for performance and uncertainty quantification. DLF outperforms Hydra and BE in most cases.
Even when it is not the best, DLF is at least the second best.
The coverage probabilities of Hydra and BE are sometimes much lower than those of DLF
(Boston housing and Concrete for Hydra, and Boston housing and kim8nm for BE),
which suggests that deterministic distillation methods fail to fully preserve the uncertainty in a teacher ensemble. 
This observation is not surprising since variation of smaller models is in general smaller than that of larger models and weight sharing would reduce the variation further.
Thus, a way of adding additional uncertainty to a student ensemble is needed, and distribution distillation is such a solution.

{\renewcommand{\arraystretch}{1.2}
\begin{table}[h]
\centering
\caption{Results on UCI benchmark datasets. ($\ast$ : closer to the coverage probability of a teacher ensemble is better)}
\label{regression_result}
\resizebox{\textwidth}{!}{%
\begin{tabular}{c|l|cccccc}
\toprule
\multirow{2}{*}{Metric} &
  \multirow{2}{*}{Method} &
  \multicolumn{6}{c}{Datasets} \\
\cline{3-8}
 &  & Boston housing & Concrete & Energy & Wine & Power & Kin8nm \\
\midrule
\multirow{5}{*}{RMSE $\downarrow$} 
& {\color[HTML]{00B0F0} Teachers}
  & {\color[HTML]{00B0F0} 2.5786}    & {\color[HTML]{00B0F0} 5.6191}    & {\color[HTML]{00B0F0} 0.5692}    & {\color[HTML]{00B0F0} 0.5497}    & {\color[HTML]{00B0F0} 4.2197}    & {\color[HTML]{00B0F0} 0.0794}    \\
& small-Ens 
  & 2.7280 (0.0184) & 5.6952 (0.0494) & 0.6367 (0.0182) & 0.6002 (0.0083) & 4.2430 (0.0864) & 0.0865 (0.0007) \\
& Hydra       
  & 2.8346 (0.0835) & 6.0558 (0.1366) & 0.6549 (0.0360) & 0.5689 (0.0114) & 4.2284 (0.0074) & 0.0932 (0.0023) \\
& BE          
  & 2.8375 (0.0729) & 5.9777 (0.1475) & 0.6661 (0.0354) & 0.556  (0.0187) & 4.2367 (0.0041) & 0.0962 (0.0059) \\
& DLF         
  & \textbf{2.6687 (0.1700)} & \textbf{5.6047 (0.1771)} & \textbf{0.5659 (0.0239)} & \textbf{0.5506 (0.0104)} & \textbf{4.2211 (0.0010)} & \textbf{0.0825 (0.0010)} \\
\midrule
\multirow{5}{*}{NLL $\downarrow$}
& {\color[HTML]{00B0F0} Teachers}
  & {\color[HTML]{00B0F0} 2.3850}    & {\color[HTML]{00B0F0} 3.1134}    & {\color[HTML]{00B0F0} 0.8533}    & {\color[HTML]{00B0F0} 0.7980}    & {\color[HTML]{00B0F0} 2.8586}    & {\color[HTML]{00B0F0} -1.1109}   \\
& small-Ens 
  & \textbf{2.4150 (0.0085)} & 3.1672 (0.0167) & 0.9829 (0.0263) & 0.8861 (0.0160) & 2.8622 (0.0191) & -1.032 (0.0108) \\
& Hydra       
  & 2.4843 (0.0478) & 3.2586 (0.0293) & 0.9914 (0.0660) & 0.8322 (0.0166) & 2.8604 (0.0017) & -0.9434 (0.0307) \\
& BE         
  & 2.4892 (0.0408) & 3.2241 (0.0332) & 0.9879 (0.0511) & \textbf{0.8065 (0.0138)} & 2.8628 (0.0010) & -0.9115 (0.0808) \\
& DLF         
  & 2.4346 (0.1147) & \textbf{3.1584 (0.0463)} & \textbf{0.8525 (0.0471)} & 0.8230 (0.0199) & \textbf{2.8591 (0.0010)} & \textbf{-1.0754 (0.0126)} \\
\midrule
\multirow{5}{*}{CRPS $\downarrow$}
& {\color[HTML]{00B0F0} Teachers}
  & {\color[HTML]{00B0F0} 1.4425}    & {\color[HTML]{00B0F0} 2.9926}    & {\color[HTML]{00B0F0} 0.3137}    & {\color[HTML]{00B0F0} 0.2962}    & {\color[HTML]{00B0F0} 2.3360}    & {\color[HTML]{00B0F0} 0.0443}    \\
& small-Ens 
  & 1.5233 (0.0087) & \textbf{2.9953 (0.0304)} & 0.3461 (0.0077) & 0.3307 (0.0067) & 2.3392 (0.0321) & 0.0475 (0.0005) \\
& Hydra       
  & 1.6041 (0.0494) & 3.3084 (0.0639) & 0.3622 (0.0216) & 0.3075 (0.0040) & 2.3405 (0.0048) & 0.0518 (0.0012) \\
& BE          
  & 1.6158 (0.0490) & 3.2320 (0.0894) & 0.3617 (0.0168) & 0.3020 (0.0059) & 2.3483 (0.0024) & 0.0532 (0.0035) \\
& DLF         
  & \textbf{1.4317 (0.1029)} & 3.0622 (0.0954) & \textbf{0.3163 (0.0119)} & \textbf{0.2980 (0.0043)} & \textbf{2.3364 (0.0010)} & \textbf{0.0458 (0.0005)} \\
\midrule
\multirow{5}{*}{\shortstack{95\%\\Coverage\\Probability $\ast$}} 
& {\color[HTML]{00B0F0} Teachers} 
  & {\color[HTML]{00B0F0} 0.9608}    & {\color[HTML]{00B0F0} 0.9515}    & {\color[HTML]{00B0F0} 1.0000}    & {\color[HTML]{00B0F0} 0.9750}    & {\color[HTML]{00B0F0} 0.9697}    & {\color[HTML]{00B0F0} 0.9610}    \\
& small-Ens 
  & \textbf{0.9408 (0.0016)} & \textbf{0.9431 (0.0041)} & 0.9921 (0.0011) & 0.9569 (0.0042) & 0.9669 (0.0012) & \textbf{0.9649 (0.0016)} \\
& Hydra       
  & 0.8995 (0.0109) & 0.9097 (0.0115) & 0.9948 (0.0091) & 0.9594 (0.0053) & 0.9782 (0.0003) & 0.9407 (0.0046) \\
& BE          
  & 0.9093 (0.0090) & 0.9282 (0.0099) & 0.9922 (0.0110) & 0.9612 (0.0092) & 0.9778 (0.0014) & 0.9080 (0.0213) \\
& DLF         
  & 0.9240 (0.0154) & 0.9291 (0.0125) & \textbf{1.0000 (0.0000)} & \textbf{0.9681 (0.0055)} & \textbf{0.9711 (0.0053)} & 0.9761 (0.0036) \\
\bottomrule
\end{tabular}%
}
\end{table}
}

\subsubsection{Classification case}



\paragraph{Datasets}
CIFAR-10 and CIFAR-100 consist of 50,000 training and 10,000 test images.
In this experiment, the training data are further split into 80\% training and 20\% validation, and 
teacher models are trained on the training data and the number of epochs is determined by the validation data.
Implementation details of the distillation methods are given in Appendix \ref{experimental details: classification case}.
Experiments are repeated with 5 different random initializations for each method.

\paragraph{Results}
As shown in Table \ref{classification_result}, 
DLF outperforms the other baselines consistently in terms of not only uncertainty quantification but also accuracy.
In particular, improvements of DLF with respect to ECE are noticeable.
The definitions of the evaluation metrics are given in Appendix \ref{Evaluation metric:Cls}.

{\renewcommand{\arraystretch}{1.2}
\begin{table}[h]
\centering
\caption{Results on CIFAR-10 and CIFAR-100.}
\label{classification_result}
\resizebox{0.65\textwidth}{!}{%
\begin{tabular}{c|l|ccc}
\toprule
dataset      & method                        & Acc(\%) $\uparrow$            & NLL $\downarrow$                   & ECE(\%) $\downarrow$            \\
\midrule
\multirow{7}{*}{CIFAR-10}
 & {\color[HTML]{00B0F0} Teachers} & {\color[HTML]{00B0F0} 94.24} & {\color[HTML]{00B0F0} 0.1539}  & {\color[HTML]{00B0F0} 0.9}    \\
 & small-Ens                     & 92.87 (0.35)       & 0.2377 (0.0019)       & 3.93 (0.14)         \\
 & Hydra                          & 93.16 (0.06)       & 0.2660 (0.0063)       & 4.15 (0.01)         \\
 & LBE                            & 93.25 (0.26)       & 0.2480 (0.0053)       & 4.11 (0.10)         \\
 & $\text{Proxy-EnD}^2$               & 90.92 (0.28)       & 0.2861 (0.0027)       & 2.08 (0.19)         \\
 & EDFM                           & 90.62 (0.23)       & 0.2858 (0.0025)       & 2.78 (0.17)         \\
 & DLF                            & \textbf{93.40 (0.14)} & \textbf{0.2246 (0.0023)} & \textbf{2.79 (0.20)} \\ 
\midrule
\multirow{7}{*}{CIFAR-100}
 & {\color[HTML]{00B0F0} Teachers} & {\color[HTML]{00B0F0} 81.36} & {\color[HTML]{00B0F0} 0.7167}  & {\color[HTML]{00B0F0} 1.41}   \\
 & small-Ens                     & 79.29 (0.26)       & 1.0413 (0.0145)       & 12.90 (0.24)         \\
 & Hydra                          & 77.42 (0.15)       & 1.2912 (0.0272)       & 12.70 (0.37)         \\
 & LBE                            & 79.58 (0.40)       & 1.0110 (0.0087)        & 13.42 (0.42)        \\
 & $\text{Proxy-EnD}^2$               & 67.62 (0.22)       & 1.2355 (0.0151)       & \textbf{7.35 (0.25)}         \\
 & EDFM                           & 64.17 (0.32)       & 1.6741 (0.0242)       & 11.35 0.47)         \\
 & DLF                            & \textbf{79.68 (0.23)} & \textbf{0.8974 (0.0042)} & 9.45 (0.31)  \\
\bottomrule
\end{tabular}%
}
\end{table}
}

\subsection{Application to fine-tuning of language models}

In this section, we apply the proposed distillation framework to downstream binary classification tasks using pretrained language models. Given pre-trained teacher and student language models, 
fine-tuned teacher and student models are obtained using Low-Rank Adaptation (LoRA) \citep{hu2022lora}.
For the teacher and student pre-trained language models,  "\textit{RoBERTa}" \citep{liu2019roberta} and "\textit{DistilRoBERTa}" \cite{sanh2019distilbert} \footnote{\url{https://www.huggingface.co/distilroberta-base}} are used. As a teacher ensemble,
we obtain four fine-tuned models by combining LoRA and \textit{RoBERTa} with randomly selected initializations for each task. Then, an ensemble of fine-tuned student language models is constructed by applying LoRA to \textit{DistilRoBERTa} for each
distillation method.

\paragraph{Datasets}
We analyze three GLUE \cite{wang2018glue} and SuperGLUE \cite{wang2019superglue} sub-tasks: RTE, MRPC, and WiC. All three datasets are binary classification tasks.
Implementation details of the distillation methods are given in Appendix \ref{fine-tuning of language models}.

\paragraph{Results}
As shown in Table \ref{language model_result}, 
Gaussian distillation outperforms Hydra and LBE with large margins.
We conjecture that the performance gap of Gaussian distillation to Hydra and LBE
would become larger when the complexity gap between teacher and student models becomes larger.
This is a reasonable conjecture since smaller models could preserve less variations in teacher models.
Gaussian distillation would add additional variations to the student models 
through variations of the latent vector $\boldsymbol{Z}.$

\begin{table}[]
\centering
\caption{Results on GLUE and SuperGLUE benchmark datasets}
\label{language model_result}
\resizebox{0.65\textwidth}{!}{%
\begin{tabular}{c|l|ccc}
\toprule
dataset    & method                         & Acc (\%) $\uparrow$            & NLL $\downarrow$                  & ECE (\%) $\downarrow$            \\
\midrule
\multirow{5}{*}{RTE}
 & {\color[HTML]{00B0F0} Teachers} & {\color[HTML]{00B0F0} 75.09} & {\color[HTML]{00B0F0} 0.8401}  & {\color[HTML]{00B0F0} 17.70}   \\
 & small-Ens                     & 67.15 (0.0057)      & 0.6739 (0.0124)      & 13.09 (0.0148)      \\
 & Hydra                         & 62.82 (0.1650)       & 0.9034 (0.1733)      & 23.31 (0.4907)       \\
 & LBE                           & 65.97 (0.1406)       & 0.9235 (0.0664)      & 25.63 (0.1909)       \\
 & DLF                            & \textbf{67.06 (0.1040)} & \textbf{0.6658 (0.0762)} & \textbf{9.74 (0.5050)}  \\
\midrule
\multirow{5}{*}{MRPC}
 & {\color[HTML]{00B0F0} Teachers} & {\color[HTML]{00B0F0} 87.25} & {\color[HTML]{00B0F0} 0.3435}  & {\color[HTML]{00B0F0} 4.77}    \\
 & small-Ens                     & 83.092 (0.0172)      & 0.4596 (0.0396)      & \textbf{9.84 (0.0164)} \\
 & Hydra                         & 82.19 (0.0357)      & 0.6429 (0.1274)      & 11.79 (0.0136)      \\
 & LBE                           & 82.23 (0.0153)      & 0.6527 (0.0544)      & 13.6 (0.0114)       \\
 & DLF                            & \textbf{83.094 (0.0035)} & \textbf{0.4526 (0.0113)} & 10.72 (0.0392)      \\
\midrule
\multirow{5}{*}{WiC}
 & {\color[HTML]{00B0F0} Teachers} & {\color[HTML]{00B0F0} 68.03} & {\color[HTML]{00B0F0} 0.6395}  & {\color[HTML]{00B0F0} 9.14}    \\
 & small-Ens                     & 65.02 (0.0062)      & \textbf{0.7674 (0.0206)} & 16.69 (0.0116)      \\
 & Hydra                         & 65.05 (0.0210)       & 1.1628 (0.0355)      & 26.26 (0.0191)      \\
 & LBE                           & 65.36 (0.1209)       & 0.8809 (0.0950)       & 20.28 (0.0300)       \\
 & DLF                            & \textbf{66.18 (0.0146)} & 0.7706 (0.0543)      & \textbf{15.99 (0.0198)} \\
\bottomrule
\end{tabular}%
}
\end{table}

\subsection{Application to distribution shift problems}\label{Application to distribution shift problems}

We compare DLF with two baselines which fine-tune only the head on new data while the body is learned by
either (1) a standard DNN or (2) applying Hydra on training data of CIFAR-10.
For distribution-shifted new data, we swap the labels of CIFAR-10 as is done by \citep{lee2022surgical}.
See implementation details in Appendix \ref{Application to distribution shift}.
The results are given in Figure \ref{fig:map_results} which amply show that DLF is superior. 
It is interesting to see that DLF outperforms even when the sample size of the new data is large,
which implies that the learned body by DLF is qualitatively different from those by DNN and Hydra.
We do not know the reason but an implication is that Gaussian distillation
is good at learning not only quantifying uncertainty but also learning the feature vector (i.e. the body).

\begin{figure}[h]
    \centering
    \includegraphics[width=\textwidth]{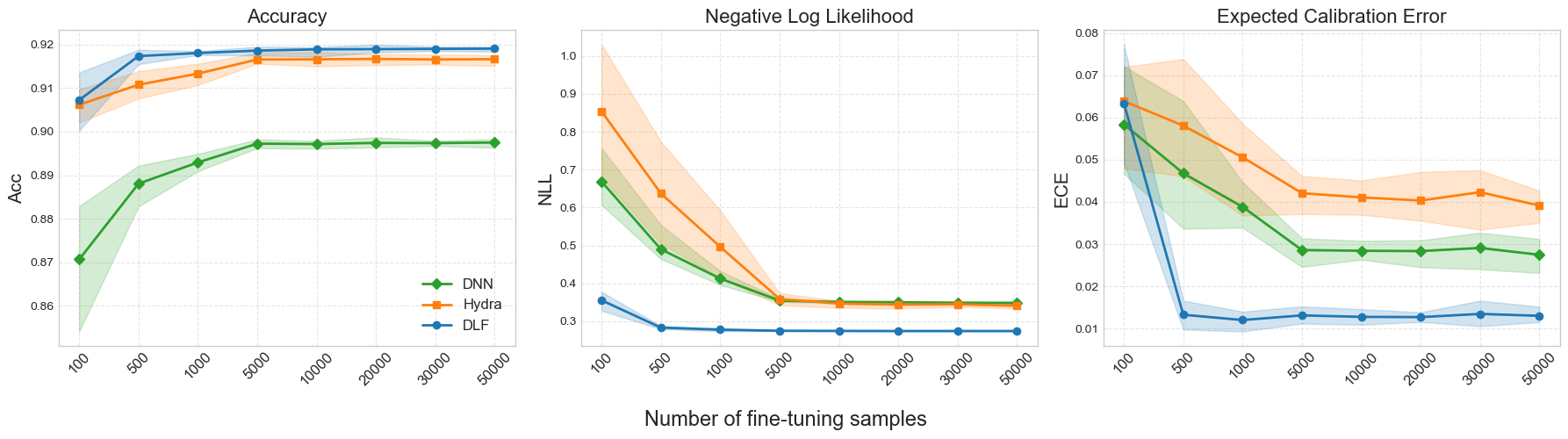}
    \caption{Comparison of performances of the three learning algorithms (DNN, Hydra and DLF) pretrained on CIFAR-10
    and fine-tuned on CIFAR10-Flip as the sample size of CIFAR 10-Flip data varies. 
    The solid curves are the means and the shaded bands are the min-max spreads obtained from 5 training models on 5 randomly selected new data of CIFAR10-Flip.}
    \label{fig:map_results}
\end{figure}

\subsection{Ablation studies}\label{Ablation studies main}
In Appendix \ref{ablation study}, we present the results of ablation studies including the sensitivity of Gaussian distillation
to the choice of design points, the dimension of latent factor, the architecture size of student DNNs and
the number of ensemble members used in a teacher and student ensemble.
In addition, we compare the results when the initial solution is randomly selected in Gaussian distillation.

\section{Conclusion} \label{section: conclusion}
We proposed a novel method for distilling deep ensembles, specifically addressing the challenges associated with computational costs, inference time, and storage capacities inherent in traditional deep ensemble approaches. The key innovation lies in modeling the covariance structure of deep ensembles through the DLF model, 
enabling efficient preservation of uncertainty in a teacher ensemble with significantly reduced inference costs. 

There are several future research topics. 
First, in this paper, we only focused on deep ensembles. 
It would be valuable to consider Bayesian DNNs, as they provide a framework for uncertainty quantification \citep{izmailov2021bayesian, sharma2023bayesian, kong2023masked} and can potentially serve as a prior for on-device posterior updates.
Second, for distillation of a fine-tuned language model, we used \textit{DistilRoBERTa} \cite{sanh2019distilbert}, a pretrained distilled language model. 
It would be promising to distill the pretrained language model and the model for LoRA simultaneously.
Third, the DLF could be used for online Bayesian learning by approximating the posterior with respect to old data by the DLF and using it for the prior of new data. We will pursue this idea in a near future.

\section*{Acknowledgements}
This work was partly supported by the National Research Foundation of Korea(NRF) grant funded by the Korea government(MSIT) (No. 2022R1A5A7083908), the National Research Foundation of Korea(NRF) grant funded by the Korea government(MSIT) (RS-2025-00556079), and by Institute of Information \& communications Technology Planning \& Evaluation (IITP) grant funded by the Korea government(MSIT) [NO.RS-2021-II211343, Artificial Intelligence Graduate School Program (Seoul National University)].



\bibliographystyle{unsrt}
\bibliography{bibliography}


\newpage
\section*{NeurIPS Paper Checklist}

\begin{enumerate}

\item {\bf Claims}
    \item[] Question: Do the main claims made in the abstract and introduction accurately reflect the paper's contributions and scope?
    \item[] Answer: \answerYes{} 
    \item[] Justification: In Introduction, we provide the method of this paper and its key contributions. Abstract also briefly mentions these points.
    \item[] Guidelines:
    \begin{itemize}
        \item The answer NA means that the abstract and introduction do not include the claims made in the paper.
        \item The abstract and/or introduction should clearly state the claims made, including the contributions made in the paper and important assumptions and limitations. A No or NA answer to this question will not be perceived well by the reviewers. 
        \item The claims made should match theoretical and experimental results, and reflect how much the results can be expected to generalize to other settings. 
        \item It is fine to include aspirational goals as motivation as long as it is clear that these goals are not attained by the paper. 
    \end{itemize}

\item {\bf Limitations}
    \item[] Question: Does the paper discuss the limitations of the work performed by the authors?
    \item[] Answer: \answerYes{} 
    \item[] Justification: We discuss a potential limitation of our work in Section \ref{section: conclusion}.
    \item[] Guidelines:
    \begin{itemize}
        \item The answer NA means that the paper has no limitation while the answer No means that the paper has limitations, but those are not discussed in the paper. 
        \item The authors are encouraged to create a separate "Limitations" section in their paper.
        \item The paper should point out any strong assumptions and how robust the results are to violations of these assumptions (e.g., independence assumptions, noiseless settings, model well-specification, asymptotic approximations only holding locally). The authors should reflect on how these assumptions might be violated in practice and what the implications would be.
        \item The authors should reflect on the scope of the claims made, e.g., if the approach was only tested on a few datasets or with a few runs. In general, empirical results often depend on implicit assumptions, which should be articulated.
        \item The authors should reflect on the factors that influence the performance of the approach. For example, a facial recognition algorithm may perform poorly when image resolution is low or images are taken in low lighting. Or a speech-to-text system might not be used reliably to provide closed captions for online lectures because it fails to handle technical jargon.
        \item The authors should discuss the computational efficiency of the proposed algorithms and how they scale with dataset size.
        \item If applicable, the authors should discuss possible limitations of their approach to address problems of privacy and fairness.
        \item While the authors might fear that complete honesty about limitations might be used by reviewers as grounds for rejection, a worse outcome might be that reviewers discover limitations that aren't acknowledged in the paper. The authors should use their best judgment and recognize that individual actions in favor of transparency play an important role in developing norms that preserve the integrity of the community. Reviewers will be specifically instructed to not penalize honesty concerning limitations.
    \end{itemize}

\item {\bf Theory Assumptions and Proofs}
    \item[] Question: For each theoretical result, does the paper provide the full set of assumptions and a complete (and correct) proof?
    \item[] Answer: \answerYes{} 
    \item[] Justification: In Appendix \ref{sec4}, we provide the full set of assumption and a complete proof of the main theorem in this paper. Also, for the theoretical claims in Section \ref{choice of design points}, we provide complete mathematical proofs in Appendix \ref{Theorical Results}.
    \item[] Guidelines:
    \begin{itemize}
        \item The answer NA means that the paper does not include theoretical results. 
        \item All the theorems, formulas, and proofs in the paper should be numbered and cross-referenced.
        \item All assumptions should be clearly stated or referenced in the statement of any theorems.
        \item The proofs can either appear in the main paper or the supplemental material, but if they appear in the supplemental material, the authors are encouraged to provide a short proof sketch to provide intuition. 
        \item Inversely, any informal proof provided in the core of the paper should be complemented by formal proofs provided in appendix or supplemental material.
        \item Theorems and Lemmas that the proof relies upon should be properly referenced. 
    \end{itemize}

    \item {\bf Experimental Result Reproducibility}
    \item[] Question: Does the paper fully disclose all the information needed to reproduce the main experimental results of the paper to the extent that it affects the main claims and/or conclusions of the paper (regardless of whether the code and data are provided or not)?
    \item[] Answer: \answerYes{} 
    \item[] Justification: In Appendix \ref{experimental details}, we provide details of our experimental setups.
    \item[] Guidelines: 
    \begin{itemize}
        \item The answer NA means that the paper does not include experiments.
        \item If the paper includes experiments, a No answer to this question will not be perceived well by the reviewers: Making the paper reproducible is important, regardless of whether the code and data are provided or not.
        \item If the contribution is a dataset and/or model, the authors should describe the steps taken to make their results reproducible or verifiable. 
        \item Depending on the contribution, reproducibility can be accomplished in various ways. For example, if the contribution is a novel architecture, describing the architecture fully might suffice, or if the contribution is a specific model and empirical evaluation, it may be necessary to either make it possible for others to replicate the model with the same dataset, or provide access to the model. In general. releasing code and data is often one good way to accomplish this, but reproducibility can also be provided via detailed instructions for how to replicate the results, access to a hosted model (e.g., in the case of a large language model), releasing of a model checkpoint, or other means that are appropriate to the research performed.
        \item While NeurIPS does not require releasing code, the conference does require all submissions to provide some reasonable avenue for reproducibility, which may depend on the nature of the contribution. For example
        \begin{enumerate}
            \item If the contribution is primarily a new algorithm, the paper should make it clear how to reproduce that algorithm.
            \item If the contribution is primarily a new model architecture, the paper should describe the architecture clearly and fully.
            \item If the contribution is a new model (e.g., a large language model), then there should either be a way to access this model for reproducing the results or a way to reproduce the model (e.g., with an open-source dataset or instructions for how to construct the dataset).
            \item We recognize that reproducibility may be tricky in some cases, in which case authors are welcome to describe the particular way they provide for reproducibility. In the case of closed-source models, it may be that access to the model is limited in some way (e.g., to registered users), but it should be possible for other researchers to have some path to reproducing or verifying the results.
        \end{enumerate}
    \end{itemize}

\item {\bf Open access to data and code}
    \item[] Question: Does the paper provide open access to the data and code, with sufficient instructions to faithfully reproduce the main experimental results, as described in supplemental material?
    \item[] Answer: \answerYes{} 
    \item[] Justification: We provide the source codes of our proposed algorithm in the supplementary material. Furthermore, we will publicly upload them after acceptance.
    \item[] Guidelines:
    \begin{itemize}
        \item The answer NA means that paper does not include experiments requiring code.
        \item Please see the NeurIPS code and data submission guidelines (\url{https://nips.cc/public/guides/CodeSubmissionPolicy}) for more details.
        \item While we encourage the release of code and data, we understand that this might not be possible, so “No” is an acceptable answer. Papers cannot be rejected simply for not including code, unless this is central to the contribution (e.g., for a new open-source benchmark).
        \item The instructions should contain the exact command and environment needed to run to reproduce the results. See the NeurIPS code and data submission guidelines (\url{https://nips.cc/public/guides/CodeSubmissionPolicy}) for more details.
        \item The authors should provide instructions on data access and preparation, including how to access the raw data, preprocessed data, intermediate data, and generated data, etc.
        \item The authors should provide scripts to reproduce all experimental results for the new proposed method and baselines. If only a subset of experiments are reproducible, they should state which ones are omitted from the script and why.
        \item At submission time, to preserve anonymity, the authors should release anonymized versions (if applicable).
        \item Providing as much information as possible in supplemental material (appended to the paper) is recommended, but including URLs to data and code is permitted.
    \end{itemize}

\item {\bf Experimental Setting/Details}
    \item[] Question: Does the paper specify all the training and test details (e.g., data splits, hyperparameters, how they were chosen, type of optimizer, etc.) necessary to understand the results?
    \item[] Answer: \answerYes{} 
    \item[] Justification: In Appendix \ref{experimental details}, we provide details of our experimental setups.
    \item[] Guidelines:
    \begin{itemize}
        \item The answer NA means that the paper does not include experiments.
        \item The experimental setting should be presented in the core of the paper to a level of detail that is necessary to appreciate the results and make sense of them.
        \item The full details can be provided either with the code, in appendix, or as supplemental material.
    \end{itemize}

\item {\bf Experiment Statistical Significance}
    \item[] Question: Does the paper report error bars suitably and correctly defined or other appropriate information about the statistical significance of the experiments?
    \item[] Answer: \answerYes{} 
    \item[] Justification: In Section \ref{experiments}, we report the mean and standard deviation from multiple runs. And in Section \ref{Application to distribution shift problems} and Appendix \ref{ablation study}, we also plot the max–min area in figures.
    \item[] Guidelines:
    \begin{itemize}
        \item The answer NA means that the paper does not include experiments.
        \item The authors should answer "Yes" if the results are accompanied by error bars, confidence intervals, or statistical significance tests, at least for the experiments that support the main claims of the paper.
        \item The factors of variability that the error bars are capturing should be clearly stated (for example, train/test split, initialization, random drawing of some parameter, or overall run with given experimental conditions).
        \item The method for calculating the error bars should be explained (closed form formula, call to a library function, bootstrap, etc.)
        \item The assumptions made should be given (e.g., Normally distributed errors).
        \item It should be clear whether the error bar is the standard deviation or the standard error of the mean.
        \item It is OK to report 1-sigma error bars, but one should state it. The authors should preferably report a 2-sigma error bar than state that they have a 96\% CI, if the hypothesis of Normality of errors is not verified.
        \item For asymmetric distributions, the authors should be careful not to show in tables or figures symmetric error bars that would yield results that are out of range (e.g. negative error rates).
        \item If error bars are reported in tables or plots, The authors should explain in the text how they were calculated and reference the corresponding figures or tables in the text.
    \end{itemize}

\item {\bf Experiments Compute Resources}
    \item[] Question: For each experiment, does the paper provide sufficient information on the computer resources (type of compute workers, memory, time of execution) needed to reproduce the experiments?
    \item[] Answer: \answerYes{} 
    \item[] Justification: In Appendix \ref{experimental details}, we report the number of parameters for each regression and classification case. And we provide the used computing resources in Appendix \ref{appendix: resource}
    \item[] Guidelines:
    \begin{itemize}
        \item The answer NA means that the paper does not include experiments.
        \item The paper should indicate the type of compute workers CPU or GPU, internal cluster, or cloud provider, including relevant memory and storage.
        \item The paper should provide the amount of compute required for each of the individual experimental runs as well as estimate the total compute. 
        \item The paper should disclose whether the full research project required more compute than the experiments reported in the paper (e.g., preliminary or failed experiments that didn't make it into the paper). 
    \end{itemize}
    
\item {\bf Code Of Ethics}
    \item[] Question: Does the research conducted in the paper conform, in every respect, with the NeurIPS Code of Ethics \url{https://neurips.cc/public/EthicsGuidelines}?
    \item[] Answer: \answerYes{} 
    \item[] Justification: We thoroughly and carefully checked the Code of Ethics.
    \item[] Guidelines:
    \begin{itemize}
        \item The answer NA means that the authors have not reviewed the NeurIPS Code of Ethics.
        \item If the authors answer No, they should explain the special circumstances that require a deviation from the Code of Ethics.
        \item The authors should make sure to preserve anonymity (e.g., if there is a special consideration due to laws or regulations in their jurisdiction).
    \end{itemize}

\item {\bf Broader Impacts}
    \item[] Question: Does the paper discuss both potential positive societal impacts and negative societal impacts of the work performed?
    \item[] Answer: \answerYes{} 
    \item[] Justification: We give several societal broader impacts of our work in Section \ref{section: conclusion}.
    \item[] Guidelines:
    \begin{itemize}
        \item The answer NA means that there is no societal impact of the work performed.
        \item If the authors answer NA or No, they should explain why their work has no societal impact or why the paper does not address societal impact.
        \item Examples of negative societal impacts include potential malicious or unintended uses (e.g., disinformation, generating fake profiles, surveillance), fairness considerations (e.g., deployment of technologies that could make decisions that unfairly impact specific groups), privacy considerations, and security considerations.
        \item The conference expects that many papers will be foundational research and not tied to particular applications, let alone deployments. However, if there is a direct path to any negative applications, the authors should point it out. For example, it is legitimate to point out that an improvement in the quality of generative models could be used to generate deepfakes for disinformation. On the other hand, it is not needed to point out that a generic algorithm for optimizing neural networks could enable people to train models that generate Deepfakes faster.
        \item The authors should consider possible harms that could arise when the technology is being used as intended and functioning correctly, harms that could arise when the technology is being used as intended but gives incorrect results, and harms following from (intentional or unintentional) misuse of the technology.
        \item If there are negative societal impacts, the authors could also discuss possible mitigation strategies (e.g., gated release of models, providing defenses in addition to attacks, mechanisms for monitoring misuse, mechanisms to monitor how a system learns from feedback over time, improving the efficiency and accessibility of ML).
    \end{itemize}
    
\item {\bf Safeguards}
    \item[] Question: Does the paper describe safeguards that have been put in place for responsible release of data or models that have a high risk for misuse (e.g., pretrained language models, image generators, or scraped datasets)?
    \item[] Answer: \answerYes{} 
    \item[] Justification: We give several societal broader impacts of our work in Section \ref{section: conclusion}.
    \item[] Guidelines:
    \begin{itemize}
        \item The answer NA means that the paper poses no such risks.
        \item Released models that have a high risk for misuse or dual-use should be released with necessary safeguards to allow for controlled use of the model, for example by requiring that users adhere to usage guidelines or restrictions to access the model or implementing safety filters. 
        \item Datasets that have been scraped from the Internet could pose safety risks. The authors should describe how they avoided releasing unsafe images.
        \item We recognize that providing effective safeguards is challenging, and many papers do not require this, but we encourage authors to take this into account and make a best faith effort.
    \end{itemize}

\item {\bf Licenses for existing assets}
    \item[] Question: Are the creators or original owners of assets (e.g., code, data, models), used in the paper, properly credited and are the license and terms of use explicitly mentioned and properly respected?
    \item[] Answer: \answerYes{} 
    \item[] Justification: We mention all the owners/references/urls of methods/codes/datasets used in Section \ref{experiments} and Appendix \ref{experimental details}.
    \item[] Guidelines:
    \begin{itemize}
        \item The answer NA means that the paper does not use existing assets.
        \item The authors should cite the original paper that produced the code package or dataset.
        \item The authors should state which version of the asset is used and, if possible, include a URL.
        \item The name of the license (e.g., CC-BY 4.0) should be included for each asset.
        \item For scraped data from a particular source (e.g., website), the copyright and terms of service of that source should be provided.
        \item If assets are released, the license, copyright information, and terms of use in the package should be provided. For popular datasets, \url{paperswithcode.com/datasets} has curated licenses for some datasets. Their licensing guide can help determine the license of a dataset.
        \item For existing datasets that are re-packaged, both the original license and the license of the derived asset (if it has changed) should be provided.
        \item If this information is not available online, the authors are encouraged to reach out to the asset's creators.
    \end{itemize}

\item {\bf New Assets}
    \item[] Question: Are new assets introduced in the paper well documented and is the documentation provided alongside the assets?
    \item[] Answer: \answerYes{} 
    \item[] Justification: We provide the source codes of our proposed algorithm in the supplementary material. Furthermore, we will publicly upload them after acceptance.
    \item[] Guidelines:
    \begin{itemize}
        \item The answer NA means that the paper does not release new assets.
        \item Researchers should communicate the details of the dataset/code/model as part of their submissions via structured templates. This includes details about training, license, limitations, etc. 
        \item The paper should discuss whether and how consent was obtained from people whose asset is used.
        \item At submission time, remember to anonymize your assets (if applicable). You can either create an anonymized URL or include an anonymized zip file.
    \end{itemize}

\item {\bf Crowdsourcing and Research with Human Subjects}
    \item[] Question: For crowdsourcing experiments and research with human subjects, does the paper include the full text of instructions given to participants and screenshots, if applicable, as well as details about compensation (if any)? 
    \item[] Answer: \answerNA{} 
    \item[] Justification: This study does not involve crowdsourcing nor research with human subjects.
    \item[] Guidelines:
    \begin{itemize}
        \item The answer NA means that the paper does not involve crowdsourcing nor research with human subjects.
        \item Including this information in the supplemental material is fine, but if the main contribution of the paper involves human subjects, then as much detail as possible should be included in the main paper. 
        \item According to the NeurIPS Code of Ethics, workers involved in data collection, curation, or other labor should be paid at least the minimum wage in the country of the data collector. 
    \end{itemize}

\item {\bf Institutional Review Board (IRB) Approvals or Equivalent for Research with Human Subjects}
    \item[] Question: Does the paper describe potential risks incurred by study participants, whether such risks were disclosed to the subjects, and whether Institutional Review Board (IRB) approvals (or an equivalent approval/review based on the requirements of your country or institution) were obtained?
    \item[] Answer: \answerNA{} 
    \item[] Justification: This study does not involve crowdsourcing nor research with human subjects.
    \item[] Guidelines: 
    \begin{itemize}
        \item The answer NA means that the paper does not involve crowdsourcing nor research with human subjects.
        \item Depending on the country in which research is conducted, IRB approval (or equivalent) may be required for any human subjects research. If you obtained IRB approval, you should clearly state this in the paper. 
        \item We recognize that the procedures for this may vary significantly between institutions and locations, and we expect authors to adhere to the NeurIPS Code of Ethics and the guidelines for their institution. 
        \item For initial submissions, do not include any information that would break anonymity (if applicable), such as the institution conducting the review.
    \end{itemize}

\end{enumerate}


\newpage
\appendix
\begin{center}
    \textsc{\LARGE Appendix}
\end{center}

\section{Review of Ensemble Distillation}

\subsection{One-to-one Distillation} \label{appendix:one-to-one distillation}

For given teacher members \(p^{(t)}_i, i=1,\dots,n\), student \(p^{(s)}_i, i=1,\dots,n\) are trained to minimize
\begin{equation*}
   \sum_{i=1}^n \mathbb{E}_{\boldsymbol{x}} \left[  \operatorname{KL} \left(p_{i}^{(t)}(y|\boldsymbol{x}) || p_{i}^{(s)}(y|\boldsymbol{x}) \right) \right]
\end{equation*}
where $\mathbb{E}_{\boldsymbol{x}}$ is the expectation operator of a certain probability distribution
on the input space $\mathcal{X}.$ 
The following two methods use specially designed DNNs for student models which share the weights
between student models.

\subsubsection{Hydra}
Hydra \citep{tran2020hydra} employs a multi-head architecture in which a single shared body network extracts common features and a set of $n$ distinct linear heads generates predictions for each ensemble member.
Formally, the student DNN is parameterized by 
\begin{equation*}
    \theta_{\mathrm{hydra}} = \{\theta_{\mathrm{body}},\,\theta_{\mathrm{head},1},\dots,\theta_{\mathrm{head},n}\},
\end{equation*}
where $\theta_{\mathrm{body}}$ is used across all heads.
At inference, the body is evaluated once, and each head $h_{\theta_{\mathrm{head}, i}}(\cdot)$ produces a member-specific output $p^{(s)}_i(y\mid \boldsymbol x) = h_{\theta_{\mathrm{head}, i}}\bigl(b_{\theta_{\mathrm{body}}}(\boldsymbol x)\bigr)$. 
This design captures ensemble diversity while reducing both computation and memory compared to maintaining $n$ independent networks.

\subsubsection{Batch Ensemble and Latent Batch Ensemble}
\citep{wen2020batchensemble} introduces Batch Ensemble (BE) to reduce the memory and computational burden of deep ensembles.
In this architecture, all student networks share a core weight matrix $\boldsymbol W^l$ at each layer, while individual ensemble members are differentiated by rank-one perturbations.
Specifically, for the $i$-th student at layer $l$,
\begin{equation*}
    \boldsymbol W^l_i = \boldsymbol W^l \circ \bigl(\boldsymbol r^l_i\,{\boldsymbol s^l_i}^\top\bigr),
\end{equation*}
where $\boldsymbol r^l_i\in\mathbb R^{d_{\mathrm{out}}}$ and $\boldsymbol s^l_i\in\mathbb R^{d_{\mathrm{in}}}$ modulate the rows and columns of $\boldsymbol W^l$, respectively. 
This construction enables each sub-network to maintain member-specific behaviors while reusing the majority of parameters, yielding significant savings in both storage and inference costs compared to training $n$ independent models. 

Building on BE, \citep{nam2022improving} proposes Latent Batch Ensemble (LBE), which further compresses the ensemble at inference time. 
Instead of maintaining $n$ distinct perturbations, LBE computes the average rank-one mask across all students:
\begin{equation*}
	\boldsymbol{W}^{l}_{s} = \boldsymbol{W}^{l}\circ \left(\frac{1}{n}\sum_{i=1}^n\boldsymbol{r}^{l}_{i} {\boldsymbol{s}^{l}_{i}}^{\top}\right).
\end{equation*} 
The resulting single student network requires only one forward pass per input while capturing the ensemble’s mean perturbation in weight space. 
Empirical results demonstrate that LBE matches or exceeds the calibration performance of standard BE, with inference cost reduced by a factor of $n$.
However, the proposed Latent Batch Ensemble is a specialized method designed for ensemble distillation for classification problems.

\subsection{Distribution Distillation} \label{appendix:distribution distillation}
Distribution distillation frames the output of an ensemble at the input $\boldsymbol{x}$ as samples from an input‐dependent Dirichlet distribution \citep{malinin2019ensemble, ryabinin2021scaling}. Let $\boldsymbol{f}_i, i=1,\ldots,n$ be given teacher models for a classification
task. For a given $\boldsymbol{x},$ let $\boldsymbol{\pi}_i(\boldsymbol{x})=\mathrm{softmax}\left( \boldsymbol{f}_i(\boldsymbol{x})\right),$  and we assume
that $\boldsymbol{\pi}_1(\boldsymbol{x}),\ldots,\boldsymbol{\pi}_n(\boldsymbol{x})$ independently follow
the Dirichlet distribution with parameter $\boldsymbol\alpha(\boldsymbol x)
=(\alpha_1(\boldsymbol x),\ldots, \alpha_c(\boldsymbol x)).$ Then, we model
$\boldsymbol\alpha(\boldsymbol x)$ by a student DNN $\boldsymbol{\Psi}(\boldsymbol x |\theta)$
and estimate $\theta$ by maximizing
\begin{equation*}
    \mathcal{L}(\theta, \mathcal{D}^{\mathrm{design}})
    = \sum_{j=1}^m \sum_{i=1}^n \bigl[\ln p(\boldsymbol \pi_i(\boldsymbol{x}_j^{(d)})| {\boldsymbol\Psi(
    \boldsymbol{x}_j^{(d)} |\theta)})\bigr],
\end{equation*}
where $p(\boldsymbol{\pi}|\boldsymbol{\alpha})$ is the density of the Dirichlet distribution with parameter
$\boldsymbol{\alpha}.$ See \citep{malinin2019ensemble, ryabinin2021scaling} for details.

\subsubsection{Proxy-Dirichlet Distillation}

\citep{ryabinin2021scaling} proposed Proxy–Dirichlet Distillation ($\text{Proxy-EnD}^2$) to mitigate the convergence difficulties of standard Dirichlet distillation \citep{malinin2019ensemble} when the number of classes is very high. The method constructs a proxy Dirichlet distribution from the teacher models and trains the student DNN $\boldsymbol{\Psi}(\boldsymbol x |\theta)$ by minimizing the reverse KL divergence between $p(\boldsymbol{\pi}|\boldsymbol{\Psi}(\boldsymbol x |\theta))$ and the proxy distribution.

The proxy distribution is obtained from teacher models as
\begin{equation*}
\hat{\beta}_l(\boldsymbol {x}) = \frac{1}{n} \sum_{i=1}^{n} \pi_{il}(\boldsymbol{x}),
\end{equation*}
\begin{equation*}
\hat{\alpha}_{l}(\boldsymbol {x})
=\frac{\hat{\beta}_{l}(\boldsymbol {x})\,(c-1)}
{2 \sum_{l=1}^{c} \hat{\beta}_{l}(\boldsymbol {x})\!\left( \ln \hat{\beta}_{l}(\boldsymbol {x}) - \frac{1}{n} \sum_{i=1}^{n} \ln \pi_{il}(\boldsymbol {x}) \right)} , \quad l=1,2,..\cdots,c.
\end{equation*}
Then, estimate $\theta$ by minimizing
\begin{equation*}
\sum_{j=1}^{m}
\operatorname{KL}\!\left(
  p\!\left(\boldsymbol{\pi}\mid
    \exp\bigl(\boldsymbol{\Psi}(\boldsymbol{x}^{(d)}_{j}\mid\theta)\bigr)
  \right)
  \;\middle\|\;
  p\!\left(\boldsymbol{\pi}\mid \hat{\boldsymbol{\alpha}}(\boldsymbol{x}^{(d)}_{j})+1\right)
\right),
\end{equation*}
where $\hat{\boldsymbol{\alpha}}(\cdot) = \left[\hat{\alpha}_1(\cdot), \dots, \hat{\alpha}_c(\cdot) \right].$

\subsubsection{Ensemble Distribution Distillation via Flow Matching}

Ensemble Distribution Distillation via Flow Matching (EDFM) \citep{parkensemble} models a conditional distribution of logits $\boldsymbol{h}_{1} $ given $\boldsymbol{x}$ over $\mathbb{R}^{c}$. The method constructs the target distribution $p_1(\boldsymbol{h} | \boldsymbol{x})$ as an empirical distribution using $\boldsymbol{f}_i(\boldsymbol{x}).$


They model using Flow matching such that
\begin{equation*}
    d \boldsymbol{h}_{t}(\boldsymbol{x}) = \boldsymbol{\Psi}_{\theta}(t, \boldsymbol{h}_{t}(\boldsymbol{x}), \boldsymbol{x}) dt ,\,\, t \in [0,1] ,\,\, \boldsymbol{h}_{0}(\boldsymbol{x}) \sim \mathcal{N}_{c}(0, \sigma^{2} \mathbb{I}_{c}) \text{ and } \boldsymbol{h} | \boldsymbol{x} \overset{d}{\equiv} \boldsymbol{h}_{1}(\boldsymbol{x})
\end{equation*}
where $\boldsymbol{\Psi}_{\theta}$ is student DNN. \\
Then, the network is trained by minimizing the conditional flow matching loss

\begin{equation*}
    \sum_{j=1}^{m} \mathbb{E}_{\boldsymbol{h}_{0} , \boldsymbol{h}_{1}(\boldsymbol{x}^{(d)}_{j}) , t} \left[ \lambda(t) \|  \boldsymbol{\Psi}_{\theta}(t, \boldsymbol{h}_{t}(\boldsymbol{x}^{(d)}_{j}), \boldsymbol{x}^{(d)}_{j}) - (\boldsymbol{h}_{1}(\boldsymbol{x}^{(d)}_{j}) - \boldsymbol{h}_{0})\|^{2}  \right]
\end{equation*}
where $\boldsymbol{h}_{0} \sim \mathcal{N}_{c}(0, \sigma^{2} \mathbb{I}_{c}), \boldsymbol{h}_{1}(\boldsymbol{x}^{(d)}_{j}) \sim p_{1}(\cdot | \boldsymbol{x}^{(d)}_{j}), t \sim U[0,1], \boldsymbol{h}_{t} (\boldsymbol{x}^{(d)}_{j}) =  t \boldsymbol{h}_{1}(\boldsymbol{x}^{(d)}_{j}) + (1-t) \boldsymbol{h}_{0}$ and $\lambda$ denotes a time dependent weighting function.\\
Sampling proceeds by numerically integrating the learned flow $\boldsymbol{\Psi}_{\theta}(t, \boldsymbol{h}_{t}(\boldsymbol{x}), \boldsymbol{x})$ with a standard ordinary differential equation (ODE) solver. 
For more detailed information, see \citep{parkensemble}.

\section{Details of Gaussian distillation}\label{Appendix:Estimation of the mean and factor loading}

\subsection{EM algorithm for the univariate DLF model}\label{Appendix:EM algorithm for the univariate DLF model}

The main idea of Gaussian distillation is to estimate
the mean and factor loading functions based on given teacher models assuming
that teacher models are independent realizations of the DLF and 
estimate the parameter in the student DNNs  by maximizing the corresponding log-likelihood.
For optimization, we use the EM algorithm \citep{rubin1982algorithms}.

Suppose that $n$ many teacher models $f_1(\cdot),\ldots, f_n(\cdot)$ are given.
Gaussian distillation consists of three steps.
The first step is to choose $m$-many design points $\mathcal{D}^{\mathrm{design}} = \{\boldsymbol{x}_{1}^{(d)}, \ldots, \boldsymbol{x}_{m}^{(d)}\}$.
The second step is to calculate the vectors of prediction values of each teacher model at the design points to have $\boldsymbol{f}_i=\big(f_i(\boldsymbol{x}_j^{(d)}), j=1,\ldots,m\big)^\top$ for $i=1,\ldots,n.$
The final step is to estimate the parameter $\theta$ 
in the DLF by the MLE assuming that
$f_1(\cdot),\ldots, f_n(\cdot)$ are independent realizations of a random function following the DLF model.
Since $\boldsymbol{f}_i$s are independent Gaussian random vectors, the MLE can be obtained by use of the EM algorithm as follows.

To make the EM algorithm numerically stable, we consider the noisy DLF model which assumes that $\boldsymbol{f}_i=\boldsymbol{\tilde{f}}_i+\boldsymbol{v}_i,$ where $\boldsymbol{v}_i \sim \mathcal{N}_m(\mathbf{0}, \sigma_f^2 \mathbb{I}_m)$ and $\boldsymbol{\tilde{f}}_i=(\tilde{f}_i(\boldsymbol{x}_1),\ldots,\tilde{f}_i(\boldsymbol{x}_m))^\top$ with $\tilde{f}_i(\cdot)$s following the DLF model. 
Specifically, each $\tilde{f}_i$ is expressed as $\tilde{f}_i(\cdot) = \mu_{\theta}(\cdot) + \Phi_{\theta}(\cdot)^\top \boldsymbol{z}_i,$ where $\boldsymbol{z}_i \sim \mathcal{N}_q(\mathbf{0}, \mathbb{I}_q)$ denotes the latent factor corresponding to the $i$-th function realization.
Then, we obtain the MLE of the parameter $\theta$ in the mean and factor loading functions as well as $\sigma_f^2.$
We abuse the notation to write $\theta=(\theta,\sigma_f^2)$ unless there is any confusion.

The complete log-likelihood is given as
\begin{equation*}
\begin{split}    
    \ell^{com}(\theta | \boldsymbol{f}_{1:n}, \boldsymbol{z}_{1:n}) 
    =&-\frac{nm}{2} \log(2 \pi \sigma_{f}^{2}) - \frac{nq}{2} \log(2 \pi)  - \frac{ \sum_{i=1}^{n}\boldsymbol{z}_{i}^{\top} \boldsymbol{z}_{i}}{2} \\
    & - \frac{ \sum_{i=1}^{n} (\boldsymbol{f}_i - \boldsymbol{\mu}_{\theta} - \boldsymbol{\Phi}_{\theta} \boldsymbol{z}_{i}    )^\top (\boldsymbol{f}_i - \boldsymbol{\mu}_{\theta} - \boldsymbol{\Phi}_{\theta} \boldsymbol{z}_{i}  ) }{2 \sigma_{f}^{2}},
\end{split}
\end{equation*}
where $\boldsymbol{f}_{1:n}=\{\boldsymbol{f}_1,\ldots,\boldsymbol{f}_n\}, 
\boldsymbol{z}_{1:n}=\{\boldsymbol{z}_1,\ldots,\boldsymbol{z}_n\}, \boldsymbol{\mu}_{\theta}=
(\mu_{\theta}(\boldsymbol{x}_1^{(d)}),\ldots, \mu_{\theta}(\boldsymbol{x}_m^{(d)}))^\top$
and $\boldsymbol{\Phi}_{\theta} = (\Phi_{\theta}(\boldsymbol{x}_1^{(d)}),\ldots,\Phi_{\theta}(\boldsymbol{x}^{(d)}_m))^\top$ is an $m \times q$ matrix.

Thus, for a given parameter $\theta^{(t-1)}$  at time $t-1,$
the E-step is to calculate the conditional expectation of the complete log-likelihood which is given as
\begin{equation*}
\begin{split}
    Q(\theta|\theta^{(t-1)})=&\mathbb{E}_{\boldsymbol{z}_{1:n} \mid \boldsymbol{f}_{1:n}, \theta^{(t-1)}}[\ell^{com}(\theta | \boldsymbol{f}_{1:n}, \boldsymbol{z}_{1:n})] \\
    = &-\frac{nm}{2} \log(2 \pi {\sigma_{f}^{2}}) - \frac{nq}{2} \log(2 \pi) - \frac{ \sum_{i=1}^{n} \operatorname{tr} \mathbb{E}[\boldsymbol{z}_{i}\boldsymbol{z}_{i}^{\top}|\theta^{(t-1)},\boldsymbol{f}_{1:n}]}{2} \\
    & - \frac{1}{2{\sigma_{f}^{2}}}\sum_{i=1}^{n} \bigg\{\ (\boldsymbol{f}_i - \boldsymbol{\mu}_{\theta})^\top (\boldsymbol{f}_i - \boldsymbol{\mu}_{\theta}) \\
    & +2(\boldsymbol{f}_i - \boldsymbol{\mu}_{\theta})^\top {\boldsymbol{\Phi}_{\theta}} \mathbb{E}[\boldsymbol{z}_{i}|\theta^{(t-1)},\boldsymbol{f}_{1:n}] + \operatorname{tr} \left(\boldsymbol{\Phi}_{\theta}^\top \boldsymbol{\Phi}_{\theta} \mathbb{E}[\boldsymbol{z}_{i}\boldsymbol{z}_{i}^{\top}|\theta^{(t-1)},\boldsymbol{f}_{1:n}] \right) \bigg\}, 
\end{split}
\end{equation*}
where
\begin{equation*}
	\begin{split}
		\mathbb{E}[\boldsymbol{z}_{i}|\theta^{(t-1)},\boldsymbol{f}_{1:n}] & = \mathbb{V}[\boldsymbol{z}|\theta^{(t-1)}] \boldsymbol{\Phi}_{\theta^{(t-1)}}^{\top}  (\boldsymbol{f}_{i} - \boldsymbol{\mu}_{\theta^{(t-1)}}) /{\sigma_{f}^{2}}^{(t-1)} \\ 
		\mathbb{E}[\boldsymbol{z}_{i}\boldsymbol{z}_{i}^{\top}|\theta^{(t-1)},\boldsymbol{f}_{1:n}] & = \mathbb{V}[\boldsymbol{z}|\theta^{(t-1)}] + \mathbb{E}[\boldsymbol{z}_{i}|\theta^{(t-1)},\boldsymbol{f}_{1:n}]\mathbb{E}[\boldsymbol{z}_{i}|\theta^{(t-1)},\boldsymbol{f}_{1:n}]^{\top} \\    
	\end{split}    
\end{equation*}
with $\mathbb{V}[\boldsymbol{z}|\theta^{(t-1)}] = \left(\mathbb{I}_{q}+{\boldsymbol{\Phi}_{\theta^{(t-1)}}}^{\top}\boldsymbol{\Phi}_{\theta^{(t-1)}}/{{\sigma_{f}^{2}}^{(t-1)}}\right)^{-1}.$ 

In the M-step, we usually update $\theta^{(t)}$ by maximizing $Q(\theta|\theta^{(t-1)})$.
Instead of maximizing $Q(\theta|\theta^{(t-1)})$, we update $\theta$ by using a stochastic gradient descent algorithm (i.e.,
a gradient descent algorithm on a given mini-batches). The EM algorithm is summarized in Algorithm \ref{alg:1}.

\begin{algorithm}[H]
    \SetAlgoLined 
    \KwIn{Design points $\mathcal{D}^{\mathrm{design}} = \{\boldsymbol{x}_1^{(d)}, \dots, \boldsymbol{x}_m^{(d)}\}$, \\Teacher ensemble members $f_1(\cdot),\ldots, f_n(\cdot)$, learning rate $\eta > 0$}
    \KwOut{Estimated parameter $\theta$}
    
    Initialize parameter: $\theta^{(0)}$ \\ 
    \For {$t=1,2 \dots T$}{
        Shuffle the dataset $\mathcal{D}^{\mathrm{design}}$ and divide into mini-batches $\{\mathcal{B}_{1}, \dots, \mathcal{B}_{M}\}$ \\
        \For{$l=1,2,\dots, M$}{
            Calculate prediction values of teacher model $\boldsymbol{f}_{1:n,\mathcal{B}_l},$ 
            where $\boldsymbol{f}_{1:n,\mathcal{B}_l}=(\boldsymbol{f}_{1:n}(\boldsymbol{x}_j^{\mathrm{design}}), \boldsymbol{x}^{\mathrm{design}}_j\in \mathcal{B}_l).$\\
            Calculate $\mathbb{E}[\boldsymbol{z}|\theta^{(t-1)}]$ and $\mathbb{E}[\boldsymbol{z}\boldsymbol{z}^{\top}|\theta^{(t-1)}]$ using $\boldsymbol{f}_{1:n,\mathcal{B}_l}$ \\
            $Q(\theta | \theta^{(t-1)}) := \mathrm{E}_{\boldsymbol{z}_{1:n} \mid \boldsymbol{f}_{1:n,\mathcal{B}_l}, \theta^{(t-1)}}[ \ell^{com}(\theta ; \boldsymbol{f}_{1:n,\mathcal{B}_l}, \boldsymbol{z}_{1:n})]$ \\
            Calculate gradient $g^{(t)} := \nabla_{\theta} \left( - Q(\theta | \theta^{(t-1)}) \right)$ \\
            Update $\theta^{(t)} \leftarrow \theta^{(t-1)} - \eta \cdot g^{(t)}$
        }
    }
    \caption{{EM algorithm for the univariate DLF model}}
    \label{alg:1}
\end{algorithm}

\subsection{Multivariate DLF model}
In this section, we provide details of the multivariate DLF model introduced in Section \ref{Deep Latent Factor model}.
With given the design points $\{\boldsymbol{x}_1^{(d)}, \ldots, \boldsymbol{x}_m^{(d)}\}$, let
$\boldsymbol{f} = \bigl(\boldsymbol f(\boldsymbol{x}_j^{(d)}), j=1,\ldots,m\bigr)^\top$
and
$\boldsymbol{\mu}_\theta = \bigl(\mu_\theta(\boldsymbol{x}_j^{(d)}), j=1,\ldots,m\bigr)^{\top}$
be $m\times c$ matrices, and let
$\boldsymbol{\Phi}_\theta = \bigl(\Phi_\theta(\boldsymbol{x}_j^{(d)}), j=1,\ldots,m\bigr)^\top$
be an $m\times q$ matrix. Then, we assume that $\boldsymbol{f}$ follows the matrix-variate Gaussian distribution
\begin{equation*}
    \boldsymbol{f} \sim \mathcal{MN}_{m,c}(\boldsymbol{\mu}_\theta, \boldsymbol{\Phi}_\theta\boldsymbol{\Phi}_\theta^{\top},  LL^{\top}).
\end{equation*}
Using the properties of the matrix-variate Gaussian distribution, we can vectorize $\boldsymbol{f}$, thereby allowing the multivariate DLF model to be handled the same as the univariate case.

\subsubsection{Vectorization}
According to Definition 4 in \citep{chen2023multivariate}, the vectorization of $\boldsymbol{f}$ can be expressed as follows:
\begin{equation*}
	\operatorname{vec}(\boldsymbol{f}) \sim \mathcal{N}_{mc}(\operatorname{vec}(\mu_{\theta}), LL^{\top} \otimes \boldsymbol{\Phi}_\theta\boldsymbol{\Phi}_\theta^{\top})    
\end{equation*} where $\otimes$ denotes the Kronecker product.
This association arises from a factorization of the covariance matrix of the multivariate Gaussian distribution into the Kronecker product of matrices $\boldsymbol{\Phi}_{\theta}^{\top} \boldsymbol{\Phi}_{\theta}$ and $LL^{\top}$.

\subsubsection{EM algorithm for the multivariate DLF model}\label{EM algorithm when Multivariate}

We explain the EM algorithm for training the multivariate DLF model in the same way as in Appendix \ref{Appendix:EM algorithm for the univariate DLF model}. In the multivariate case, matrix vectorization and its associated properties serve as the central tools, as detailed in Section 10.2.2 of \citep{petersen2008matrix}. 

Suppose that $n$ teacher ensemble members $\boldsymbol f_1(\cdot),\ldots, \boldsymbol f_n(\cdot)$ are given, where each $\boldsymbol f_i(\cdot)\colon\mathcal{X}\to\mathbb{R}^c$ is a multivariate function. As in the univariate case, KD via the multivariate DLF involves three steps. First, we choose $m$ design points $\mathcal{D}^{\mathrm{design}} = \{\boldsymbol{x}_{1}^{(d)}, \ldots, \boldsymbol{x}_{m}^{(d)}\}$. The second step is to calculate the prediction values of $n$ many ensemble members
at the design points to have $\boldsymbol{f}_i=(\boldsymbol f_i(\boldsymbol{x}_j^{(d)}), j=1,\ldots,m)^\top$ for $i=1,\ldots,n.$ Here, unlike in the univariate case, each $\boldsymbol{f}_i$ should be regarded as an $m \times c$ matrix. Similarly to the univariate case, the final step employs the EM algorithm to obtain the MLE.

Similarly to the univariate case, we assume that $\boldsymbol{f}_i=\boldsymbol{\tilde{f}}_i+\boldsymbol{v}_i,$ where
$\boldsymbol{v}_i \sim \mathcal{MN}_{m,c}(\mathbf{0}, \sigma_f^2 \mathbb{I}_m, \mathbb{I}_c)$ and
$\boldsymbol{\tilde{f}}_i=(\boldsymbol{\tilde f}_i(\boldsymbol{x}_1),\ldots,\boldsymbol{\tilde f}_i(\boldsymbol{x}_m))^\top$
with $\boldsymbol{\tilde f}_i(\cdot)$s following the multivariate DLF model. From the property of the matrix-variate Gaussian distribution, we can rewrite the following $\operatorname{vec}(\boldsymbol{f}_i)=\operatorname{vec}(\boldsymbol{\tilde{f}}_i)+\operatorname{vec}(\boldsymbol{v}_i),$ where
$\operatorname{vec}(\boldsymbol{v}_i) \sim \mathcal{N}_{mc}(\mathbf{0}, \sigma_f^2 \mathbb{I}_{mc})$. And in the case of multivariate case, we abuse the notation to write $\theta^{(t)} := (\theta^{(t)},L^{(t)}, {\sigma_{f}}^{(t)})$ like the univariate case. 

From this, the complete log-likelihood is given as
\begin{equation*}
\begin{split}    
    \ell^{com}(\theta | \boldsymbol{f}_{1:n}, \boldsymbol{z}_{1:n}) 
    =&-\frac{cnm}{2} \log(2 \pi \sigma_{f}^{2}) - \frac{cnq}{2} \log(2 \pi)  - \frac{ \sum_{i=1}^{n}\operatorname{vec}(\boldsymbol{z}_{i})^{\top} \operatorname{vec}(\boldsymbol{z}_{i})}{2} \\
    &- \frac{ \sum_{i=1}^{n} (\operatorname{vec}(\boldsymbol{f}_i) - \operatorname{vec}(\boldsymbol{\tilde{f}}_i)  )^\top (\operatorname{vec}(\boldsymbol{f}_i) - \operatorname{vec}(\boldsymbol{\tilde{f}}_i)  ) }{2 \sigma_{f}^{2}} \\
    =&-\frac{cnm}{2} \log(2 \pi \sigma_{f}^{2}) - \frac{cnq}{2} \log(2 \pi)  - \frac{ \sum_{i=1}^{n}\operatorname{vec}(\boldsymbol{z}_{i})^{\top} \operatorname{vec}(\boldsymbol{z}_{i})}{2} \\
    & - \frac{ \sum_{i=1}^{n} \boldsymbol{B}_i^\top \boldsymbol{B}_i }{2 \sigma_{f}^{2}}
    ,
\end{split}
\end{equation*}
where $\boldsymbol{B}_i = \operatorname{vec}(\boldsymbol{f}_i) - \operatorname{vec}(\boldsymbol{\mu}_{\theta}) - (\boldsymbol{\Phi}_{\theta} \otimes L )\operatorname{vec}(\boldsymbol{z}_{i})$, $\boldsymbol{f}_{1:n}=\{\boldsymbol{f}_1,\ldots,\boldsymbol{f}_n\}, 
\boldsymbol{z}_{1:n}=\{\boldsymbol{z}_1,\ldots,\boldsymbol{z}_n\}, \boldsymbol{\mu}_{\theta}=
(\mu_{\theta}(\boldsymbol{x}_1^{(d)}),\ldots, \mu_{\theta}(\boldsymbol{x}_m^{(d)}))^\top$ is an $m \times c$ matrix
and $\boldsymbol{\Phi}_{\theta} = (\Phi_{\theta}(\boldsymbol{x}_1^{(d)}),\ldots,\Phi_{\theta}(\boldsymbol{x}^{(d)}_m))^\top$ is an $m \times q$ matrix.
Thus, for given parameter $\theta^{(t-1)}$ at time $t-1,$
the E-step is to calculate the conditional expectation of the complete log-likelihood which is given as
\begin{equation*}
\begin{split}
    Q(\theta|\theta^{(t-1)})=&\mathbb{E}_{\boldsymbol{z}_{1:n} \mid \boldsymbol{f}_{1:n}, \theta^{(t-1)}}[\ell^{com}(\theta | \boldsymbol{f}_{1:n}, \boldsymbol{z}_{1:n})] \\
    = &-\frac{cnm}{2} \log(2 \pi {\sigma_{f}^{2}}) - \frac{cnq}{2} \log(2 \pi) - \frac{ \sum_{i=1}^{n} \operatorname{tr} \mathbb{E}[ \operatorname{vec}(\boldsymbol{z}_{i}) \operatorname{vec}(\boldsymbol{z}_{i})^{\top}|\theta^{(t-1)},\boldsymbol{f}_{1:n}]}{2} \\
    & - \frac{1}{2{\sigma_{f}^{2}}}\sum_{i=1}^{n} \bigg\{\ (\operatorname{vec}(\boldsymbol{f}_i) - \operatorname{vec}(\boldsymbol{\mu}_{\theta}))^\top (\operatorname{vec}(\boldsymbol{f}_i) - \operatorname{vec}(\boldsymbol{\mu}_{\theta})) \\
    & +2(\operatorname{vec}(\boldsymbol{f}_i) - \operatorname{vec}(\boldsymbol{\mu}_{\theta}))^\top (\boldsymbol{\Phi}_{\theta} \otimes L )\mathbb{E}[\operatorname{vec}(\boldsymbol{z}_{i})|\theta^{(t-1)},\boldsymbol{f}_{1:n}]\\
    &+ \operatorname{tr} \left(( L^\top \otimes\boldsymbol{\Phi}_{\theta}^\top )(\boldsymbol{\Phi}_{\theta} \otimes L ) \mathbb{E}[\operatorname{vec}(\boldsymbol{z}_{i}) \operatorname{vec}(\boldsymbol{z}_{i})^{\top}|\theta^{(t-1)},\boldsymbol{f}_{1:n}] \right) \bigg\}, 
\end{split}
\end{equation*}
where
\begin{equation*}
    \begin{split}
        \mathbb{E}\left[\operatorname{vec}(\boldsymbol{z}_{i})|\theta^{(t-1)},\boldsymbol{f}_{1:n}\right] 
        = & \frac{\mathbb{V}\left[\operatorname{vec}(\boldsymbol{z})|\theta^{(t-1)},\boldsymbol{f}_{1:n}\right]         \left({{L}^{(t-1)}}^{\top} \otimes { \boldsymbol{\Phi}_{\theta^{(t-1)}}^{\top} }\right) \left(\boldsymbol{f}_{i} - \boldsymbol{\mu}_{\theta^{(t-1)}}\right)}{{\sigma_{f}^{2}}^{(t-1)}} \\
        \mathbb{E}\left[\operatorname{vec}(\boldsymbol{z}_{i})\operatorname{vec}(\boldsymbol{z}_{i})^{\top}|\theta^{(t-1)},\boldsymbol{f}_{1:n}\right]  
        = &\mathbb{V}\left[\operatorname{vec}(\boldsymbol{z})|\theta^{(t-1)},\boldsymbol{f}_{1:n}\right] \\
        &+ \mathbb{E}\left[\operatorname{vec}(\boldsymbol{z})|\theta^{(t-1)},\boldsymbol{f}_{1:n}\right]
        \mathbb{E}\left[\operatorname{vec}(\boldsymbol{z})|\theta^{(t-1)},\boldsymbol{f}_{1:n}\right]^{\top}
    \end{split}
\end{equation*}
where $\mathbb{V}\left[\operatorname{vec}(\boldsymbol{z})|\theta^{(t)}\right] = \left(I_{qk}+ \left({{L}^{(t)}}^{\top} \otimes \boldsymbol{\Phi}_{\theta^{(t)}}^{\top}\right)\left({{L}}^{(t)} \otimes \boldsymbol{\Phi}_{\theta^{(t)}}\right) /{\sigma_{f}^{2}}^{(t)}\right)^{-1}$.

In the M-step, instead of maximizing $Q(\theta|\theta^{(t-1)})$, we update $\theta$ by use of a stochastic gradient descent algorithm.


\section{Experimental details} \label{experimental details}
In this section, we describe the overall setup of our experiments in detail, focusing on the datasets, model architectures, training procedures, and evaluation metrics used in regression and classification problems. 
Four baselines are considered: (i) a small ensemble of lightweight networks (“small-Ens”), (ii) Hydra, and (iii) BE for regression and LBE for classification.

\subsection{Regression case} \label{experimental details: regression case}

We consider the standard regression problem 
\begin{equation*}
    y=f^{*}(x)+\epsilon, \epsilon \sim \mathcal{N}\left(0, \sigma_{\epsilon}^2 \right).
\end{equation*}
Suppose that there are $n$ many teacher models $f^{(t)}_{i}(\cdot)$  and $\sigma_{\epsilon, i}^{(t)}$.
Then, the predictive distribution of $y$ given $\boldsymbol{x}$ is constructed as 
$p^{(t)}(y|\boldsymbol{x})=\frac{1}{n} \sum_{i=1}^n \mathcal{N}(y|f^{(t)}_{i}(\boldsymbol{x}),{\sigma_{\epsilon,i}^{(t)}}^{2}),$
where $\mathcal{N}(y|\mu,\sigma^2)$ is the density of the Gaussian distribution with mean $\mu$ and variance $\sigma^2$
The predictive distribution $p^{(s)}(y|\boldsymbol{x})$ based on student ensemble members is defined similarly.

We obtain 50 teacher models of DNNs with two hidden layers and 100 nodes at each layer which are learned by minimizing the sum of squared residuals of the training data with 50 randomly selected initial solutions.
For the design points used in the distillation, we use the training data themselves.
The architecture of student models comprises of an one-hidden-layer MLP with 50 units.
The number of parameters in the student ensemble of each method is summarized in Table \ref{tab:model parameter regression}. Note that the number of parameters of DLF is smaller than those of the other baselines because the dimension of the latent factor is 10 instead of 50. 

\begin{table}[h]
\centering
\caption{The number of parameters in the student ensemble.}
\label{tab:model parameter regression}
\resizebox{\textwidth}{!}{%
\begin{tabular}{c|l|cccccc}
\toprule
\multirow{2}{*}{} &
  \multirow{2}{*}{Method} &
  \multicolumn{6}{c}{Datasets} \\
\cline{3-8}
 &  & Boston housing & Concrete & Energy & Wine & Power & Kin8nm \\
\midrule
\multirow{5}{*}{\# of parameters} 
& Teachers
  & 90,000 & 65,000 & 65,000 & 80,000 & 45,000 & 65,000 \\
& small-Ens 
  & 45,000 & 32,500 & 32,500 & 40,000 & 22,500 & 32,500 \\
& Hydra       
  & 6,650 & 6,150 & 6,150 & 6,450 & 5,750 & 6,150 \\
& BE          
  & 7,351 & 6,601 & 6,601 & 7,051 & 6,001 & 6,001 \\
& DLF(factor dim = 10)
  & 1,862 & 1,612 & 1,612 & 1,762 & 1,412 & 1,612 \\
\bottomrule
\end{tabular}%
}
\end{table}

The Adam \citep{kingma2014adam} is used for the optimization. 

\subsubsection{Dataset} \label{Dataset:Reg}
We consider the following 6 UCI datasets (Boston housing, Concrete, Energy, Wine, Power Plant, Kin8nm) \citep{asuncion2007uci}. 
We divide each dataset into a 9:1 ratio randomly for the training and test data.
The experiment is repeated with 10 random split to have 10 evaluation metrics.

\begin{table}[h]
\centering
\caption{Description of UCI benchmark datasets used in the experiment}
\label{tab:uci_data_discription}
\begin{tabular}{ccc}
\toprule
Dataset        & size & \# of features \\ \midrule
Boston housing & 506  & 13             \\ \midrule
Concrete       & 1030 & 8              \\ \midrule
Energy         & 1030 & 8              \\ \midrule
Wine           & 9568 & 11             \\ \midrule
Power          & 768  & 4              \\ \midrule
Kin8nm         & 8192 & 8              \\ 
\bottomrule
\end{tabular}%
\end{table}

\subsubsection{Evaluation metric}\label{Evaluation metric:Reg}
Let $\{(\boldsymbol{x}_{1}, y_{1}),\cdots ,(\boldsymbol{x}_{m_{\mathrm{test}}}, y_{m_{\mathrm{test}}})\}$ be given test data.

\paragraph{Root Mean Square Error} Root Mean Square Error (RMSE) is defined as
\begin{equation*}
    \mathrm{RMSE} = \sqrt{\frac{1}{m_{\mathrm{test}}}\sum_{j}^{m_{\mathrm{test}}}(y_j - \hat{\mathbb{E}}(y | \boldsymbol{x}_j))^2} 
\end{equation*}
where $\hat{\mathbb{E}}(y | \boldsymbol{x}) = \int 
 y \hat{p}(y |\boldsymbol{x} )dy$ and $\hat{p}(y|\boldsymbol{x})$ is an estimated predictive distribution.

\paragraph{Negative Log Likelihood} Negative Log Likelihood (NLL) is defined as
\begin{equation*}
    \mathrm{NLL} = -\sum_{j=1}^{m_{\mathrm{test}}} \log \hat p(y_j \mid \boldsymbol{x_j}).
\end{equation*}

\paragraph{Continuous Ranked Probability Score} Continuous Ranked Probability Score (CRPS) is defined as
\begin{equation*}
    \mathrm{CRPS} = \frac{1}{m_{\mathrm{test}}}\sum_{j=1}^{m_{\mathrm{test}}} \int_{\mathbb{R}}\left[ \widehat{F}_j(v) - \mathbf{1}(v \geq y_j))\right]^2 dv
\end{equation*}
where $\widehat{F}_j(v) = \int_{-\infty}^{v} \hat{p}(y \mid \boldsymbol{x}_j) dy$.

\subsection{Classification case} \label{experimental details: classification case}

We consider a $c$-class classification problem where
\begin{equation*}
    p(y|\boldsymbol{x},\boldsymbol{f}) = \frac{\exp \left(f_y(\boldsymbol{x}) \right)}{\sum_{l=1}^{c} \exp \left(f_l(\boldsymbol{x}) \right)},\;\;\; y\in \{1,\ldots,c\}    
\end{equation*}
for a given (vector-valued) function $\boldsymbol{f}(\cdot)=(f_1(\cdot),\ldots,f_c(\cdot)).$
For a given teacher ensemble $\boldsymbol{f}_i^{(t)}, i=1,\ldots,n,$ the teacher predictive distribution is estimated by
$p^{(t)}(y|\boldsymbol{x})=\sum_{i=1}^n p(y|\boldsymbol{x},\boldsymbol{f}_i^{(t)})/n.$
The student predictive distribution for a given student ensemble is defined similarly.

\subsubsection{Dataset} \label{Dataset:Cls}
In classification settings, we analyze two CIFAR datasets \citep{krizhevsky2009learning}. 
Each dataset contains 50,000 training and 10,000 test images of natural scenes, sized $32 \times 32$ pixels. 

\begin{table}[h]
\centering
\caption{Description of CIFAR-10 and CIFAR-100}
\label{tab:cifar_data_discription}
\begin{tabular}{cccc}
\toprule
Dataset        & Train size & Test size & \# of labels \\
\midrule
CIFAR-10 & 50000 & 10000  & 10           \\
\midrule
CIFAR-100 & 50000 & 10000  & 100              \\ 
\bottomrule
\end{tabular}%
\end{table}

We follow the set-up of experiments in \citep{nam2022improving}.
As a teacher, we use an ensemble of four neural networks, where each model is a Wide-ResNet (WRN) \citep{zagoruyko2016wide}. Specifically, WRN-28-1 is used for CIFAR-10 and WRN-28-4 is used for CIFAR-100. 
And, the student model uses the WRN-16-1 network for CIFAR-10 and the WRN-28-1 network for CIFAR-100. 

Training lasts 200 epochs on a single GPU using SGD with Nesterov momentum of 0.9, weight decay of $5 \times 10^{-4}$, and batch size of 128. 
A one-cycle cosine annealing schedule with a five-epoch linear warm-up (from 0.001 to 0.1) is employed. 
The number of parameters in each ensemble is summarized in Table \ref{tab:model parameter classification}.

\begin{table}[h]
\centering
\caption{Number of model parameters in each ensemble.}
\label{tab:model parameter classification}
\begin{tabular}{c|l|cc}
\toprule
\multirow{2}{*}{} &
  \multirow{2}{*}{Method} &
  \multicolumn{2}{c}{Datasets} \\
\cline{3-4}
 &  & CIFAR-10 & CIFAR-100 \\
\midrule
\multirow{5}{*}{\# of parameters} 
& Teachers
  & 1.48M & 23.488M \\
& small-Ens 
  & 0.70M & 1.50M \\
& Hydra       
  & 0.18M & 0.39M \\
& LBE          
  & 0.18M & 0.38M \\
& DLF(factor dim = 8)
  & 0.18M  & 0.38M \\
\bottomrule
\end{tabular}%
\end{table}

\subsubsection{Evaluation metric}\label{Evaluation metric:Cls}

\paragraph{Accuracy} Accuracy (ACC) is defined as
\begin{equation*}
\mathrm{ACC} = \frac{1}{m_{\mathrm{test}}} \sum_{j=1}^{m_{\mathrm{test}}} \mathbf{1}(y_j = \hat{y}_j),
\end{equation*}
where $\hat{y}_j = \arg\max_y \hat{p}(y| \boldsymbol{x}_j)$.

\paragraph{Negative Log-Likelihood} Negative log-likelihood (NLL) is defined as
\begin{equation*}
\mathrm{NLL} = -\frac{1}{m_{\mathrm{test}}}\sum_{j=1}^{m_{\mathrm{test}}}\sum_{k=1}^{c} \mathrm{1}(y_j = k) \log \hat{p}(y=k\mid \boldsymbol{x}_j).
\end{equation*}

\paragraph{Expected Calibration Error} Expected Calibration Error (ECE) \citep{guo2017calibration} is defined as
\begin{equation*}
\mathrm{ECE} = \sum_{l=1}^{M} \frac{|B_l|}{m_{\mathrm{test}}}\,\left| \frac{1}{| B_l|}\sum_{(\boldsymbol{x}_j,y_j)\in B_l}\mathbf{1}(y_j = \hat y_j) - \frac{1}{| B_l|}\sum_{(\boldsymbol{x}_j,y_j)\in B_l}p(y_j\mid \boldsymbol{x}_j) \right|,
\end{equation*}
where $\{B_1,\dots,B_M\}$ is a partition of the test data $\mathcal{D}_{\mathrm{test}}$ such that $B_l = \bigl\{(\boldsymbol{x},y)\in \mathcal{D}_{\mathrm{test}} \mid p(\hat{y}\mid \boldsymbol{x})\in \bigl((l-1)/M,l/M\bigr]\bigr\}$. In this work, $M$ is set to be 15.

\subsection{Fine-tuning of language models} \label{fine-tuning of language models}
The experiments are conducted on three datasets from the GLUE \cite{wang2018glue} and SuperGLUE \cite{wang2019superglue} benchmark : RTE, MRPC, and WiC. Table~\ref{tab:glue_datasets} summarizes the details of each dataset.

\begin{table}[h]
\centering
\caption{Description of GLUE and SuperGLUE benchmark datasets used in the experiments}
\label{tab:glue_datasets}
\renewcommand{\arraystretch}{1.3}
\begin{tabular}{p{1.5cm} p{3cm} p{7.5cm}}
\toprule
\textbf{Dataset} & \textbf{Task Type} & \textbf{Description} \\
\midrule
RTE & Recognizing Textual Entailment & Determines whether a given hypothesis can be inferred from a given premise sentence. \\
\cmidrule(lr){1-3}
MRPC & Paraphrase Detection & Identifies whether two sentences are semantically equivalent. The dataset consists of sentence pairs from news sources. \\
\cmidrule(lr){1-3}
WiC & Word Sense Disambiguation & Determines whether a specific word used in two different contexts has the same meaning. Contextual understanding of word senses is essential. \\
\bottomrule
\end{tabular}
\end{table}

\vspace{1em}

\paragraph{Model Construction}
For teacher models, \textit{RoBERTa} \citep{liu2019roberta} is fine-tuned with the Low-Rank Adaptation (LoRA) method \citep{hu2022lora} on each task. For each dataset, four fine-tuned models are trained using different random initializations to construct teacher ensemble.

Student models are based on \textit{DistilRoBERTa} \citep{sanh2019distilbert}, which is also fine-tuned with LoRA. 
Features are extracted through the shared backbone, and task-specific prediction heads are constructed depending on the design of each distillation method.

\paragraph{Training and Evaluation Settings}
The experimental settings, including learning rate, batch size, number of epochs, and the rank of LoRA, follow the configuration used in \citep{yang2023bayesian}. 
Model performances are evaluated using the three metrics described in Appendix \ref{experimental details: classification case}: Acc, NLL and ECE.

\subsection{Application to distribution shift}\label{Application to distribution shift}

To apply the framework introduced in Section \ref{Application to distribution shift problems} to classification tasks, we consider the following model:
\begin{equation*}
   y \mid x, \mathbf{W} \sim \mathrm{Multinomial}\bigl(\operatorname{softmax}\bigl(f(x; \mathbf{W})\bigr)\bigr), 
\end{equation*}
\begin{equation*}
    f(x; \mathbf{W}) = \mu_{\theta}(x) + L \mathbf{W} \Phi_{\theta}(x),
\end{equation*}
where $\mathbf{W}$ is the weight matrix. 
For new data $\mathcal{D}^{new},$ we estimate $\mathbf{W}$ by minimizing the cross-entropy on new data
while $\theta$ and $L$ are fixed at $\hat\theta$ and $\hat{L}$ estimated on training data.

For numerical study, we consider a distribution shift scenario where the conditional distribution $P(X\mid Y)$ changes \citep{moreno2012unifying}.
To generate synthetic data, we use CIFAR-10 and flip the labels
by $y \mapsto 9-y$ to construct CIFAR 10-Flip dataset \citep{lee2022surgical}.
We first learn the body by a DNN, Hydra, and DLF on the training data of 50,000 CIFAR-10 images under the WRN-16-1 architecture
for the teacher and student ensembles. Then, we train the weights of the linear head
on new data while the body is fixed.


\subsection{Hardwares} \label{appendix: resource}
All our experiments are done through Python 3.9.16 with Intel(R) Xeon(R) Silver 4310 CPU @ 2.10GHz, NVIDIA TITAN Xp GPU and 128GB RAM.

\section{Ablation Study} \label{ablation study}
We do the following ablation studies on Boston housing data.
\begin{itemize}
    \item We investigate how the choice of design points affects distillation performance.
    \item The effect of the choice of latent factor dimension affects distillation performance.
    \item The effect of the MMD-based initialization is investigated to assess its role in stabilizing the EM algorithm during training.
    \item The capacity of the student models is varied by adjusting the network width, and the effect of this change is analyzed for each baseline.
    \item We investigate the effect of ensemble size by varying the number of ensemble members used in the teacher and student ensembles.
\end{itemize}

Quantitative results are presented using RMSE, NLL, and CRPS, aggregated over ten independent runs.

\subsection{Choice of design points} 
\subsubsection{Comparison of different design points selection methods} \label{ablation:design points type}
We investigate how different strategies of selection design points influence the performance of the Gaussian distillation.
The entire dataset is partitioned into three disjoint subsets: $\mathcal{D} = \mathcal{D}^{\mathrm{train}}_{\mathrm{teacher}} \oplus \mathcal{D}^{\mathrm{train}}_{\mathrm{new}} \oplus \mathcal{D}^{\mathrm{test}}$, with a fixed ratio of $4.5:4.5:1$.
The teacher ensemble is trained on $\mathcal{D}^{\mathrm{train}}_{\mathrm{teacher}}$, and the student model is distilled using various forms of design points $\mathcal{D}^{\mathrm{design}}$.
To analyze the impact of the choice of design points, the four distinct strategies for selecting $\mathcal{D}^{\mathrm{design}}$ are considered:
\begin{itemize}
    \item \textbf{Design 1:} Directly using the teacher training data, $\mathcal{D}^{\mathrm{design}} = \mathcal{D}^{\mathrm{train}}_{\mathrm{teacher}}$.
    \item \textbf{Design 2:} Using mixup samples from $\mathcal{D}^{\mathrm{train}}_{\mathrm{teacher}}$. $\mathcal{D}^{\mathrm{design}} \subset \mathrm{mixup}\{\mathcal{D}^{\mathrm{train}}_{\mathrm{teacher}}\}$.
    \item \textbf{Design 3:} Using a training data not used in training teacher ensemble, $\mathcal{D}^{\mathrm{design}} = \mathcal{D}^{\mathrm{train}}_{\mathrm{new}}$.
    \item \textbf{Design 4:} Using mixup samples from $\mathcal{D}^{\mathrm{train}}_{\mathrm{new}}$. $\mathcal{D}^{\mathrm{design}} \subset \mathrm{mixup}\{\mathcal{D}^{\mathrm{train}}_{\mathrm{new}}\}$
\end{itemize}


Here, $\mathrm{mixup}\{\mathcal{D}^{\mathrm{train}}_{\mathrm{teacher}}\}$ denotes the set of samples generated by linearly combining two randomly selected samples from $\mathcal{D}^{\mathrm{train}}_{\mathrm{teacher}}$.
For each index $j$, we randomly draw two data pairs $(\boldsymbol{x}_j, y_j)$ and $(\boldsymbol{x}_j^c, y_j^c)$ from $\mathcal{D}^{\mathrm{train}}_{\mathrm{teacher}}$ and form the mixed sample
$$(\boldsymbol{x}_j^m, y_j^m) := \lambda_j(\boldsymbol{x}_j, y_j) + (1 - \lambda_j)(\boldsymbol{x}_j^c, y_j^c), \quad \lambda_j \in [0, 1].$$
The set $\mathrm{mixup}\{\mathcal{D}^{\mathrm{train}}_{\mathrm{teacher}}\}$ consists of all mixed pairs $(\boldsymbol{x}_j^m, y_j^m)$, with 
the number of generated design points in Designs 2 and 4 matched to the size of $\mathcal{D}^{\mathrm{train}}_{\mathrm{teacher}}$.
These strategies are designed to cover both scenarios where design points are reused from the teacher training data and where additional or perturbed data are incorporated.
All methods are evaluated on the reserved test data $\mathcal{D}^{\mathrm{test}}$.

\begin{table}[h]
\centering
\caption{Performances of Gaussian distillation for different strategies of the design point selection}
\label{tab:design points type}
\resizebox{0.7\textwidth}{!}{%
\begin{tabular}{c|c|c|c}
\toprule
Design Strategy        & RMSE            & NLL             & CRPS            \\ \midrule
Design 1         & 3.2316 (0.0587) & 2.6317 (0.0267) & 1.7896 (0.0271) \\ 
Design 2 & 3.5856 (0.1263) & 2.8003 (0.0638) & 1.9648 (0.0681) \\ 
Design 3         & \textbf{2.8786 (0.0322)} & \textbf{2.4816 (0.0129)} & \textbf{1.6427 (0.0165)} \\ 
Design 4 & 3.1787 (0.2498) & 2.6122 (0.1144) & 1.792 (0.1310)  \\ 
\bottomrule
\end{tabular}%
}
\end{table}

\begin{figure}[h]
    \centering
    \includegraphics[width=\textwidth]{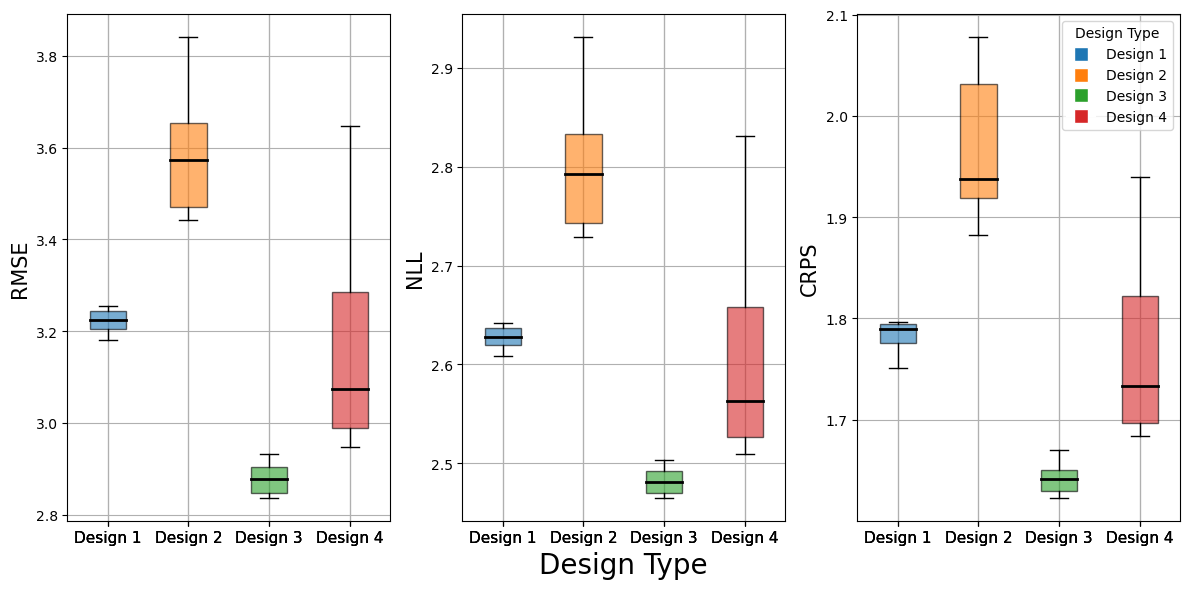}
    \caption{Box plot of evaluation metrics (RMSE, NLL, CRPS) for different strategies of the design point selection}
    \label{fig:ablation_design_type}
\end{figure}

The results summarized in Table~\ref{tab:design points type} and visualized in Figure~\ref{fig:ablation_design_type} illustrate the effect of different choices of design points on the performance of the Gaussian distillation..
Among the four strategies, the use of new training data (Design 3) for design points consistently yields the best performance across all metrics.
In contrast, strategies involving mixup consistently yield inferior performances regardless of whether teacher training data (Design 2) or new data (Design 4) are used.

To sum up, the results suggest that using new training data is the best for Gaussian distillation.
However, the size of the teacher training data becomes smaller, which might lead to performance degradation. 
In practice, we could find the optimal partition of teacher training data and new training data based on additional validation data.

\subsubsection{Comparison for different selection of the number of design points} \label{ablation:design ratio}
We also investigate the effect of the number of design points on the student model.
As in the previous experimental setup, the teacher ensemble is trained on $\mathcal{D}^{\mathrm{train}}_{\mathrm{teacher}}$, and the student model is distilled using various sizes of design sets $\mathcal{D}^{\mathrm{design}}$.
We only consider the design type $\mathcal{D}^{\mathrm{train}}_{\mathrm{teacher}}$ (Design 1) and $\mathcal{D}^{\mathrm{train}}_{\mathrm{new}}$ (Design 3).
The number of design points is controlled by the \emph{design ratio}
\begin{equation*}
    r \in \{ 0.2, 0.4, 0.6, 0.8, 1.0\}
\end{equation*}
meaning that Gaussian distillation uses $r \times |\mathcal{D}^{\mathrm{design}}|$ samples for distillation.

\begin{table}[h]
\centering
\caption{Performance for Gaussian distillation for the number of design points}
\label{tab:design ratio}
\resizebox{1\textwidth}{!}{%
\begin{tabular}{c|lll|lll}
\toprule
Design Type &
  \multicolumn{3}{c|}{Design 1} &
  \multicolumn{3}{c}{Design 3} \\ \midrule
design\_ratio &
  \multicolumn{1}{c|}{RMSE} &
  \multicolumn{1}{c|}{NLL} &
  \multicolumn{1}{c|}{CRPS} &
  \multicolumn{1}{c|}{RMSE} &
  \multicolumn{1}{c|}{NLL} &
  \multicolumn{1}{c}{CRPS} \\ \midrule
0.2 &
  \multicolumn{1}{l|}{3.9829 (0.4823)} &
  \multicolumn{1}{l|}{3.0244 (0.2738)} &
  2.1773 (0.1704) &
  \multicolumn{1}{l|}{3.8643 (0.3277)} &
  \multicolumn{1}{l|}{2.951 (0.1728)} &
  2.145 (0.1349) \\
0.4 &
  \multicolumn{1}{l|}{3.576 (0.3231)} &
  \multicolumn{1}{l|}{2.8017 (0.1612)} &
  1.9818 (0.1349) &
  \multicolumn{1}{l|}{3.0703 (0.0735)} &
  \multicolumn{1}{l|}{2.5612 (0.0316)} &
  1.7464 (0.0396) \\
0.6 &
  \multicolumn{1}{l|}{3.3996 (0.1750)} &
  \multicolumn{1}{l|}{2.711 (0.0834)} &
  1.9122 (0.0573) &
  \multicolumn{1}{l|}{3.0055 (0.0634)} &
  \multicolumn{1}{l|}{2.5337 (0.0267)} &
  1.7161 (0.0243) \\
0.8 &
  \multicolumn{1}{l|}{3.2548 (0.1407)} &
  \multicolumn{1}{l|}{2.6433 (0.0642)} &
  1.8444 (0.0663) &
  \multicolumn{1}{l|}{3.0229 (0.0945)} &
  \multicolumn{1}{l|}{2.5413 (0.0395)} &
  1.7197 (0.0444) \\
1 &
  \multicolumn{1}{l|}{3.2316 (0.0587)} &
  \multicolumn{1}{l|}{2.6317 (0.0267)} &
  1.7896 (0.0271) &
  \multicolumn{1}{l|}{2.8786 (0.0322)} &
  \multicolumn{1}{l|}{2.4816 (0.0129)} &
  1.6427 (0.0165) \\ 
  \bottomrule
\end{tabular}%
}
\end{table}
\begin{figure}[h]
    \centering
    \includegraphics[width=\textwidth]{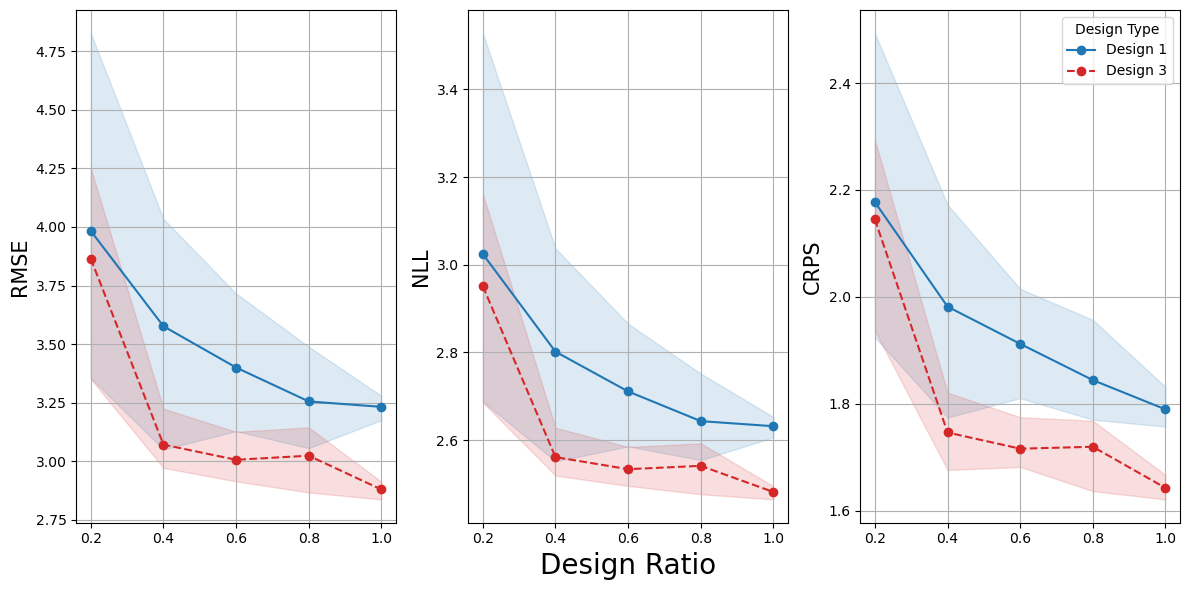}
    \caption{Evaluation metrics (RMSE, NLL, CRPS) versus the design ratio.}
    \label{fig:ablation_design_number}
\end{figure}

The results summarized in Table~\ref{tab:design ratio} and Figure~\ref{fig:latent_factor_dim} show that increasing the design ratio consistently improves the performance of the Gaussian distillation.
That is, the larger the number of design points, the better the performance of Gaussian distillation is.

\subsection{Dimension of the latent factor} \label{ablation:dimension of latent factor}
We investigate the influence of the dimension of the latent factor on the performance of Gaussian distillation. 
The result visualized in Figure \ref{fig:latent_factor_dim} indicates that the performance of Gaussian distillation is not
sensitive to the dimension of the latent factor unless the dimension is too small.

\begin{figure}[h]
    \centering
    \includegraphics[width=\textwidth]{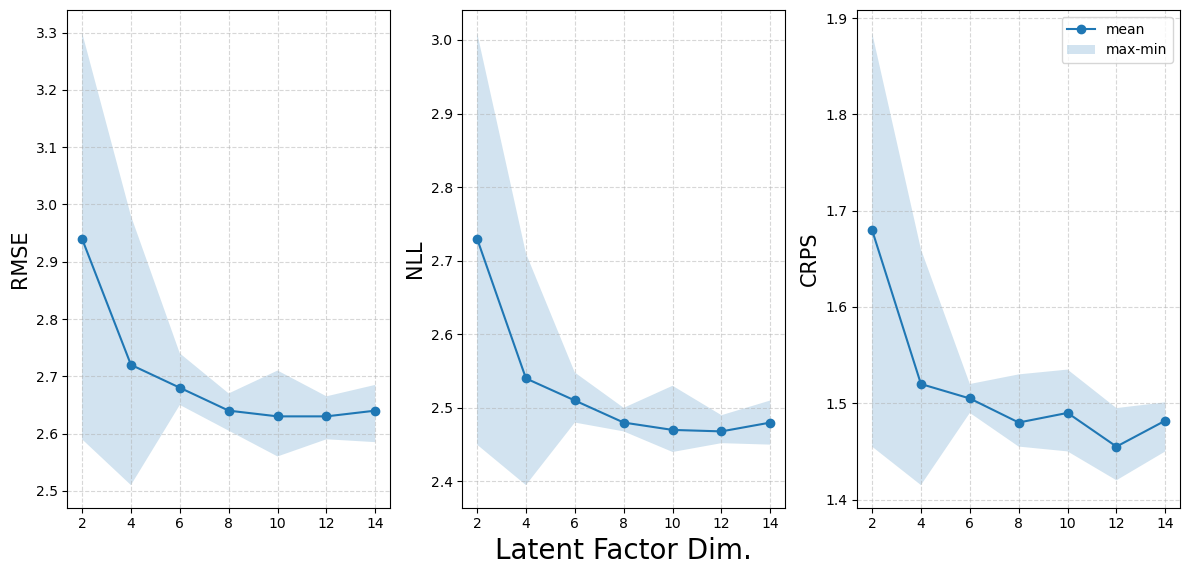}
    \caption{Evaluation metrics (RMSE, NLL, CRPS) versus latent factor dimension.}
    \label{fig:latent_factor_dim}
\end{figure}

\subsection{MMD vs random initial}
We compare the proposed MMD initialization in Section \ref{Estimation of the mean and factor loading} with the random initialization.
As shown in Figure \ref{fig:mmd}, the MMD initialization strategy outperforms the random initialization unless the dimension of the latent factor is too small. 
In addition, the variations of the evaluation metrics for the MMD initialization are much smaller than those of the random initialization. 
That is, the MMD initialization is indispensable for the superior performance of Gaussian distillation.

\begin{figure}[h]
    \centering
    \includegraphics[width=\textwidth]{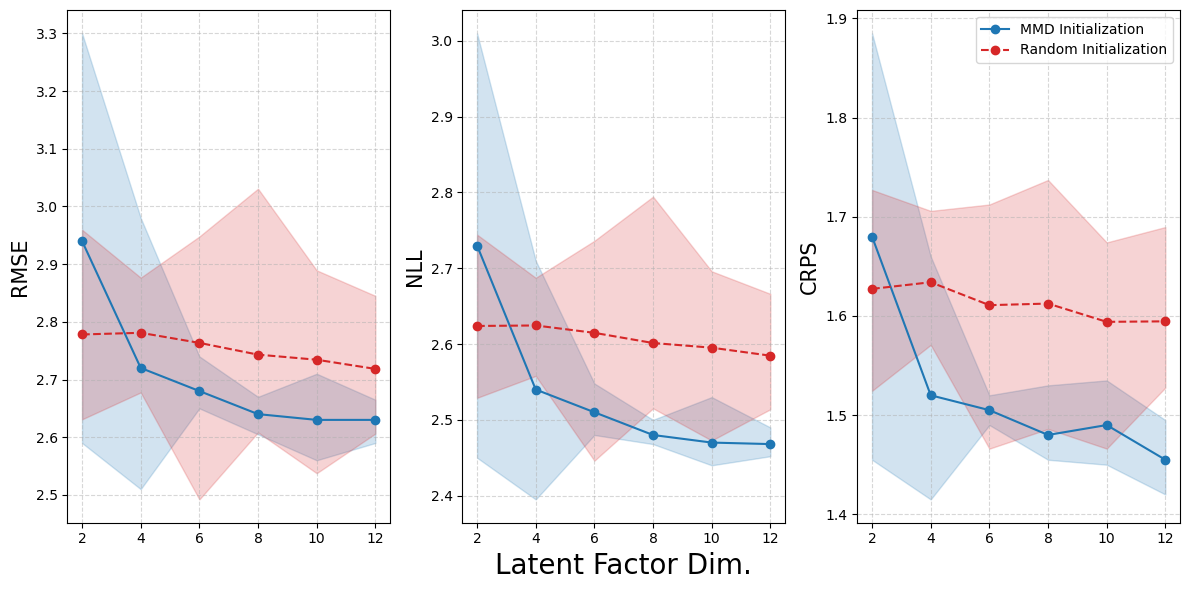}
    \caption{Evaluation metrics (RMSE, NLL, CRPS) versus latent factor size with or without the MMD initialization
    }
    \label{fig:mmd}
\end{figure}

\subsection{Capacity of student models} \label{ablation:size of student}
The effect of the capacity of student models is examined by varying the number of nodes in the one-layer DNN architecture. We increase the number of nodes gradually from 50 to 60, 70, 80, 90, and 100, and obtain the evaluation metrics of the distillation methods. 
The results in Figure \ref{fig:model size} show that the performances of DLF and small-Ens keep improving as the number of nodes increases, while the performances of Hydra and BE are saturated. 
Apparently, DLF behaves similarly to small-Ens, which is interesting since small-Ens demands heavier computation and much more storage.

\begin{figure}[h]
    \centering
    \includegraphics[width=\textwidth]{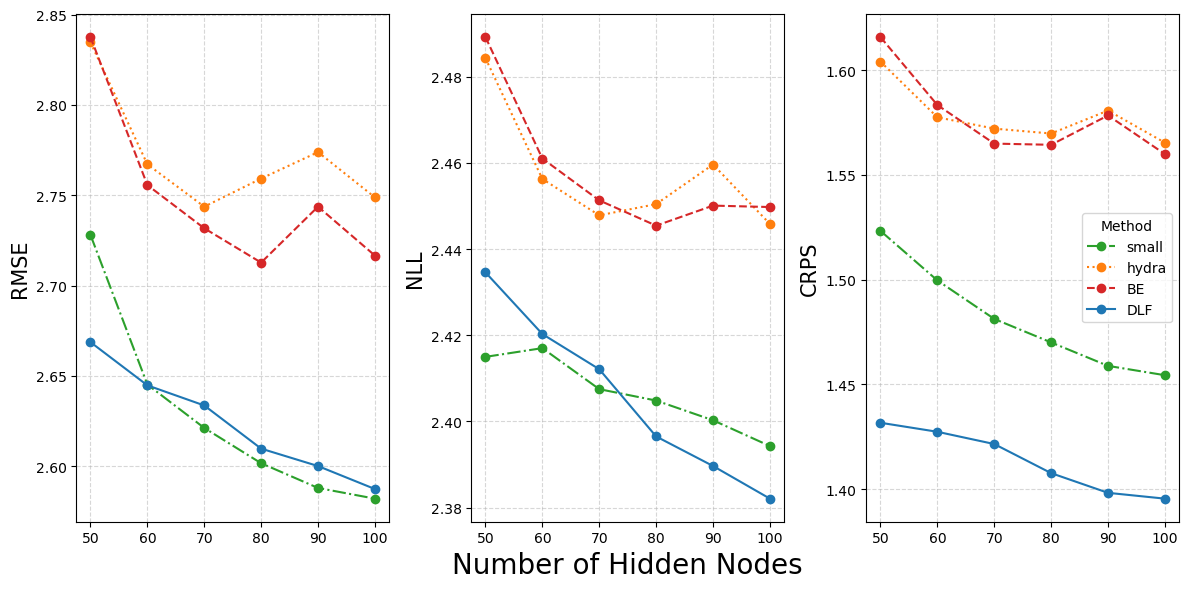}
    \caption{Evaluation metrics (RMSE, NLL, CRPS) versus the number of hidden nodes. 
    }
    \label{fig:model size}
\end{figure}

\subsection{Number of ensemble members} \label{ablation:number of ensemble}
To evaluate the effect of ensemble size, we evaluate the performance of the KD methods by varying the number of ensemble members from 10 to 50.
As we can see from Figure \ref{fig:ensemble number}, for all methods, the performances keep improving as the ensemble size increases.
Note that the number of weights in the DLF model is not proportional to the ensemble sizes (instead, it is proportional to the dimension of the latent factor), while it is proportional for the other baselines. This is an additional advantage of Gaussian distillation.

\begin{figure}[h]
    \centering
    \includegraphics[width=\textwidth]{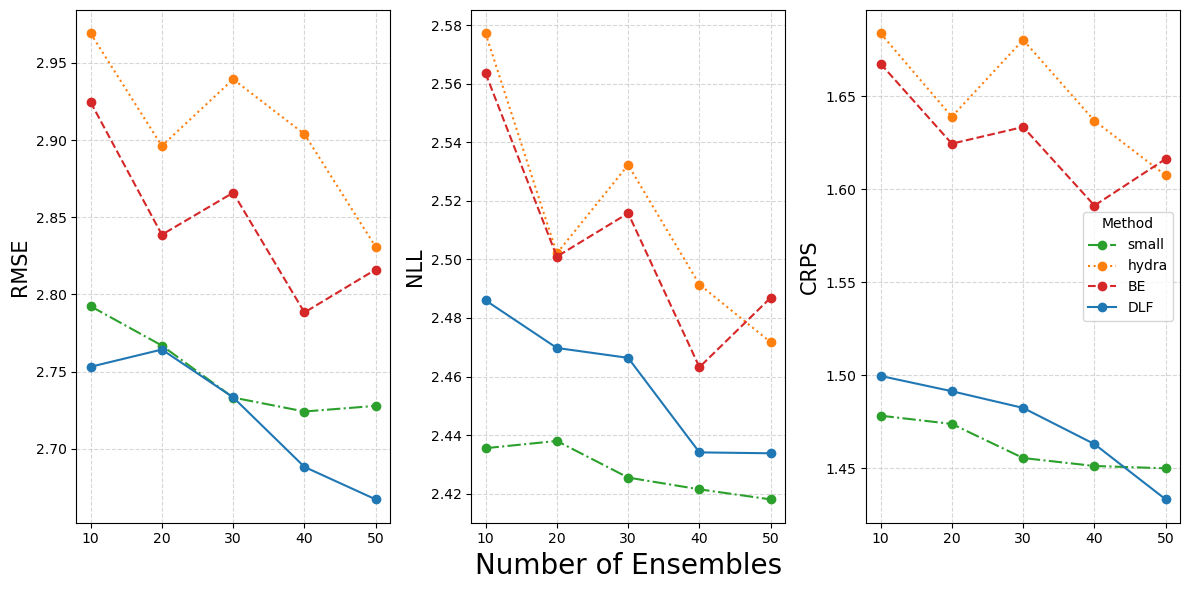}
    \caption{Evaluation metrics (RMSE, NLL, CRPS) versus the number of ensemble members.
    }
    \label{fig:ensemble number}
\end{figure}

\subsection{Application to large dataset}
To further examine the scalability and robustness of our proposed model, we conducted an additional experiment on a larger and more complex dataset, Tiny ImageNet \citep{le2015tiny}. 
This dataset extends the original ImageNet hierarchy but contains a reduced subset of classes and image resolutions, making it a challenging yet computationally manageable benchmark for evaluating model generalization on large-scale visual domains.

Tiny ImageNet consists of 200 classes with 100,000 training images and 10,000 test images, where each image is resized to 64×64 pixels. 
To ensure fair evaluation and avoid overfitting, we randomly split the training set into 80\% for training and 20\% for validation.

We compared our method against Small Ensemble, Hydra, LBE, and $\text{Proxy-EnD}^2$. 
The teacher network was implemented using WRN-28-4, while the student network employed WRN-16-4. 
Both teacher and student models were trained under an ensemble size of four, following the same distillation pipeline described in Appendix \ref{experimental details: classification case}.

{\renewcommand{\arraystretch}{1.2}
\begin{table}[h]
\centering
\caption{Results on Tiny ImageNet.}
\label{tiny_imagenet_result}
\resizebox{0.65\textwidth}{!}{%
\begin{tabular}{c|l|ccc}
\toprule
dataset      & method                        & Acc(\%) $\uparrow$            & NLL $\downarrow$                   & ECE(\%) $\downarrow$            \\
\midrule
\multirow{6}{*}{Tiny ImageNet}
 & {\color[HTML]{00B0F0} Teacher} & {\color[HTML]{00B0F0} 68.74} & {\color[HTML]{00B0F0} 1.2806}  & {\color[HTML]{00B0F0} 3.22}    \\
 & Small-Ens                     & 57.11 (0.0027)       & 1.854 (0.0032)       & 13.51 (0.0032)         \\
 & Hydra                          & 56.80 (0.0037)       & 1.7503 (0.0092)       & 2.75 (0.092)         \\
 & LBE                            & 46.08 (0.0119)       & 2.24 (0.0407)       & 2.22 (0.0053)         \\
 & $\text{Proxy-EnD}^2$               & 51.64 (0.0022)       & 2.0195 (0.0069)       & 2.77 (0.0038)         \\
 & DLF                            & \textbf{58.34 (0.2121)} & \textbf{1.6737 (0.0043)} & \textbf{1.92 (0.1098)} \\ 
\bottomrule
\end{tabular}%
}
\end{table}
}

Table \ref{tiny_imagenet_result} shows that our approach outperforms all other methods across all measures, even when the number of classes and samples is large, suggesting that the proposed framework generalizes well beyond small-scale datasets.
In contrast, both LBE and $\text{Proxy-EnD}^2$ exhibit a significant drop in performance when applied to complex data, confirming that our latent factor modeling remains effective even in such challenging settings.

\section{Uncertainty quantification}
\label{ablation:uncertainty quantification}
We conduct experiments to evaluate whether each model appropriately quantifies uncertainty by detecting out-of-distribution (OOD) data in a classification problem. We adopt predictive mutual information, which estimates epistemic uncertainty \citep{gal2016uncertainty}, as the OOD detection score :
\begin{align*}
\widehat{\mathbb{I}}\big[y,\theta \mid \boldsymbol{x}\big]
&:= -\sum_{y=1}^{c}\!\left(\frac{1}{n}\sum_{i=1}^{n} p(y \mid \boldsymbol{x}, \hat{\theta}_{i})\right)
     \log\!\left(\frac{1}{n}\sum_{i=1}^{c} p(y \mid \boldsymbol{x}, \hat{\theta}_{i})\right) \\
&\quad + \frac{1}{n}  \sum_{y=1}^{c} \sum_{i=1}^{n} p(y \mid \boldsymbol{x}, \hat{\theta}_{i})
     \log p(y \mid \boldsymbol{x}, \hat{\theta}_{i}).
\end{align*}
Specifically, we denote the in-distribution (ID) dataset by $\{\boldsymbol{x}^{\text{in}}_{1},...,\boldsymbol{x}^{\text{in}}_{m_{1}}\}$, the OOD dataset by $\{\boldsymbol{x}^{\text{out}}_{1},...,\boldsymbol{x}^{\text{out}}_{m_{2}}\}$ and $y$ is the output of the model with corresponding  predictive mutual information $\widehat{\mathbb{I}}\big[y,\theta \mid \boldsymbol{x}^{\text{in}}_{i}\big]$ for $i=1,...,m_{1}$ and $\widehat{\mathbb{I}}\big[y,\theta \mid \boldsymbol{x}^{\text{out}}_{i}\big]$ for $i=1,...,m_{2}$. \\
For evaluation, we assign label 0 to ID data and label 1 to OOD data, meaning that lower predictive mutual information indicates ID whereas higher values indicate OOD. We then compute the AUROC between these labels and the predictive mutual information scores.\\
We use CIFAR-10 as the ID and SVHN, CIFAR-100, and Tiny ImageNet as OOD. 
All models are trained on the CIFAR-10 training set with 50,000 images, and OOD detection is evaluated on the CIFAR-10 test set with 10,000 images versus the test sets of the OOD datasets, where SVHN has 26,032 images, CIFAR-100 has 10,000 images, and Tiny ImageNet has 10,000 images. We repeat the entire procedure five times with different random seeds and report the mean of AUROC and its standard error. 
Table~\ref{tab:ood detection} show that DLF performs best on SVHN and remains competitive across CIFAR-100 and Tiny ImageNet.

\begin{table}[h]
\centering
\caption{AUROC Results on in-distribution and out-of-distribution detection.}
\label{tab:ood detection}
\resizebox{0.65\textwidth}{!}{%
\begin{tabular}{l|ccc}
\toprule
\multirow{2}{*}{Method} & \multicolumn{3}{c}{Out-Of-Distribution Datasets} \\
\cline{2-4}
 & SVHN & CIFAR-100 & Tiny ImageNet \\
\midrule 
{\color[HTML]{00B0F0} Teachers} & {\color[HTML]{00B0F0} 0.9403} & {\color[HTML]{00B0F0} 0.8604} & {\color[HTML]{00B0F0} 0.9400} \\
small-Ens & 0.9093 (0.0037) & 0.8000 (0.0036) & 0.7956 (0.0023) \\
Hydra & 0.7178 (0.0107) & 0.6644 (0.0083) & 0.6709 (0.0059)  \\
LBE & 0.8329 (0.0249) & 0.8107 (0.0130) & 0.8152 (0.0141) \\
$\text{Proxy-EnD}^2$ & 0.8427 (0.0344) & \textbf{0.8427 (0.0183)}& \textbf{0.8456 (0.0175)} \\
DLF & \textbf{0.9359 (0.0212)} & 0.8357 (0.0062)  & 0.8291 (0.0067)
 \\
\bottomrule
\end{tabular}%
}
\end{table}

\section{Computational Cost}
In this section, we report a comparison of the computational costs between our proposed method and baseline method during training.

\subsection{Training time}
First, we evaluate the training time on the CIFAR-10 and CIFAR-100 datasets.
For a fair comparison,  all experiments are conducted under the same hardware environment.
We follow the experimental setup described in Appendix \ref{experimental details: classification case}.
Each experiment is repeated four times, and the average training times are reported in hours.

\begin{table}[h]
\centering
\caption{Comparison of training time on CIFAR-10 and CIFAR-100 datasets.}
\label{tab:training_time}
\resizebox{0.45\textwidth}{!}{%
\begin{tabular}{l|c|c}
\toprule
\multirow{2}{*}{Method} & CIFAR-10 & CIFAR-100 \\ \cline{2-3}
& Training Time & Training Time \\ \midrule
Teachers & 10.2 & 17.4 \\ 
small-Ens & 4.6 & 12.5 \\ 
Hydra & 3.1 & 10.28 \\ 
LBE & 6.5 & 21.5 \\ 
$\text{Proxy-EnD}^2$ & 4.6 & 9.2 \\ 
DLF (Ours) & 4.2 & 11.28 \\ 
\bottomrule
\end{tabular}%
}
\end{table}

As shown in Table \ref{tab:training_time}, our proposed DLF model also exhibits a relatively short training time compared to small ensemble, Hydra, and $\text{Proxy-EnD}^2$.
In the DLF model, $\mu_{\theta}$ and $\Phi_{\theta}$ are local networks whose output sizes depend only on the target and latent dimensions; hence, they do not scale with the student model size.
Moreover, a key advantage of DLF is that it can  be effectively trained using EM algorithm.
As detailed in Appendix~\ref{Appendix:Estimation of the mean and factor loading}, the E-step has a closed-form solution, making the overall training process computationally efficient.
In contrast, LBE requires more computational resources and longer training time as the model becomes larger, which will be discussed further in the following subsection.

\subsection{Floating-point operations}
For a further analyze computational complexity, we compute Floating-point operations (FLOPs) per training for Hydra, Batch Ensemble and DLF.
FLOPs provide a hardware-independent measure of computational cost. For example, ResNet~\cite{he2016deep} and EfficientNet \cite{tan2019efficientnet} use FLOPs to evaluate and compare model efficiency across architectures.

Let $c$ denote the number of classes, $B$ the batch size, and $n$ the number of ensemble members.
The latent dimension in DLF is denoted by $q$.
$F_{\text{body}}$ and $F_{\text{head}}$ represent the FLOPs for one forward pass through the share body (e.g., Wide-ResNet \cite{zagoruyko2016wide}) and one Hydra head, respectively.
In the DLF model, $F_{\mu, \Phi}$ corresponds to the FLOPs for the small fully connected layers that produce $\mu_{\theta}$ and $\Phi_{\theta}$.
Finally, $\alpha$ indicates the multiplier accounting for both forward and backward passes.
Then, the FLOPs per training step for Hydra, Batch Ensemble, and DLF are  formulated as follows:
\begin{itemize}
    \item \textbf{Hydra:} 
    $$FLOPs_{\text{Hydra}} = \alpha B (F_{\text{body}} + n F_{\text{head}})$$
    \item \textbf{Batch Ensemble : }
    $$FLOPs_{\text{BE}} \approx \alpha B (1 + \epsilon ) F_{\text{body}}$$
    where $\epsilon$ (extra cost from rank-1 matrix) is usually $\leq 0.05$.
    \vspace{1mm}
    \item \textbf{DLF:}
    $$FLOPs_{\text{DLF}} = \alpha B (F_{\text{body}} + F_{\mu, \Phi}) + B q^2 + q^3 + n(Bcq + q^2) $$
\end{itemize}

Note that when the shared backbone is a Wide-ResNet, the corresponding $F_{\text{body}}$ is typically on the order of $10^9$ FLOPs. In Batch Ensemble, the additional cost comes from the overhead factor $(1 + \epsilon)$ applied to the full backbone. 
Although $\epsilon$ is usually small (typically less than 0.05), its effect is not negligible due to the large scale of $F_{\text{body}}$.

In contrast, Hydra shares the backbone and only adds a small cost from the lightweight head modules. 
Since $F_{\text{head}} \ll F_{\text{body}}$, the total complexity remains dominated by the shared body, even with large ensemble sizes $n$.

Similarly, although DLF requires additional computations such as $F_{\mu, \Phi}$, and matrix operations such as $Bq^2$, $q^3$, and $n(Bcq + q^2)$, 
their contributions are negligible since $q$ is typically small (often less than 20 in practice) and $F_{\mu, \Phi}, Bq^2, q^3, n(Bcq + q^2) \ll F_{\text{body}}$. 
Consequently, the dominant cost in DLF also comes from the shared backbone.

\section{Theoretical results} \label{sec4}
 
We intend to similarly investigate the convergence of the sieve maximum likelihood estimation (MLE), as discussed by \citep{chae2023likelihood}, using the results of estimating smooth functions within the sparse deep neural network (DNN) function class proposed by \citep{schmidt2020nonparametric} and sieve MLE's convergence rates by \citep{wong1995probability}. We will investigate the convergence rate in terms of decaying rates of the eigenvalues.

\paragraph{Notation} For a natural number $m$, we define $[m]= \{ 1, 2, \ldots, m \}.$  For a $m\times q$-dimensional matrix  $\boldsymbol{A}$, we denote the spectral norm of the matrix $\boldsymbol{A}$ by $\|\boldsymbol{A}\|_2$ and the Frobenius norm by $\|\boldsymbol{A}\|_F$, that is, $\|\boldsymbol{A}\|_2=\sup_{\boldsymbol{z}\in\mathbb{R}^q:\|\boldsymbol{z}\|_2=1}\|\boldsymbol{A}\boldsymbol{z}\|_2$ and $\|\boldsymbol{A}\|_F=\sqrt{\text{Tr}(\boldsymbol{A}^\top\boldsymbol{A})}$. For a square matrix $\boldsymbol{A}$, let $\lambda_{\min}(\boldsymbol{A})$ denote the smallest eigenvalue of $\boldsymbol{A}$.  
For two positive sequences $(a_n)_{n\in \mathbb{N}}$ and $(b_n)_{n\in \mathbb{N}}$, we write $a_n\lesssim b_n$ or  $b_n\gtrsim a_n$, if there exists a positive constant $C>0$ such that $a_n\le Cb_n$ for any $n\in \mathbb{N}$. We write $a_n\asymp b_n$ if both $a_n\gtrsim b_n$ and $a_n\lesssim b_n$ hold.
For a vector-valued function $\boldsymbol{f}=(f_1,\dots, f_m)^\top$ defined on a domain $\mathcal{X}$, we denote the ``elementwise-maximum'' supremum norm  by $\|\boldsymbol{f}\|_\infty=\max_{1\le j\le m}\|f_j\|_\infty = \max_{1\le j\le m}\sup_{x\in\mathcal{X}}|f_j(x)|$.  The H\"older space of smoothness $\beta>0$ with domain $[0,1]^d$ and radius $K>0$ is defined as, letting $s$ be the smallest integer larger than or equal to $\beta-1$, 
	\begin{equation*}
	\cH^\beta_d(K):=\left\{f\in\cC_d^s:\|f\|_{\cH^\beta_d}\le K\right\},
	\end{equation*}
where $\cC_d^s$ denotes the set of $s$-times differentiable functions on $[0,1]^d$ and $\|\cdot\|_{\cH^\beta_d}$ denotes the {H\"older norm} defined by
    \begin{align*}
    \|f\|_{\cH^\beta(\cX)}
    &=\max_{(\alpha_1,\dots,\alpha_d)\in\mathbb{N}_0^{d}:\sum_{j=1}^d\alpha_j<\beta}\|\partial^{\alpha_1}\cdots\partial^{\alpha_d}f\|_{\infty}\\
    &\quad +
        \max_{(\alpha_1,\dots,\alpha_d)\in\mathbb{N}_0^{d}:\sum_{j=1}^d\alpha_j= s}\sup_{\boldsymbol{x}_1,\boldsymbol{x}_2\in [0,1]^d, \boldsymbol{x}_1\neq\boldsymbol{x}_2 }\frac{|\partial^{\alpha_1}\cdots\partial^{\alpha_d}f(\boldsymbol{x}_1)-\partial^{\alpha_1}\cdots\partial^{\alpha_d}f(\boldsymbol{x}_2)|}{\|\boldsymbol{x}_1-\boldsymbol{x}_2\|_\infty^{\beta- s}},
    \end{align*}
where $\mathbb{N}_0=\{0,1,2,\dots\}$. 
Let $\cF$ be a given class of functions defined on $\cX$.  A collection $\{f_i:i\in[N]\}$ is called a $\delta$-covering set of $\cF$ with respect to a certain norm $\|\cdot\|$ defined on $\cX$, if, for all $f\in\cF$, there exists $f_i$ in the collection such that $\|f-f_i\|\le\delta$. The cardinality of the minimal $\delta$-covering set is called the $\delta$-covering number of $\cF$ with respect to   the  norm $\|\cdot\|$ , and is denoted by $\cN(\delta, \cF, \|\cdot\|)$. A collection $\{(l_i,u_i):i\in[N]\}$ of pairs of functions with $l_i\le u_i$ is called a $\delta$-bracketing set of $\cF$ with respect to a norm $\|\cdot\|$ if, for all $f\in\cF$, there exists $(l_i,u_i)$ in the collection such that $l_i\le f\le u_i$ and $\|l_i-u_i\|<\delta$. The cardinality of the minimal $\delta$-bracketing set is called the $\delta$-bracketing number of $\cF$ with respect to  the  norm $\|\cdot\|$, and is denoted by $\cN_{[]}(\delta, \cF, \|\cdot\|)$. For two probability density functions $p_1$ and $p_2$, let us denote the Hellinger distance between them by $h(p_1,p_2)=[\int (p_1^{1/2}(\boldsymbol{x})-p_2^{1/2}(\boldsymbol{x}))^2d\boldsymbol{x}]^{1/2}$.

\subsection{Problem formulation}
Let $\tilde{f}_1,\dots, \tilde{f}_n$ be $n$ independent realizations of the Gaussian process  with mean function $\mu_{*}(\cdot)$ and covariance kernel $\Sigma_{*}(\cdot,\cdot)$. Suppose that we have $m$ many $d$-dimensional design points $\mathcal{D}=\{\boldsymbol{x}_1^{(d)},\ldots,\boldsymbol{x}_m^{(d)}\}$ (where we omit the superscript ``design'' unlike the main body of the manuscript for simplicity). We assume that without loss of generality,  $\boldsymbol{x}_i^{(d)}\in[0,1]^d$ for every $j\in[m]$ by appropriate normalization. Then we observe $\boldsymbol{f}_{i\mid \mathcal{D}}=
    (\tilde{f}_i(\boldsymbol{x}_1^{(d)})+v_{i1},\ldots, \tilde{f}_i(\boldsymbol{x}_m^{(d)})+v_{im})^\top, $ 
where $v_{im}$s are independent Gaussian random variables with mean 0 and variance $\sigma_*^2.$  Note that $\boldsymbol{f}_{i\mid \mathcal{D}}$ follows the multivariate normal distribution $p_{*}:=\mathcal{N}(\boldsymbol{\mu}_{* \mid \mathcal{D}}, \boldsymbol{\Sigma}_{* \mid \mathcal{D}})$, where 
	\begin{align*}
		\boldsymbol{\mu}_{* \mid \mathcal{D}} &= (\mu_{*}(\boldsymbol{x}_u^{(d)}))_{u \in[m]} \in \mathbb{R}^{m} \\
		\boldsymbol{\Sigma}_{* \mid \mathcal{D}} & = (\Sigma_{*}(\boldsymbol{x}_u^{(d)},\boldsymbol{x}_v^{(d)}))_{u\in[m],v\in[m]} + \sigma^2_* \mathbb{I}_m \in \mathbb{R}^{m \times m}. 
	\end{align*} 
    Let $\lambda_{*,j}$ and $\psi_{*,j}(\cdot)$ be the $j$-th eigenvalues and eigenfunctions of the kernel $\Sigma_{*}$, ordered by their magnitude and $\phi_{*,j}(\cdot) = \sqrt{\lambda_{j}}\psi_{*,j}(\cdot)$ be the scaled eigenfunctions. Then, the covariance matrix can be decomposed into the $q$-leading part and the low-rank part
	\begin{align*}
		\boldsymbol{\Sigma}_{* \mid \mathcal{D}} = & \boldsymbol{\Phi}_{* \mid \mathcal{D}} \boldsymbol{\Phi}_{* \mid \mathcal{D}}^{\top} + \boldsymbol{\Sigma}_{*>q \mid \mathcal{D}}  +\sigma_{*}^{2} \mathbb{I}_{m}
	\end{align*} 
    where $\boldsymbol{\Phi}_{* \mid \mathcal{D}}=(\boldsymbol{\phi}_{*}(\boldsymbol{x}_1^{(d)}),\dots, \boldsymbol{\phi}_{*}(\boldsymbol{x}_m^{(d)}))^\top \in \mathbb{R}^{m \times q}$
    with $\boldsymbol{\phi}_{*}(\boldsymbol{x})=(\phi_{*j}(\boldsymbol{x}))_{ j\in [q]}$
    and
    $\boldsymbol{\Sigma}_{*>q \mid \mathcal{D}}=\left(\sum_{j> q} \phi_{*j}(\boldsymbol{x}_u^{(d)})\phi_{*j}(\boldsymbol{x}_v^{(d)})\right)_{u\in[m],v\in[m]}.$ For the sake of notational simplicity, we consider the case where $\sigma_*^2$ is known.
The proof can be extended easily for the case of unknown $\sigma_*^2.$
    
    Our aim is to estimate $p^*$ based on observations $\boldsymbol{f}_{1\mid m},\ldots, \boldsymbol{f}_{n\mid m}$
    by modeling $\mu_*$ and $\boldsymbol{\phi}_*$ by a specially design DNN. 
    For given $\boldsymbol{\mu}\in \mathbb{R}^m$ and $\boldsymbol{\Sigma}\in \mathbb{R}^{m\times m},$
    let $p_{\boldsymbol{\mu},\boldsymbol{\Sigma}}$ be the density of the Gaussian
    distribution with mean $\boldsymbol{\mu}$ and covariance matrix $\boldsymbol{\Sigma}.$
    For give mean function $\mu_\theta(\cdot)$ and a vector of $q$ many scaled eigenfunctions $\boldsymbol{\phi}_\theta(\cdot)$
    parameterized by $\theta,$ we consider $p_{\boldsymbol{\mu}_{\theta \mid \mathcal{D}}, \boldsymbol{\Sigma}_{\theta \mid \mathcal{D}}},$
    where
	\begin{align*}
		\boldsymbol{\mu}_{\theta \mid \mathcal{D}} &= (\mu_{\theta}(\boldsymbol{x}_u^{(d)}))_{u \in[m]} \in \mathbb{R}^{m} \\
		\boldsymbol{\Sigma}_{\theta \mid \mathcal{D}} &= \boldsymbol{\Phi}_{\theta \mid \mathcal{D}} \boldsymbol{\Phi}_{\theta \mid \mathcal{D}}^{\top} +\sigma_*^2\mathbb{I}_{m} \in \mathbb{R}^{m \times m}
	\end{align*}
    with $\boldsymbol{\Phi}_{\theta \mid \mathcal{D}}=(\boldsymbol{\phi}_\theta(\boldsymbol{x}_1^{(d)}),\dots, \boldsymbol{\phi}_\theta(\boldsymbol{x}_m^{(d)}))^\top$.
    We model $(\mu_\theta(\cdot),\boldsymbol{\phi}_\theta(\cdot))$ by a DNN with $q+1$ many outputs
    and estimate $\theta$ by a (sieve) maximum likelihood estimator (MLE) that is defined as
    $\hat{\theta}=\operatorname{argmax}_{\theta \in \Theta_{n}} \ell_{n}\left(\theta \right),$ where    
	\begin{equation*}
        \ell_{n}\left(\theta \right) =-\frac{n}{2} \log |\boldsymbol{\Sigma}_{\theta\mid \mathcal{D}}|
        -\frac{1}{2} \sum_{i=1}^n (\boldsymbol{f}_{i\mid \mathcal{D}}-\mu_{\theta\mid \mathcal{D}})^{\top} \boldsymbol{\Sigma}_{\theta\mid \mathcal{D}}^{-1} (\boldsymbol{f}_{i \mid \mathcal{D}}-\boldsymbol{\mu}_{\theta\mid \mathcal{D}}).
	\end{equation*} 
       Here, the sieve $\Theta_n$ depends on the architecture of DNNs. 
       We will prove that the estimated Gaussian distribution converges to the true Gaussian distribution $p_*$
       in probability as $n\rightarrow \infty$ under regularity conditions while $\mathcal{D}$ is fixed,
       provided that the sieve $\Theta_n$ is selected appropriately.
 	\subsection{Results}\label{Theorical Results}

For the sieve $\Theta_n,$ we consider a set of parameters, whose elements are in $[-1,1]$ (following \citep{schmidt2020nonparametric}), of sparse DNNs with $L_n$ many layers, $r_n$ many nodes at each hidden layers, $S_n$ many nonzero elements and $q_n+1$ output nodes. When we would like to clarify such architectural choices, we will sometimes use the notation $\Theta_n=\Theta(L_n,r_n,S_n,q_n)$. For a DNN parameter $\theta\in\Theta_n$, we let $\boldsymbol{g}_{\theta}$ be the corresponding realized DNN function, but for a technical reason, the outputs of this function are truncated at $[-B,B],$ so that  $\boldsymbol{g}_{\theta}$ is a function from $\mathbb{R}^d$ to $[-B,B]^{q+1}$. We denote by $\mathcal{G}(\Theta_n)=\{\boldsymbol{g}_{\theta}:\theta\in\Theta_n\}$. Such sparse DNNs have been considered in many previous studies \citep[e.g.,][]{chae2023likelihood, schmidt2020nonparametric, kim2021fast} to investigate the asymptotic properties of DNNs.

Given the design points $\mathcal{D}=\{\boldsymbol{x}_1^{(d)},\ldots,\boldsymbol{x}_m^{(d)}\},$ for each DNN parameter $\theta\in \Theta_n$ with the realized DNN $\boldsymbol{g}_{\theta}=(g_{\theta,1},\dots, g_{\theta,q_n+1})^\top$, we define the $m$-dimensional vector $\boldsymbol{\mu}_{\theta\mid \mathcal{D}}=(g_{\theta,1}(\boldsymbol{x}_u^{(d)}))_{u\in [m]}$ and $m\times m$ symmetric matrix
$\boldsymbol{\Sigma}_{\theta\mid \mathcal{D}}=\boldsymbol{\Phi}_{\theta\mid \mathcal{D}}\boldsymbol{\Phi}_{\theta\mid \mathcal{D}}^\top+\sigma^2_* \mathbb{I}_{m}$, where $\boldsymbol{\Phi}_{\theta\mid \mathcal{D}}=(g_{\theta,j+1}(\boldsymbol{x}_u^{(d)}))_{u\in[m],j\in[q_n]}.$ For notational simplicity, we write $p_{\theta\mid \mathcal{D}}=p_{\boldsymbol{\mu}_{\theta\mid \mathcal{D}},\boldsymbol{\Sigma}_{\theta\mid \mathcal{D}}}$, the density of the Gaussian distribution with
mean $\mu_{\theta\mid \mathcal{D}}$ and covariance matrix $\Sigma_{\theta\mid \mathcal{D}}.$ We let the class of such Gaussian distributions $\mathcal{P}(\Theta_n;\mathcal{D})=\{ p_{\theta\mid \mathcal{D}}: \theta\in \Theta_n\}.$

	\begin{lemma}
		\label{lemma 4.1.}
    Let $\mathcal{D}$ be an arbitrary set of $m$ design points.
       There exists an absolute constant $C_1>0$ such that
		for any $\delta \in(0, C_1/q_n)$, the following holds
		\begin{equation*}
                \log \cN_{[]}\left(\delta,\mathcal{P}(\Theta_n;\mathcal{D}), h\right) \le \log \cN\left(\frac{\sigma_{*}^2\delta}{26\max\{2mq_nB,\sqrt{m}\sigma_{*}\}} , \mathcal{G}(\Theta_n),\|\cdot \|_{\infty}\right).
		\end{equation*}
	\end{lemma}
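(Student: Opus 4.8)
The plan is to turn a sup-norm cover of the DNN class $\cG(\Theta_n)$ into a Hellinger bracketing set for $\cP(\Theta_n;\cD)$, exploiting the fact that, once the design points $\cD$ are fixed, every density $p_{\theta\mid\cD}$ depends on $\theta$ only through the finite-dimensional bounded quantities $\boldsymbol{\mu}_{\theta\mid\cD}\in\bbR^m$ and $\boldsymbol{\Phi}_{\theta\mid\cD}\in\bbR^{m\times q_n}$. Accordingly, set $\eta:=\sigma_*^2\delta/(26\max\{2mq_nB,\sqrt m\sigma_*\})$ and take a minimal $\eta$-cover $\{\boldsymbol{g}_{\theta_1},\dots,\boldsymbol{g}_{\theta_N}\}$ of $\cG(\Theta_n)$ in $\|\cdot\|_\infty$, so that $N=\cN(\eta,\cG(\Theta_n),\|\cdot\|_\infty)$. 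The goal is to attach to each covering center exactly one Hellinger bracket of width $\delta$ that dominates every $p_{\theta\mid\cD}$ whose realized function lies in the $\eta$-ball around $\boldsymbol{g}_{\theta_j}$; this matches the claimed inequality, since the number of brackets will not exceed $N$.

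First I would translate sup-norm closeness of the DNN outputs into closeness of the Gaussian parameters. If $\|\boldsymbol{g}_\theta-\boldsymbol{g}_{\theta'}\|_\infty\le\eta$, then reading off the first and the remaining coordinates at the $m$ design points gives $\|\boldsymbol{\mu}_{\theta\mid\cD}-\boldsymbol{\mu}_{\theta'\mid\cD}\|_2\le\sqrt m\,\eta$ and $\|\boldsymbol{\Phi}_{\theta\mid\cD}-\boldsymbol{\Phi}_{\theta'\mid\cD}\|_F\le\sqrt{mq_n}\,\eta$. Because every entry of $\boldsymbol{\Phi}_{\theta\mid\cD}$ is bounded in absolute value by $B$, the identity $AA^\top-BB^\top=A(A-B)^\top+(A-B)B^\top$ together with $\|\boldsymbol{\Phi}_{\theta\mid\cD}\|_2\le\|\boldsymbol{\Phi}_{\theta\mid\cD}\|_F\le\sqrt{mq_n}\,B$ yields $\|\boldsymbol{\Sigma}_{\theta\mid\cD}-\boldsymbol{\Sigma}_{\theta'\mid\cD}\|_F\le 2mq_nB\,\eta$. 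This is precisely where the two terms $2mq_nB$ and $\sqrt m\sigma_*$ inside the maximum will originate.

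Next I would convert parameter closeness into Hellinger closeness, using that $\boldsymbol{\Sigma}_{\theta\mid\cD}=\boldsymbol{\Phi}_{\theta\mid\cD}\boldsymbol{\Phi}_{\theta\mid\cD}^\top+\sigma_*^2\mathbb{I}_m\succeq\sigma_*^2\mathbb{I}_m$, hence $\|\boldsymbol{\Sigma}_{\theta\mid\cD}^{-1}\|_2\le\sigma_*^{-2}$. The key estimate is a Lipschitz bound for the Hellinger distance between two Gaussians in terms of their parameter gaps, of the form $h(p_{\theta\mid\cD},p_{\theta'\mid\cD})\lesssim \sigma_*^{-1}\|\boldsymbol{\mu}_{\theta\mid\cD}-\boldsymbol{\mu}_{\theta'\mid\cD}\|_2+\sigma_*^{-2}\|\boldsymbol{\Sigma}_{\theta\mid\cD}-\boldsymbol{\Sigma}_{\theta'\mid\cD}\|_F$, which I would obtain by bounding the squared Hellinger distance through the Gaussian KL divergence and Taylor-expanding the resulting trace and log-determinant terms. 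Combining with the previous step gives $h\lesssim \sigma_*^{-2}\max\{2mq_nB,\sqrt m\sigma_*\}\,\eta$, and the explicit constant $26$ is chosen large enough to absorb both the numerical constant from the Gaussian computation and the factor lost when passing from a cover to a bracket, so that $h\le\delta/2$ on each cell. The restriction $\delta\in(0,C_1/q_n)$ enters exactly here: the Taylor control of the log-determinant sums a perturbation over the at most $q_n$ directions in which $\boldsymbol{\Phi}_{\theta\mid\cD}\boldsymbol{\Phi}_{\theta\mid\cD}^\top$ is active, so smallness of order $1/q_n$ is needed to keep both the linearization error negligible and the multiplicative perturbation of $\boldsymbol{\Sigma}_{\theta\mid\cD}$ bounded.

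Finally, for each center $\theta_j$ I would form the bracket $u_j(\cdot)=\sup_{\theta}p_{\theta\mid\cD}(\cdot)$ and $l_j(\cdot)=\inf_{\theta}p_{\theta\mid\cD}(\cdot)$, with the supremum and infimum taken over $\{\theta:\|\boldsymbol{g}_\theta-\boldsymbol{g}_{\theta_j}\|_\infty\le\eta\}$; by construction $l_j\le p_{\theta\mid\cD}\le u_j$ for every density in the cell, and there are at most $N$ such brackets. It remains to verify $h(l_j,u_j)\le\delta$. Since $h$ is the $L^2$ distance between square roots, the triangle inequality $h(l_j,u_j)\le \|\sqrt{u_j}-\sqrt{p_{\theta_j\mid\cD}}\|_2+\|\sqrt{p_{\theta_j\mid\cD}}-\sqrt{l_j}\|_2$ is valid even though $l_j,u_j$ are not densities, so it suffices to bound the $L^2$ gap from the central Gaussian to each pointwise envelope using the Lipschitz bound of the previous paragraph. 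I expect this last step to be the main obstacle: one must control the pointwise supremum and infimum densities over the whole cell rather than a finite collection of Gaussians, and this is where the explicit Gaussian tails together with the boundedness $|g_{\theta,\cdot}|\le B$ and the lower eigenvalue bound $\sigma_*^2$ must be invoked to dominate the envelope uniformly. The earlier reduction to finite-dimensional parameters and the parameter-to-Hellinger Lipschitz bound are, by comparison, routine.
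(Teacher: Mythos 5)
Your first two steps match the paper's proof exactly: a sup-norm cover of $\mathcal{G}(\Theta_n)$, translated at the design points into $\|\boldsymbol{\mu}_{\theta\mid\mathcal{D}}-\boldsymbol{\mu}_{\theta_j\mid\mathcal{D}}\|_2\le\sqrt{m}\,\eta$ and $\|\boldsymbol{\Sigma}_{\theta\mid\mathcal{D}}-\boldsymbol{\Sigma}_{\theta_j\mid\mathcal{D}}\|\le 2mq_nB\,\eta$ via the factorization $\boldsymbol{A}\boldsymbol{A}^\top-\boldsymbol{B}\boldsymbol{B}^\top$ and the bound $\|\boldsymbol{\Phi}\|_2\le\sqrt{mq_n}B$; this is also where the paper's $\max\{2mq_nB,\sqrt{m}\sigma_*\}$ arises. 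The genuine gap is in your final step, the one you yourself flag as the main obstacle. You take as brackets the pointwise envelopes $u_j=\sup_{\theta}p_{\theta\mid\mathcal{D}}$, $l_j=\inf_{\theta}p_{\theta\mid\mathcal{D}}$ over a cell and propose to bound $h(l_j,u_j)$ by the triangle inequality together with a pairwise Gaussian Hellinger--Lipschitz estimate. That cannot work as stated: pointwise, $\sqrt{u_j}-\sqrt{p_{\theta_j\mid\mathcal{D}}}=\sup_{\theta}\bigl(\sqrt{p_{\theta\mid\mathcal{D}}}-\sqrt{p_{\theta_j\mid\mathcal{D}}}\bigr)$, so
\begin{equation*}
\bigl\|\sqrt{u_j}-\sqrt{p_{\theta_j\mid\mathcal{D}}}\bigr\|_2^2
=\int\Bigl(\sup_{\theta}\bigl(\sqrt{p_{\theta\mid\mathcal{D}}}-\sqrt{p_{\theta_j\mid\mathcal{D}}}\bigr)\Bigr)^2
\;\ge\;\sup_{\theta}\,h^2\bigl(p_{\theta\mid\mathcal{D}},p_{\theta_j\mid\mathcal{D}}\bigr),
\end{equation*}
and the inequality runs in the useless direction: the supremum over the (infinite) cell does not commute with the integral, so pairwise Hellinger control of order $\delta$ gives no bound at all on the envelope's width. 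Concretely, as $\boldsymbol{x}$ ranges over $\mathbb{R}^m$ the supremum is attained by shifting the mean toward $\boldsymbol{x}$ and inflating the covariance, so the envelope exceeds the central density by a multiplicative factor on the tails everywhere; controlling it requires a uniform \emph{pointwise domination} statement, not an $L^2$ one.

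What is missing is exactly the paper's device for obtaining that domination with a computable width: take as brackets the explicit scaled, inflated/deflated Gaussians $u_i=(1+2\zeta)^m p_{\boldsymbol{\mu}_i,(1+\zeta)\boldsymbol{\Sigma}_i}$ and $l_i=(1+2\zeta)^{-m}p_{\boldsymbol{\mu}_i,(1+\zeta)^{-1}\boldsymbol{\Sigma}_i}$ with $\zeta=3\delta$. Positive definiteness of $(1+\zeta)\boldsymbol{\Sigma}_i-\boldsymbol{\Sigma}$ and of $\boldsymbol{\Sigma}-(1+\zeta)^{-1}\boldsymbol{\Sigma}_i$ (the paper's Lemma A.2) combined with the Gaussian density-ratio bound (Lemma A.1, giving $p_{\boldsymbol{\mu}_1,\boldsymbol{\Sigma}_1}/p_{\boldsymbol{\mu}_2,\boldsymbol{\Sigma}_2}\le\sqrt{|\boldsymbol{\Sigma}_2|/|\boldsymbol{\Sigma}_1|}\exp\bigl(\tfrac12(\boldsymbol{\mu}_1-\boldsymbol{\mu}_2)^\top(\boldsymbol{\Sigma}_2-\boldsymbol{\Sigma}_1)^{-1}(\boldsymbol{\mu}_1-\boldsymbol{\mu}_2)\bigr)$ whenever $\boldsymbol{\Sigma}_2-\boldsymbol{\Sigma}_1$ is positive definite) shows that \emph{every} $p_{\theta\mid\mathcal{D}}$ in the cell is squeezed pointwise between $l_i$ and $u_i$ --- in particular these dominate your envelopes --- and, because $l_i$ and $u_i$ are explicit Gaussians with a common mean up to scalar factors, $h^2(l_i,u_i)$ admits a closed-form computation that the paper bounds by $26\delta$ (this computation, not a log-determinant Taylor expansion over $q_n$ directions, is also where the smallness restriction on $\delta$ is consumed). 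Without this step, or an equivalent uniform domination argument, your proof does not close; with it, your envelopes become unnecessary and the argument reduces to the paper's.
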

	
\begin{theorem}
\label{theorem 4.2.}
Suppose that $\mu_*$ and $\phi_{*,j}, j=1,\ldots$ belong to $\mathcal{H}_d^\beta (K).$ Consider  the sieve MLE $\hat{p}=p_{\hat\theta}$ over $\Theta_n=\Theta(L_n, r_n, S_n, q_n)$ with $L_n\asymp \log n$ and $r_n, S_n, q_n\lesssim n$. Define 
	\begin{equation*}
			(\epsilon^{*}_{n})^2 = q_n\left(\frac{S_n}{\log n}\right)^{-2\beta/d} + \sum_{j>q_n} \lambda_{j}^{2} + S_n\frac{(\log n)^2 }{n} .
		\end{equation*}  
Assume that $q_n\to\infty$,  $\epsilon^{*}_{n}q_n\to0$ and $n(\epsilon^{*}_{n})^2\to\infty$ as $n\to\infty$. Then, we have
		\begin{equation*}
			P_*\left(h\left(\hat{p}, p_*\right)>C_{2} \epsilon_n^*\right)\to0
		\end{equation*}
	 as $n\rightarrow \infty$ for some absolute constant $C_2>0$.
	\end{theorem}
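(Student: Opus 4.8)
The plan is to establish the rate by the classical sieve-MLE machinery of \citep{wong1995probability}, which splits the problem into a \emph{bias} (approximation) task and a \emph{variance} (entropy) task. That theorem asserts that if there exists $p_n\in\cP(\Theta_n;\mathcal{D})$ with $h(p_n,p_*)\lesssim\epsilon_n^*$, and if the bracketing entropy of $\cP(\Theta_n;\mathcal{D})$ obeys the Dudley-type integral bound
\begin{equation*}
\int_{(\epsilon_n^*)^2}^{\epsilon_n^*}\sqrt{\log\cN_{[]}(u,\cP(\Theta_n;\mathcal{D}),h)}\,du\lesssim\sqrt{n}\,(\epsilon_n^*)^2,
\end{equation*}
then $P_*(h(\hat p,p_*)>C_2\epsilon_n^*)\to0$. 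Thus my first move is to match each of the three summands of $(\epsilon_n^*)^2$ to one of these tasks: the first two arise as the square of the approximation error, and the third from solving the entropy integral.

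For the approximation task I would invoke the sparse-DNN approximation theorem of \citep{schmidt2020nonparametric} to build a single network $\boldsymbol{g}_{\theta_n}\in\cG(\Theta_n)$ whose $q_n+1$ coordinates simultaneously approximate $\mu_*$ and the leading scaled eigenfunctions $\phi_{*,1},\dots,\phi_{*,q_n}$ in sup-norm, each with error $\lesssim(S_n/\log n)^{-\beta/d}$ (this uses $\mu_*,\phi_{*,j}\in\cH_d^\beta(K)$ and $L_n\asymp\log n$, $r_n,S_n\lesssim n$). I then translate this function-level bound into a Hellinger bound through the standard fact that, for Gaussians with covariances bounded away from singularity (guaranteed here by the floor $\sigma_*^2\mathbb{I}_m$), $h^2(p_{\theta_n\mid\mathcal{D}},p_*)\lesssim\|\boldsymbol{\mu}_{\theta_n\mid\mathcal{D}}-\boldsymbol{\mu}_{*\mid\mathcal{D}}\|_2^2+\|\boldsymbol{\Sigma}_{\theta_n\mid\mathcal{D}}-\boldsymbol{\Sigma}_{*\mid\mathcal{D}}\|_F^2$. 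The mean term is $\lesssim m\|\mu_{\theta_n}-\mu_*\|_\infty^2$ and is of smaller order. For the covariance term I write $\boldsymbol{\Sigma}_{\theta_n\mid\mathcal{D}}-\boldsymbol{\Sigma}_{*\mid\mathcal{D}}=(\boldsymbol{\Phi}_{\theta_n\mid\mathcal{D}}\boldsymbol{\Phi}_{\theta_n\mid\mathcal{D}}^\top-\boldsymbol{\Phi}_{*\mid\mathcal{D}}\boldsymbol{\Phi}_{*\mid\mathcal{D}}^\top)-\boldsymbol{\Sigma}_{*>q_n\mid\mathcal{D}}$ and bound each piece: the first via $AA^\top-BB^\top=(A-B)A^\top+B(A-B)^\top$ together with $\|\boldsymbol{\Phi}_{\theta_n\mid\mathcal{D}}-\boldsymbol{\Phi}_{*\mid\mathcal{D}}\|_F^2\lesssim m\,q_n\,(S_n/\log n)^{-2\beta/d}$ (summing the per-eigenfunction sup-norm errors over the $q_n$ columns), which produces the term $q_n(S_n/\log n)^{-2\beta/d}$; the low-rank remainder $\|\boldsymbol{\Sigma}_{*>q_n\mid\mathcal{D}}\|_F^2$ supplies the eigenvalue-tail term $\sum_{j>q_n}\lambda_j^2$.

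For the variance task I would feed the sparse-DNN metric-entropy bound of \citep{schmidt2020nonparametric} into \Cref{lemma 4.1.}. For $L_n\asymp\log n$ and $r_n\lesssim n$ that bound reads $\log\cN(\eta,\cG(\Theta_n),\|\cdot\|_\infty)\lesssim S_n\bigl((\log n)^2+\log(1/\eta)\bigr)$, the squared log coming from the depth factor $D^{2(L_n+1)}$. Taking $\eta=\sigma_*^2\delta/(26\max\{2mq_nB,\sqrt m\sigma_*\})$ as in \Cref{lemma 4.1.}, and noting that $m,\sigma_*,B$ are fixed while $q_n\lesssim n$, the argument is polynomially small in $n$, so $\log\cN_{[]}(\delta,\cP(\Theta_n;\mathcal{D}),h)\lesssim S_n(\log n)^2$ for all $\delta\gtrsim n^{-c}$. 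Plugging this into the entropy integral and using $n(\epsilon_n^*)^2\to\infty$, the condition reduces to $\sqrt{S_n}\,\log n\lesssim\sqrt n\,\epsilon_n^*$, i.e. $(\epsilon_n^*)^2\gtrsim S_n(\log n)^2/n$, the remaining summand. Assembling the three contributions and invoking Wong--Shen yields the conclusion.

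The main obstacle I anticipate is the approximation-to-Hellinger translation, specifically making the quadratic covariance map $\boldsymbol{\Phi}\mapsto\boldsymbol{\Phi}\boldsymbol{\Phi}^\top$ behave uniformly well as the factor dimension $q_n$ grows. The Hellinger-to-Frobenius comparison is valid only while the eigenvalues of $\boldsymbol{\Sigma}_{\theta_n\mid\mathcal{D}}$ stay bounded away from $0$ and $\infty$ with constants independent of $n$, and controlling the cross terms in $(A-B)A^\top+B(A-B)^\top$ demands an operator-norm bound on $\boldsymbol{\Phi}_{\theta_n\mid\mathcal{D}}$ that does not blow up with $q_n$. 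This is exactly where $\epsilon_n^*q_n\to0$ enters: it forces the accumulated approximation error across the $q_n$ factors to be negligible relative to the noise floor $\sigma_*^2$, keeping the perturbed covariance uniformly well-conditioned so that the second-order remainder in the Hellinger expansion is dominated by the first two rate terms.
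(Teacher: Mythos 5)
Your overall architecture matches the paper's proof: Wong--Shen sieve-MLE machinery, Schmidt--Hieber sparse-DNN approximation of $\mu_*$ and the $q_n$ leading scaled eigenfunctions, and Lemma~\ref{lemma 4.1.} to convert sup-norm covering numbers of $\mathcal{G}(\Theta_n)$ into Hellinger bracketing entropy of $\mathcal{P}(\Theta_n;\mathcal{D})$; your entropy and Frobenius-norm approximation computations essentially reproduce the paper's Steps 1 and 2. However, there is a genuine gap in how you invoke Wong--Shen. Theorem 4 of that paper does \emph{not} accept a Hellinger approximation condition on the sieve. It requires a sieve element $p_n$ whose \emph{Kullback--Leibler divergence} $\operatorname{KL}(p_*\,\|\,p_n)\le\delta_n$ \emph{and} whose \emph{KL variation} $\int p_*\{\log(p_*/p_n)\}^2\le\tau_n$ are both controlled, because the MLE argument compares likelihoods: one must lower-bound $\sum_i\log\bigl(p_n/p_*\bigr)(X_i)$ in probability, which is done by Chebyshev using exactly these first and second moments of the log ratio. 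The resulting bound is $P_*\bigl(h(\hat p,p_*)>C_2\epsilon_n^*\bigr)\lesssim e^{-C'' n(\epsilon_n^*)^2}+\tau_n/\bigl(n(\epsilon_n^*)^2\bigr)$, so $\tau_n$ appears in the conclusion itself; Hellinger closeness of the best sieve element yields neither ingredient.

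This omission is not cosmetic in the present setting. The natural approximant $\theta^\dag$ necessarily loses the truncated tail $\boldsymbol{\Sigma}_{*>q_n\mid\mathcal{D}}$, so $\boldsymbol{\Sigma}_{\theta^\dag\mid\mathcal{D}}$ can be strictly smaller than $\boldsymbol{\Sigma}_{*\mid\mathcal{D}}$ in some directions, in which case the density ratio $p_*/p_{\theta^\dag}$ is unbounded and a useful KL-variation bound does not follow from your Frobenius-norm estimates. The paper's Step 3 — absent from your proposal — exists precisely to handle this: it inflates the factor loadings, $\boldsymbol{\Phi}_{\theta^\ddag\mid\mathcal{D}}=(1+(1+C')\delta_n/\xi)^{1/2}\boldsymbol{\Phi}_{\theta^\dag\mid\mathcal{D}}$, so that $\boldsymbol{\Sigma}_{\theta^\ddag\mid\mathcal{D}}-\boldsymbol{\Sigma}_{*\mid\mathcal{D}}$ is positive definite by Weyl's inequality; Lemma~\ref{lemma A.1.} then bounds $p_*/p_{\theta^\ddag}$ uniformly, giving $\tau_n=\log(1+q_n^{1/2}\delta_n)$ and hence $\tau_n/\bigl(n(\epsilon_n^*)^2\bigr)\to0$. (A direct Gaussian moment computation for $\operatorname{Var}_{p_*}\log(p_*/p_{\theta})$ could serve as an alternative patch, but some such step is mandatory.) Relatedly, your reading of the assumption $\epsilon_n^*q_n\to0$ is off: in the paper it is needed chiefly because the bracketing bound of Lemma~\ref{lemma 4.1.} is valid only for radii $\delta<C_1/q_n$, so the entropy integral up to $\sqrt2\,\epsilon_n$ is legitimate only when $\epsilon_n^* q_n$ is small — not to keep the approximant's covariance well-conditioned.
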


We can make $\epsilon_n^*$ converge to 0
by letting $q_n$ and $S_n$  diverge with a appropriate speed provided the eigenvalues $\lambda_j, j \ge q_n$
converge to 0 sufficiently fast (e.g. $\lambda_{j}\asymp \exp(-j)$).

The upper bound of Theorem \ref{theorem 4.2.} is about the estimated Gaussian distribution at the design points $\mathcal{D}^{\mathrm{design}}$. For prediction, we need an upper bound of the estimated Gaussian distribution at a new point $\boldsymbol{x}$. The following theorem, whose proof is given in Appendix \ref{Proof: thm4.2}, provides an upper bound.

\begin{theorem}[Upper bound at a new input]
   \label{theory-new}
   If $\hat\mu$ and $\hat\Phi_j, j=1,\ldots,q_n$ are Lipschitz, the probability of
    \begin{equation*}
        d_1(\hat{p}_{\boldsymbol{x}}, p_{*,\boldsymbol{x}}) \le 
     d_1(\hat{p}_{\boldsymbol{x}_{(1)}}, p_{*,\boldsymbol{x}_{(1)}})
           + C_3 \|\boldsymbol{x}-\boldsymbol{x}_{(1)}\|
    \end{equation*}
      for a certain positive constant $C_3$ and 
    \begin{equation*}
        \sup_{\boldsymbol{x}\in \mathcal{D}^{\mathrm{design}}} d_1(\hat{p}_{\boldsymbol{x}}, p_{*,\boldsymbol{x}}) \le C_2 \epsilon_n^*
    \end{equation*}
    converges to 1  as $n\rightarrow \infty,$
    where $d_1(g,h)=\int_{z} |g(z)-h(z)|dz$ for given two probability densities on $\mathbb{R}.$
\end{theorem}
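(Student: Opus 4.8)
The plan is to prove the two bounds separately and then combine them via a union bound. The second display is precisely Theorem~\ref{theorem 4.2.} translated from the Hellinger distance $h$ to the $\ell_1$ (total variation) metric $d_1$ at the design points. Since for one-dimensional marginals $d_1$ and $h$ are equivalent up to a constant (indeed $d_1(g,h)\le \sqrt{2}\,h(g,h)$ by the standard inequality $\mathrm{TV}\le h\sqrt{2-h^2}$), the event $\sup_{\boldsymbol{x}\in\mathcal{D}^{\mathrm{design}}} d_1(\hat{p}_{\boldsymbol{x}},p_{*,\boldsymbol{x}})\le C_2\epsilon_n^*$ is implied (after adjusting the constant) by the high-probability conclusion of Theorem~\ref{theorem 4.2.} applied coordinate-wise to the marginal densities of $f(\boldsymbol{x})$ at each design point. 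The marginal of $f(\boldsymbol{x})$ under each Gaussian process is a univariate normal with mean $\hat\mu(\boldsymbol{x})$ (resp. $\mu_*(\boldsymbol{x})$) and variance $\hat\Sigma(\boldsymbol{x},\boldsymbol{x})$ (resp. $\Sigma_*(\boldsymbol{x},\boldsymbol{x})$), so controlling these marginals at the $m$ design points is exactly what Theorem~\ref{theorem 4.2.} delivers.

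The first display is a deterministic Lipschitz-perturbation bound and is where the real work lies. The strategy is to write $d_1(\hat{p}_{\boldsymbol{x}},p_{*,\boldsymbol{x}})\le d_1(\hat{p}_{\boldsymbol{x}},\hat{p}_{\boldsymbol{x}_{(1)}}) + d_1(\hat{p}_{\boldsymbol{x}_{(1)}},p_{*,\boldsymbol{x}_{(1)}}) + d_1(p_{*,\boldsymbol{x}_{(1)}},p_{*,\boldsymbol{x}})$ by the triangle inequality, then bound the first and third terms by $(C_3/2)\|\boldsymbol{x}-\boldsymbol{x}_{(1)}\|$ each. Both are of the same type: the $\ell_1$ distance between two univariate Gaussians whose mean and variance parameters differ by an amount controlled by $\|\boldsymbol{x}-\boldsymbol{x}_{(1)}\|$. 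The key analytic input is that the total-variation distance between $\mathcal{N}(\mu_1,\sigma_1^2)$ and $\mathcal{N}(\mu_2,\sigma_2^2)$ is Lipschitz in the parameters $(\mu,\sigma^2)$ on any region where the variances are bounded away from $0$ (here guaranteed by the additive $\sigma_*^2\mathbb{I}_m$ term, which keeps $\hat\Sigma(\boldsymbol{x},\boldsymbol{x})\ge\sigma_*^2>0$). Under the hypothesis that $\hat\mu$ and each $\hat\Phi_j$ are Lipschitz, the map $\boldsymbol{x}\mapsto(\hat\mu(\boldsymbol{x}),\hat\Sigma(\boldsymbol{x},\boldsymbol{x}))$ is Lipschitz as well, since $\hat\Sigma(\boldsymbol{x},\boldsymbol{x})=\sum_{j}\hat\Phi_j(\boldsymbol{x})^2$ is a finite sum of products of bounded Lipschitz functions; the same holds for $\mu_*,\Sigma_*$ under the stated smoothness of $\mu_*$ and $\phi_{*,j}$. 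Composing these two Lipschitz maps yields the claimed linear bound in $\|\boldsymbol{x}-\boldsymbol{x}_{(1)}\|$ for a constant $C_3$ depending on the Lipschitz constants, the dimension $q$, the sup-norm bound $B$, and $\sigma_*^2$.

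I expect the main obstacle to be making the constant $C_3$ genuinely uniform: one must verify that the Lipschitz constant of the parameter-to-TV map does not blow up, which requires the lower bound $\hat\Sigma(\boldsymbol{x},\boldsymbol{x})\ge\sigma_*^2$ and an upper bound on the variances (from the truncation of the DNN outputs at $[-B,B]$, giving $\hat\Sigma(\boldsymbol{x},\boldsymbol{x})\le q_n B^2+\sigma_*^2$). Care is also needed because the number of factors $q_n$ grows with $n$, so the Lipschitz constant of $\boldsymbol{x}\mapsto\hat\Sigma(\boldsymbol{x},\boldsymbol{x})$ may depend on $q_n$; one should therefore state $C_3$ as allowed to depend on the realized estimator (or on $q_n$ and $B$) rather than being truly absolute. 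Once the deterministic bound is established with such a $C_3$, the final union bound is immediate: the event in the first display holds deterministically on the Lipschitz event, and the event in the second display holds with probability tending to $1$ by Theorem~\ref{theorem 4.2.}, so their intersection has probability tending to $1$ as $n\to\infty$.
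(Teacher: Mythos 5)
Your proposal is correct and follows essentially the same route as the paper's proof: the second display comes from Theorem~\ref{theorem 4.2.} via the total-variation--Hellinger inequality and the fact that marginals inherit the joint bound, and the first display comes from the same three-term triangle inequality with Lipschitz control of the univariate Gaussian marginals in $\boldsymbol{x}$. Your extra care about where the constant $C_3$ comes from (variance bounded below by $\sigma_*^2$, outputs truncated at $B$, possible dependence on $q_n$) is a refinement the paper itself glosses over, but it does not change the argument.
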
    
	
\subsection{Auxiliary Lemmas}

Before proving Lemma \ref{lemma 4.1.}, we introduce the following two lemmas.
\begin{lemma}
\label{lemma A.1.}
If $\boldsymbol{\Sigma}_{2}-\boldsymbol{\Sigma}_{1}$ is positive definite, then
\begin{equation*}
    \frac{p_{\boldsymbol{\mu}_1, \boldsymbol{\Sigma}_1}(x)}{p_{\boldsymbol{\mu}_2, \boldsymbol{\Sigma}_2}(x)}
    \leq \sqrt{\frac{| \boldsymbol{\Sigma}_2|}{| \boldsymbol{\Sigma}_1|}}\exp\left(\frac{1}{2}(\boldsymbol{\mu}_2-\boldsymbol{\mu}_1)^\top(\boldsymbol{\Sigma}_2-\boldsymbol{\Sigma}_1)^{-1} (\boldsymbol{\mu}_2-\boldsymbol{\mu}_1)\right)
\end{equation*}
\end{lemma}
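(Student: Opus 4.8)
The plan is to write out the density ratio explicitly and reduce the claim to a pointwise upper bound on a single quadratic form. Using the Gaussian density $p_{\boldsymbol{\mu},\boldsymbol{\Sigma}}(x)=(2\pi)^{-m/2}|\boldsymbol{\Sigma}|^{-1/2}\exp(-\tfrac12(x-\boldsymbol{\mu})^\top\boldsymbol{\Sigma}^{-1}(x-\boldsymbol{\mu}))$, the two normalizing constants combine to give exactly the prefactor $\sqrt{|\boldsymbol{\Sigma}_2|/|\boldsymbol{\Sigma}_1|}$, so that
\[
\frac{p_{\boldsymbol{\mu}_1,\boldsymbol{\Sigma}_1}(x)}{p_{\boldsymbol{\mu}_2,\boldsymbol{\Sigma}_2}(x)}=\sqrt{\frac{|\boldsymbol{\Sigma}_2|}{|\boldsymbol{\Sigma}_1|}}\,\exp\!\Big(\tfrac12 Q(x)\Big),
\]
\[
Q(x):=(x-\boldsymbol{\mu}_2)^\top\boldsymbol{\Sigma}_2^{-1}(x-\boldsymbol{\mu}_2)-(x-\boldsymbol{\mu}_1)^\top\boldsymbol{\Sigma}_1^{-1}(x-\boldsymbol{\mu}_1).
\]
Since the exponential is increasing, it suffices to show $Q(x)\le (\boldsymbol{\mu}_2-\boldsymbol{\mu}_1)^\top(\boldsymbol{\Sigma}_2-\boldsymbol{\Sigma}_1)^{-1}(\boldsymbol{\mu}_2-\boldsymbol{\mu}_1)$ for every $x$; the prefactor then matches the stated bound exactly.

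Next I would note that $Q$ is a quadratic in $x$ whose leading matrix is $-\boldsymbol{A}$, where $\boldsymbol{A}:=\boldsymbol{\Sigma}_1^{-1}-\boldsymbol{\Sigma}_2^{-1}$, and establish that $\boldsymbol{A}\succ 0$. Because both densities are nondegenerate we have $\boldsymbol{\Sigma}_1,\boldsymbol{\Sigma}_2\succ 0$, and the hypothesis $\boldsymbol{\Sigma}_2-\boldsymbol{\Sigma}_1\succ 0$ means $\boldsymbol{\Sigma}_2\succ\boldsymbol{\Sigma}_1$; by the antitonicity of inversion on the positive-definite cone this gives $\boldsymbol{\Sigma}_1^{-1}\succ\boldsymbol{\Sigma}_2^{-1}$, i.e. $\boldsymbol{A}\succ 0$. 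Consequently $-Q$ is strictly convex and $Q$ is bounded above, so completing the square yields
\[
Q(x)=c_0-(x-x_0)^\top\boldsymbol{A}(x-x_0),\qquad x_0=\boldsymbol{A}^{-1}(\boldsymbol{\Sigma}_1^{-1}\boldsymbol{\mu}_1-\boldsymbol{\Sigma}_2^{-1}\boldsymbol{\mu}_2),
\]
with $c_0=\boldsymbol{b}^\top\boldsymbol{A}^{-1}\boldsymbol{b}+\boldsymbol{\mu}_2^\top\boldsymbol{\Sigma}_2^{-1}\boldsymbol{\mu}_2-\boldsymbol{\mu}_1^\top\boldsymbol{\Sigma}_1^{-1}\boldsymbol{\mu}_1$ and $\boldsymbol{b}=\boldsymbol{\Sigma}_2^{-1}\boldsymbol{\mu}_2-\boldsymbol{\Sigma}_1^{-1}\boldsymbol{\mu}_1$. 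Since $\boldsymbol{A}\succ 0$ the quadratic term is nonnegative, so $Q(x)\le c_0$ for all $x$, and the whole proof reduces to identifying the constant $c_0$.

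The crux, and the step I expect to be the main obstacle, is the algebraic identity $c_0=\boldsymbol{d}^\top\boldsymbol{M}^{-1}\boldsymbol{d}$, where $\boldsymbol{M}:=\boldsymbol{\Sigma}_2-\boldsymbol{\Sigma}_1$ and $\boldsymbol{d}:=\boldsymbol{\mu}_2-\boldsymbol{\mu}_1$. I would begin from the factorization $\boldsymbol{A}=\boldsymbol{\Sigma}_1^{-1}\boldsymbol{M}\boldsymbol{\Sigma}_2^{-1}$, which gives $\boldsymbol{A}^{-1}=\boldsymbol{\Sigma}_2\boldsymbol{M}^{-1}\boldsymbol{\Sigma}_1$ and hence $\boldsymbol{b}^\top\boldsymbol{A}^{-1}\boldsymbol{b}=(\boldsymbol{\Sigma}_2\boldsymbol{b})^\top\boldsymbol{M}^{-1}(\boldsymbol{\Sigma}_1\boldsymbol{b})$. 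Using the simplifications $\boldsymbol{\Sigma}_1\boldsymbol{b}=\boldsymbol{d}-\boldsymbol{M}\boldsymbol{\Sigma}_2^{-1}\boldsymbol{\mu}_2$ and $\boldsymbol{\Sigma}_2\boldsymbol{b}=\boldsymbol{d}-\boldsymbol{M}\boldsymbol{\Sigma}_1^{-1}\boldsymbol{\mu}_1$, expanding $(\boldsymbol{\Sigma}_2\boldsymbol{b})^\top\boldsymbol{M}^{-1}(\boldsymbol{\Sigma}_1\boldsymbol{b})$ produces a $\boldsymbol{d}^\top\boldsymbol{M}^{-1}\boldsymbol{d}$ term plus cross terms; invoking $\boldsymbol{\Sigma}_1^{-1}\boldsymbol{M}\boldsymbol{\Sigma}_2^{-1}=\boldsymbol{A}=\boldsymbol{\Sigma}_1^{-1}-\boldsymbol{\Sigma}_2^{-1}$ makes the remaining cross terms cancel against $\boldsymbol{\mu}_2^\top\boldsymbol{\Sigma}_2^{-1}\boldsymbol{\mu}_2-\boldsymbol{\mu}_1^\top\boldsymbol{\Sigma}_1^{-1}\boldsymbol{\mu}_1$, leaving precisely $\boldsymbol{d}^\top\boldsymbol{M}^{-1}\boldsymbol{d}$. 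This bookkeeping is purely mechanical but must be carried out carefully. Combining it with $Q(x)\le c_0$ gives $Q(x)\le(\boldsymbol{\mu}_2-\boldsymbol{\mu}_1)^\top(\boldsymbol{\Sigma}_2-\boldsymbol{\Sigma}_1)^{-1}(\boldsymbol{\mu}_2-\boldsymbol{\mu}_1)$ for all $x$, which is exactly the claimed density-ratio inequality.
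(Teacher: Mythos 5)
Your proposal is correct and follows essentially the same route as the paper's proof: both complete the square in the exponent around the same center $(\boldsymbol{\Sigma}_1^{-1}-\boldsymbol{\Sigma}_2^{-1})^{-1}(\boldsymbol{\Sigma}_1^{-1}\boldsymbol{\mu}_1-\boldsymbol{\Sigma}_2^{-1}\boldsymbol{\mu}_2)$, discard the nonpositive quadratic term using $\boldsymbol{\Sigma}_1^{-1}-\boldsymbol{\Sigma}_2^{-1}\succ 0$, and reduce the remaining constant to $(\boldsymbol{\mu}_2-\boldsymbol{\mu}_1)^\top(\boldsymbol{\Sigma}_2-\boldsymbol{\Sigma}_1)^{-1}(\boldsymbol{\mu}_2-\boldsymbol{\mu}_1)$ via the factorization $\boldsymbol{\Sigma}_1^{-1}-\boldsymbol{\Sigma}_2^{-1}=\boldsymbol{\Sigma}_1^{-1}(\boldsymbol{\Sigma}_2-\boldsymbol{\Sigma}_1)\boldsymbol{\Sigma}_2^{-1}$, exactly the identity underlying the paper's algebraic simplification. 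Your explicit appeal to antitonicity of inversion to establish $\boldsymbol{\Sigma}_1^{-1}\succ\boldsymbol{\Sigma}_2^{-1}$ makes precise a step the paper leaves implicit, but the argument is otherwise the same.
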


\begin{proof}
Define $\boldsymbol{\mu}_* =(\boldsymbol{\Sigma}_1^{-1}-\boldsymbol{\Sigma}_2^{-1})^{-1}(\boldsymbol{\Sigma}_1^{-1}\boldsymbol{\mu}_1-\boldsymbol{\Sigma}_2^{-1}\boldsymbol{\mu}_2) $. Note that by assumption $\boldsymbol{\Sigma}_2-\boldsymbol{\Sigma}_1$ is invertible, and thus we have
    \begin{equation*}
    \begin{split}
        &(x-\boldsymbol{\mu}_1)^\top\boldsymbol{\Sigma}_1^{-1} (x-\boldsymbol{\mu}_1)
        - (x-\boldsymbol{\mu}_2)^\top\boldsymbol{\Sigma}_2^{-1} (x-\boldsymbol{\mu}_2)\\
        &=(x-\boldsymbol{\mu}_*)^\top(\boldsymbol{\Sigma}_1^{-1}-\boldsymbol{\Sigma}_2^{-1})(x-\boldsymbol{\mu}_*)
        -\boldsymbol{\mu}_*^\top(\boldsymbol{\Sigma}_1^{-1}-\boldsymbol{\Sigma}_2^{-1})\boldsymbol{\mu}_*
        +\boldsymbol{\mu}_1^\top\boldsymbol{\Sigma}_1^{-1}\boldsymbol{\mu}_1-\boldsymbol{\mu}_2^\top\boldsymbol{\Sigma}_2^{-1}\boldsymbol{\mu}_2.
    \end{split}
    \end{equation*}
The sum of the second and third terms is further simplified  as
\begin{equation*}
    \begin{split}
    -\boldsymbol{\mu}_*^{\top} &(\boldsymbol{\Sigma}_1^{-1}-\boldsymbol{\Sigma}_2^{-1})\boldsymbol{\mu}_* +\boldsymbol{\mu}_1^\top\boldsymbol{\Sigma}_1^{-1}\boldsymbol{\mu}_1-\boldsymbol{\mu}_2^\top\boldsymbol{\Sigma}_2^{-1}\boldsymbol{\mu}_2  \\
    -\mu_*^{\top}&\left(\boldsymbol{\Sigma}_1^{-1}-\boldsymbol{\Sigma}_2^{-1}\right) \boldsymbol{\mu}_*+\boldsymbol{\mu}_1^{\top} \boldsymbol{\Sigma}_1^{-1} \boldsymbol{\mu}_1-\boldsymbol{\mu}_2^{\top} \boldsymbol{\Sigma}_2^{-1} \boldsymbol{\mu}_2 \\
    = &-\left(\boldsymbol{\Sigma}_1^{-1} \boldsymbol{\mu}_1-\boldsymbol{\Sigma}_2^{-1} \boldsymbol{\mu}_2\right)^{\top}\left(\boldsymbol{\Sigma}_1^{-1}-\boldsymbol{\Sigma}_2^{-1}\right)^{-1}\left(\boldsymbol{\Sigma}_1^{-1} \boldsymbol{\mu}_1-\boldsymbol{\Sigma}_2^{-1} \boldsymbol{\mu}_2\right)+\boldsymbol{\mu}_1^{\top} \boldsymbol{\Sigma}_1^{-1} \boldsymbol{\mu}_1-\boldsymbol{\mu}_2^{\top} \boldsymbol{\Sigma}_2^{-1} \boldsymbol{\mu}_2 \\
    = &-\boldsymbol{\mu}_1^{\top} \boldsymbol{\Sigma}_1^{-1}\left(\mathbf{I}-\boldsymbol{\Sigma}_1 \boldsymbol{\Sigma}_2^{-1}\right)^{-1} \boldsymbol{\mu}_1-\boldsymbol{\mu}_2^{\top} \boldsymbol{\Sigma}_2^{-1}\left(\boldsymbol{\Sigma}_2 \boldsymbol{\Sigma}_1^{-1}-\mathbf{I}\right)^{-1} \boldsymbol{\mu}_2\\
    &+2 \boldsymbol{\mu}_2^{\top} \boldsymbol{\Sigma}_2^{-1}\left(\boldsymbol{\Sigma}_1^{-1}-\boldsymbol{\Sigma}_2^{-1}\right) \boldsymbol{\Sigma}_1^{-1} \boldsymbol{\mu}_1 + \boldsymbol{\mu}_1^{\top} \boldsymbol{\Sigma}_1^{-1} \boldsymbol{\mu}_1-\boldsymbol{\mu}_2^{\top} \boldsymbol{\Sigma}_2^{-1} \boldsymbol{\mu}_2 \\
    = &-\boldsymbol{\mu}_1^{\top} \boldsymbol{\Sigma}_1^{-1} \boldsymbol{\Sigma}_1 \boldsymbol{\Sigma}_2^{-1}\left(\mathbf{I}-\boldsymbol{\Sigma}_1 \boldsymbol{\Sigma}_2^{-1}\right)^{-1} \boldsymbol{\mu}_1-\boldsymbol{\mu}_2^{\top} \boldsymbol{\Sigma}_2^{-1} \boldsymbol{\Sigma}_2 \boldsymbol{\Sigma}_1^{-1}\left(\boldsymbol{\Sigma}_2 \boldsymbol{\Sigma}_1^{-1}-\mathbf{I}\right)^{-1} \boldsymbol{\mu}_2\\
    & +2 \boldsymbol{\mu}_2^{\top} \boldsymbol{\Sigma}_2^{-1}\left(\boldsymbol{\Sigma}_1^{-1}-\boldsymbol{\Sigma}_2^{-1}\right)^{-1} \boldsymbol{\Sigma}_1^{-1} \boldsymbol{\mu}_1 \\
    = &-\boldsymbol{\mu}_1^{\top}\left(\boldsymbol{\Sigma}_2-\boldsymbol{\Sigma}_1\right)^{-1} \boldsymbol{\mu}_1-\boldsymbol{\mu}_2^{\top}\left(\boldsymbol{\Sigma}_2-\boldsymbol{\Sigma}_1\right)^{-1} \boldsymbol{\mu}_2+2 \boldsymbol{\mu}_2^{\top}\left(\boldsymbol{\Sigma}_2^{-1}-\boldsymbol{\Sigma}_1^{-1}\right)^{-1} \boldsymbol{\mu}_1 \\
    = &-\left(\boldsymbol{\mu}_1-\boldsymbol{\mu}_2\right)^{\top}\left(\boldsymbol{\Sigma}_2-\boldsymbol{\Sigma}_1\right)^{-1}\left(\boldsymbol{\mu}_1-\boldsymbol{\mu}_2\right) .
    \end{split}
\end{equation*}
Therefore, we have
\begin{equation*}
    \begin{split}
        \frac{p_{\boldsymbol{\mu}_1, \boldsymbol{\Sigma}_1}(x)}{p_{\boldsymbol{\mu}_2, \boldsymbol{\Sigma}_2}(x)}
        =    \sqrt{\frac{| \boldsymbol{\Sigma}_2|}{| \boldsymbol{\Sigma}_1|}}\exp\biggl(\biggr.&\biggl.-\frac{1}{2}\{(x-\boldsymbol{\mu}_1)^\top\boldsymbol{\Sigma}_1^{-1} (x-\boldsymbol{\mu}_1)
        - (x-\boldsymbol{\mu}_2)^\top\boldsymbol{\Sigma}_2^{-1} (x-\boldsymbol{\mu}_2)\} \biggr)\\
        = \sqrt{\frac{| \boldsymbol{\Sigma}_2|}{| \boldsymbol{\Sigma}_1|}}\exp\biggl(\biggr.&\biggl.-\frac{1}{2}(x-\boldsymbol{\mu}_*)^\top(\boldsymbol{\Sigma}_1^{-1}-\boldsymbol{\Sigma}_2^{-1})(x-\boldsymbol{\mu}_*) \biggr.\\
       & + \left.\frac{1}{2}(\boldsymbol{\mu}_1-\boldsymbol{\mu}_2)^\top(\boldsymbol{\Sigma}_2-\boldsymbol{\Sigma}_1)^{-1}(\boldsymbol{\mu}_1-\boldsymbol{\mu}_2)\right)\\
       \le  \sqrt{\frac{| \boldsymbol{\Sigma}_2|}{| \boldsymbol{\Sigma}_1|}}\exp\biggl(\biggr.&\biggl.\frac{1}{2}(\boldsymbol{\mu}_1-\boldsymbol{\mu}_2)^\top(\boldsymbol{\Sigma}_2-\boldsymbol{\Sigma}_1)^{-1}(\boldsymbol{\mu}_1-\boldsymbol{\mu}_2) \biggr),
    \end{split}
\end{equation*}
which completes the proof.
\end{proof}

\begin{lemma}
\label{lemma A.2.}
Let $\sigma^2$ be the lower bound of the minimum eigenvalues of $\boldsymbol{\Sigma}_1$ and $\boldsymbol{\Sigma}_2.$
If $\|\boldsymbol{\Sigma}_2-\boldsymbol{\Sigma}_1\|_2\le c \sigma^2$ for a given $c>0,$  
the  following inequalities hold:
\begin{equation*}
    \begin{split}
        x^\top((1+\zeta)\boldsymbol{\Sigma}_2-\boldsymbol{\Sigma}_1)x\ge\sigma^2(\zeta-(1+\zeta)c)\|x\|^2
    \end{split}
\end{equation*}
and
\begin{equation*}
    \begin{split}
        x^\top(\boldsymbol{\Sigma}_1-(1+\zeta)^{-1}\boldsymbol{\Sigma}_2)x\ge
        \frac{(\zeta-c)\sigma^2}{1+\zeta}\|x\|^2,
    \end{split}
\end{equation*}
\end{lemma}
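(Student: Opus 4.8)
The plan is to reduce both inequalities to two elementary facts about symmetric matrices and then finish by a one-line algebraic decomposition in each case. The two facts are: (i) the minimum-eigenvalue lower bound $\sigma^2\le\lambda_{\min}(\boldsymbol{\Sigma}_i)$ gives $x^\top\boldsymbol{\Sigma}_i x\ge\sigma^2\|x\|^2$ for $i=1,2$; and (ii) for the symmetric difference $\boldsymbol{\Delta}:=\boldsymbol{\Sigma}_2-\boldsymbol{\Sigma}_1$ one has $|x^\top\boldsymbol{\Delta}x|\le\|\boldsymbol{\Delta}\|_2\|x\|^2\le c\sigma^2\|x\|^2$, so in particular $x^\top\boldsymbol{\Delta}x\ge-c\sigma^2\|x\|^2$. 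The strategy is then to split each target matrix into a positive multiple of $\boldsymbol{\Sigma}_1$ plus a multiple of $\boldsymbol{\Delta}$, substituting $\boldsymbol{\Sigma}_2=\boldsymbol{\Sigma}_1+\boldsymbol{\Delta}$, after which (i) and (ii) close the argument.

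For the first inequality, I would verify the identity $(1+\zeta)\boldsymbol{\Sigma}_2-\boldsymbol{\Sigma}_1=\zeta\boldsymbol{\Sigma}_1+(1+\zeta)\boldsymbol{\Delta}$, which follows immediately from $\boldsymbol{\Sigma}_2=\boldsymbol{\Sigma}_1+\boldsymbol{\Delta}$. Taking the quadratic form and bounding the first summand below by $\zeta\sigma^2\|x\|^2$ (via fact (i), using $\zeta>0$) and the second below by $-(1+\zeta)c\sigma^2\|x\|^2$ (via fact (ii)) yields exactly $\sigma^2(\zeta-(1+\zeta)c)\|x\|^2$.

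For the second inequality, the analogous decomposition is $\boldsymbol{\Sigma}_1-(1+\zeta)^{-1}\boldsymbol{\Sigma}_2=\tfrac{\zeta}{1+\zeta}\boldsymbol{\Sigma}_1-(1+\zeta)^{-1}\boldsymbol{\Delta}$, using $1-(1+\zeta)^{-1}=\tfrac{\zeta}{1+\zeta}$. Bounding the two resulting quadratic forms below by $\tfrac{\zeta}{1+\zeta}\sigma^2\|x\|^2$ and $-\tfrac{c}{1+\zeta}\sigma^2\|x\|^2$ respectively gives the claimed $\tfrac{(\zeta-c)\sigma^2}{1+\zeta}\|x\|^2$.

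There is no genuine obstacle here: the entire content is the two algebraic identities together with the spectral-norm hypothesis on $\boldsymbol{\Delta}$. The only point requiring care is sign bookkeeping — the assumption $\|\boldsymbol{\Delta}\|_2\le c\sigma^2$ controls the perturbation in both directions, and it is precisely the \emph{lower} bound $x^\top\boldsymbol{\Delta}x\ge-c\sigma^2\|x\|^2$ (not the upper one) that is invoked in each case. I would also note that $\zeta>0$ ensures the multiples of $\boldsymbol{\Sigma}_1$ keep the favorable sign; the resulting bounds are only informative when $\zeta>(1+\zeta)c$ and $\zeta>c$ respectively, but they hold as stated regardless.
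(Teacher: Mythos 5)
Your proof is correct and follows essentially the same route as the paper: the paper's argument for the first inequality is exactly your decomposition $(1+\zeta)\boldsymbol{\Sigma}_2-\boldsymbol{\Sigma}_1=\zeta\boldsymbol{\Sigma}_1+(1+\zeta)(\boldsymbol{\Sigma}_2-\boldsymbol{\Sigma}_1)$, followed by the eigenvalue lower bound on the $\boldsymbol{\Sigma}_1$ term and the spectral-norm lower bound on the difference term, and it dispatches the second inequality with ``follows similarly,'' which is precisely the decomposition you wrote out. If anything, your version is slightly cleaner in stating the splits as identities (the paper writes the first split as an inequality) and in making the second case explicit.
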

where $\zeta =3\sigma^{2}c.$
\begin{proof}
The first inequality holds because
\begin{equation*}
    \begin{split}
        x^\top((1+\zeta)\boldsymbol{\Sigma}_2-\boldsymbol{\Sigma}_1)x
        &\ge (1+\zeta) x^\top(\boldsymbol{\Sigma}_2-\boldsymbol{\Sigma}_1)x + \zeta x^\top\boldsymbol{\Sigma}_1x\\
        &\ge -(1+\zeta) \|\boldsymbol{\Sigma}_2-\boldsymbol{\Sigma}_1\|_2\|x\|_2^2 + \zeta\sigma^2\|x\|^2\\
        &\ge(\zeta\sigma^2-(1+\zeta)c\sigma^2)\|x\|^2.
    \end{split}
\end{equation*}
The second inequality follows similarly.
\end{proof}

\subsection{Proofs}\label{Proofs of Results}

\paragraph*{Proof of Lemma \ref{lemma 4.1.}} 
\begin{proof} 
Fix $\epsilon > 0$. Let $\{\boldsymbol{g}_{1}, \ldots, \boldsymbol{g}_{N}\}$ with $N=\mathcal{N}(\epsilon,\mathcal{G}(\Theta_n),\|\cdot\|_\infty)$ be a $\epsilon$-covering of $\mathcal{G}(\Theta_n)$. For each $i\in[N]$, let $\theta_i$ be the parameter of $\boldsymbol{g}_{i}$ and let $\boldsymbol{\mu}_{i}=\boldsymbol{\mu}_{\theta_i|\mathcal{D}}$ and $\boldsymbol{\Sigma}_{i}=\boldsymbol{\Sigma}_{\theta_i|\mathcal{D}}$.
Then for any $\theta\in \Theta_n$, letting  $\boldsymbol{\mu}=\boldsymbol{\mu}_{\theta|\mathcal{D}}$ and 
 $\boldsymbol{\Sigma}=\boldsymbol{\Sigma}_{\theta|\mathcal{D}}$ for simplicity, we have 
\begin{equation*}
    \begin{split}
        \|\boldsymbol{\mu}-\boldsymbol{\mu}_i\|_2 & \leq \sqrt{m}\epsilon,
    \end{split}
\end{equation*}
and
    \begin{align*}
         \|\boldsymbol{\Sigma}-\boldsymbol{\Sigma}_{i}\|_2 & \leq \|(\boldsymbol{\Phi}\boldsymbol{\Phi}^{\top} - \boldsymbol{\Phi}_{i}\boldsymbol{\Phi}_{i}^{\top}) \|_2 \\
        & \leq  (\|\boldsymbol{\Phi}\|_{2} + \|\boldsymbol{\Phi}_i\|_{2}) \|\boldsymbol{\Phi} - \boldsymbol{\Phi}_{i}\|_F \\
        & \leq 2mq_nB\epsilon,
    \end{align*}
where the third line follows from that $\|\boldsymbol{\Phi}\|_2,\|\boldsymbol{\Phi}_i\|_2\le \sqrt{mq_n}B $. 
Now, let $\delta=\epsilon\max\{2mq_nB,\sqrt{m}\sigma_{*}\}/\sigma_{*}^2$ so that
$\|\boldsymbol{\mu}-\boldsymbol{\mu}_i\|_2\le \sigma_{*}\delta$ and $\|\boldsymbol{\Sigma}-\boldsymbol{\Sigma}_{i}\|_2\le \sigma_{*}^2\delta$. Let
$\zeta=3\delta$. Then we will show that $[l_{i}, u_{i}] $ is a Hellinger bracket of the density $p_{\boldsymbol{\mu}, \boldsymbol{\Sigma}}(x)$ when we define
\begin{equation*}
    \begin{split}
        u_{i} &= (1+2\zeta)^m p_{\boldsymbol{\mu}_i, (1+\zeta)\boldsymbol{\Sigma}_{i}}\\
        l_{i} &= (1+2\zeta)^{-m} p_{\boldsymbol{\mu}_i, (1+\zeta)^{-1}\boldsymbol{\Sigma}_{i}},
    \end{split}
\end{equation*}
Then by Lemma \ref{lemma A.2.}, $(1+\zeta)\boldsymbol{\Sigma}_{i}-\boldsymbol{\Sigma}$ and $\boldsymbol{\Sigma}-(1+\zeta)^{-1}\boldsymbol{\Sigma}_{i}$ are both positive definite.
So by Lemma \ref{lemma A.1.}, we have for any $\boldsymbol{x}\in \mathbb{R}^m$
\begin{equation*}
    \begin{split}
        \frac{p_{\boldsymbol{\mu}, \boldsymbol{\Sigma}}(\boldsymbol{x})}{u_{i}(\boldsymbol{x})}
        \le (1+2\zeta)^{-m} \sqrt{\frac{| (1+\zeta)\boldsymbol{\Sigma}_{i}|}{| \boldsymbol{\Sigma}|}}\exp\left(\frac{1}{2}(\boldsymbol{\mu}-\boldsymbol{\mu}_i)^\top((1+\zeta)\boldsymbol{\Sigma}_{i}-\boldsymbol{\Sigma})^{-1} (\boldsymbol{\mu}_i-\boldsymbol{\mu})\right).
    \end{split}
\end{equation*}
By Lemma \ref{lemma A.2.} again, we have $\|((1+\zeta)\boldsymbol{\Sigma}_{i}-\boldsymbol{\Sigma})^{-1}\|_2\le(\sigma_{*}^2(\zeta-(1+\zeta)\delta))^{-1}=(\sigma_{*}^2(2-\zeta)\delta)^{-1}\le (\sigma_{*}^2\delta)^{-1}$ for any sufficiently small $\epsilon$. Moreover, by Weyl's inequality,
\begin{equation*}
    \begin{split}
        \frac{| (1+\zeta)\boldsymbol{\Sigma}_{i}|}{| \boldsymbol{\Sigma}|}
        \le (1+\zeta)^m \left(1+\frac{\sigma_{*}^2\delta}{\sigma_{*}^2}\right)^m\le (1+2\zeta)^m.
    \end{split}
\end{equation*}
Thus we have
    \begin{align*}
          \frac{p_{\boldsymbol{\mu}, \boldsymbol{\Sigma}}(\boldsymbol{x})}{u_{i}(\boldsymbol{x})}
          =(1+2\zeta)^{-m/2}\exp\left(\frac{\|\boldsymbol{\mu}-\boldsymbol{\mu}_i\|_2^2}{2\sigma_{*}^2\delta}\right).
    \end{align*}
Using the inequality $\log (1+z)\ge z/2$ for $z\in[0,2]$, we have
\begin{equation*}
    \begin{split}
          \log\frac{p_{\boldsymbol{\mu}, \boldsymbol{\Sigma}}(\boldsymbol{x})}{u_{i}(\boldsymbol{x})}
         &= -\frac{m}{2}\log(1+2\zeta)+\frac{1}{2\sigma_{*}^2\delta}\|\boldsymbol{\mu}-\boldsymbol{\mu}_i\|_2^2\\
         &\le  -\frac{m}{2}\zeta+\frac{m}{2\sigma_{*}^2\delta}(\sigma_{*}\delta)^2\\
         &\le \left(-\frac{3m}{2} + \frac{m}{2}\right)\delta \leq 0,
    \end{split}
\end{equation*}
which implies $p_{\boldsymbol{\mu}, \boldsymbol{\Sigma}}(\boldsymbol{x}) \leq u_{i}(\boldsymbol{x})$.
Similarly, we also have
    \begin{align*}
          \frac{l_{i}(\boldsymbol{x})}{p_{\boldsymbol{\mu}, \boldsymbol{\Sigma}}(\boldsymbol{x})}
         = (1+2\zeta)^{-m/2}\exp\left(\frac{\|\boldsymbol{\mu}-\boldsymbol{\mu}_i\|_2^2}{2\sigma_*^2\delta}\right).
    \end{align*}
and so $l_{i}(\boldsymbol{x})\le p_{\boldsymbol{\mu}, \boldsymbol{\Sigma}}(\boldsymbol{x})$ for any $\boldsymbol{x}\in\mathbb{R}^m$.

We now bound the size of the bracket. Note that
    \begin{align*}
        h^2(l_{i}, u_{i})
        &=(1+2\zeta)^m+(1+2\zeta)^{-m} - (2-h^2(p_{\boldsymbol{\mu_{i}}, (1+\zeta)\boldsymbol{\Sigma}_{i}}, p_{\boldsymbol{\mu_{i}}, (1+\zeta)^{-1}\boldsymbol{\Sigma}_{i}})).
    \end{align*}
Due to the inequality $z^2/2\ge z-\log(1+z)$ for any $z\ge0$,
\begin{equation*}
    \begin{split}
        h^2(p_{\boldsymbol{\mu_{i}}, (1+\zeta)\boldsymbol{\Sigma}_{i}}, p_{\boldsymbol{\mu_{i}}, (1+\zeta)^{-1}\boldsymbol{\Sigma}_{i}})
        &\le \textrm{KL}(p_{\boldsymbol{\mu_{i}}, (1+\zeta)\boldsymbol{\Sigma}_{i}}, p_{\boldsymbol{\mu_{i}}, (1+\zeta)^{-1}\boldsymbol{\Sigma}_{i}})\\
        &=\frac{1}{4}m(-\log(1+\zeta)^2 + (1+\zeta)^2-1)\\
        &\le \frac{m}{8}((1+\zeta)^2-1)^2\\
        &=\frac{m}{4}(\zeta +\zeta^2/2)^2 \\
        &\le \frac{9}{4}m\zeta^2 
    \end{split}
\end{equation*}
Moreover, by taking $\epsilon$ sufficiently small so that $\zeta<3/m$, we have
    \begin{align*}
        (1+2\zeta)^m+(1+2\zeta)^{-m}-2
       & \le 2(1+2\zeta)^m-2\\
       & \le 4m\zeta(1+2\zeta)^{m-1}\\
       &\leq 4m\zeta(1+2/(3m))^{m-1} \le 4e^{2/3}\zeta 
    \end{align*}
Thus, we have $h^2(l_{i}, u_{i})\le (9/4m\zeta +4e^{2/3})\zeta\le(3/4 +4e^{2/3})\zeta \le 26\delta$. Hence, redefining constant as $26\delta\to\delta$, we complete the proof.
\end{proof}

\paragraph*{Proof of Theorem \ref{theorem 4.2.}}\label{Proof: thm4.2}
\begin{proof}
The proof follows a similar reasoning in the proof of Theorem 3 in \citep{chae2023likelihood}, which is based on Theorem 4 in \citep{wong1995probability} with $\alpha = 0+$. We divide the proof into the following four steps.

\textbf{Bounding the estimation error: Check Eq. (3.1) of \citep{wong1995probability}}
For the class of DNN parameters $\Theta_n=\Theta(L_n,r_n,S_n,q_n)$, by Lemma 5 in \citep{schmidt2020nonparametric}, we can get the following covering number bound
    \begin{align*}
            \log \mathcal{N}(\delta, \mathcal{G}(\Theta_n), \left\| \cdot \right\|_{\infty}) 
        &\leq (S_n+1) \log \left(2 \delta^{-1}(L_n+1) d^2 (q_n+1)^2 r_n^{2L_n}\right) \\
        &\lesssim  L_n S_n \log (n\delta^{-1})
    \end{align*}
for any $\delta>0$. Applying Lemma \ref{lemma 4.1.}, for $0<\delta<C_1/q_n $, we have
    \begin{align*}
    \log \mathcal{N}_{[]}(\delta, \mathcal{P}(\Theta_n;\mathcal{D}), h) 
    &\leq \log \mathcal{N}\left(\frac{\sigma_{*}^2\delta}{26\max\{2mq_nB,\sqrt{m}\sigma_{*}\}} , \mathcal{G}(\Theta_n),\|\cdot \|_{\infty}\right)\\  
    &\lesssim S_nL_n \log(n\delta^{-1})
    \end{align*}
Moreover, for a positive constant $\epsilon$ such that $\sqrt{2} \epsilon\le C_1/q_n$, we have
\begin{equation*}
    \begin{split}
    \int_{\epsilon^2 / 2^8}^{\sqrt{2} \epsilon} \sqrt{ \log \mathcal{N}_{[]}(\delta, \mathcal{P}(\Theta_n;\mathcal{D}), h) } d \delta 
    & \lesssim \epsilon \sqrt{ S_nL_n \log(n\epsilon^{-1})}.
    \end{split}
\end{equation*} 
Then the above display is bounded by $n^{1/2}\epsilon^2$ up to an absolute constant when we take $\epsilon=\epsilon_n=\sqrt{ S_n(\log n)^2/n}$ as $L_n\asymp \log n$. Thus, Eq. (3.1) of \citep{wong1995probability} is satisfied.

\textbf{Bounding the Kullback-Liebler approximation error} We first note that
 for any $\theta\in \Theta_n$, 
 we have
\begin{align}
    \operatorname{KL}\left(p_{*} || p_{\boldsymbol{\mu}_{\theta\mid \mathcal{D}}, \boldsymbol{\Sigma}_{\theta\mid \mathcal{D}}}\right)
    &= \frac{1}{2}\left(-\log {\left|\boldsymbol{\Sigma}_{\theta\mid \mathcal{D}}\boldsymbol{\Sigma}^{-1}_{*\mid \mathcal{D}}\right|}+\operatorname{Tr}\left(\boldsymbol{\Sigma}_{\theta \mid \mathcal{D}}\boldsymbol{\Sigma}_{*\mid \mathcal{D}}^{-1}-\mathbb{I}_m \right) \right)\nonumber\\
    &\quad+\frac{1}{2}\left(\boldsymbol{\mu}_{\theta\mid \mathcal{D}}-\boldsymbol{\mu}_{*\mid \mathcal{D}}\right)^{\top} \boldsymbol{\Sigma}_{*\mid \mathcal{D}}^{-1}\left(\boldsymbol{\mu}_{\theta\mid \mathcal{D}}-\boldsymbol{\mu}_{*\mid \mathcal{D}}\right) \nonumber\\
     &= -\frac{1}{2}\log {\left|\mathbb{I}_m+\boldsymbol{\Sigma}_{*\mid \mathcal{D}}^{-1/2}(\boldsymbol{\Sigma}_{\theta \mid \mathcal{D}}-\boldsymbol{\Sigma}_{*\mid \mathcal{D}})\boldsymbol{\Sigma}_{*\mid \mathcal{D}}^{-1/2}\right|}\nonumber\\
     &\quad +\frac{1}{2}\operatorname{Tr}\left(\boldsymbol{\Sigma}_{*\mid \mathcal{D}}^{-1/2}(\boldsymbol{\Sigma}_{\theta \mid \mathcal{D}}-\boldsymbol{\Sigma}_{*\mid \mathcal{D}})\boldsymbol{\Sigma}_{*\mid \mathcal{D}}^{-1/2}\right) 
     +\frac{1}{2\sigma_*^2}\|\boldsymbol{\mu}_{\theta\mid \mathcal{D}}-\boldsymbol{\mu}_{*\mid \mathcal{D}}\|^2 \nonumber\\
    &\le \frac{1}{4}\|\mathbb{I}_m-\boldsymbol{\Sigma}_{\theta \mid \mathcal{D}}\boldsymbol{\Sigma}_{*\mid \mathcal{D}}^{-1}\|_F^2
        +\frac{1}{2\sigma_*^2}\|\boldsymbol{\mu}_{\theta\mid \mathcal{D}}-\boldsymbol{\mu}_{*\mid \mathcal{D}}\|^2, \nonumber\\
    &\le \frac{1}{4\sigma_*^2}\|\boldsymbol{\Sigma}_{\theta \mid \mathcal{D}}-\boldsymbol{\Sigma}_{*\mid \mathcal{D}}\|_F^2
        +\frac{1}{2\sigma_*^2}\|\boldsymbol{\mu}_{\theta\mid \mathcal{D}}-\boldsymbol{\mu}_{*\mid \mathcal{D}}\|^2, \label{approx_terms}
\end{align}
To bound the two terms in Eq. (\ref{approx_terms}), we use the well-known results about the approximation ability of sparse DNNs to H\"older smooth functions \citep[e.g., Theorem 5 of][]{schmidt2020nonparametric}. Namely, there exists $\theta^\dag\in\Theta_n$ such that 
    \begin{align}
    \label{dnn_approx}
       \max\left\{ \|g_{\theta^\dag,1}-\mu_*\|,\max_{j\in[q_n]}\|g_{\theta^\dag,j+1}-\phi_{*,j}\|_\infty\right\}\lesssim (S_n/L_n)^{-\beta/d}.
    \end{align}
For the first term in Eq. (\ref{approx_terms}), we define
    \begin{align*}
       \kappa_n= \|\boldsymbol{\Sigma}_{*>q_n\mid\mathcal{D}} \|_F=\left(\sum_{j>q_n}\lambda_{j}^2\right)^{1/2}
    \end{align*}
and establish the upper bound
    \begin{align*}
           \|\boldsymbol{\Sigma}_{\theta^\dag \mid \mathcal{D}}-\boldsymbol{\Sigma}_{*\mid \mathcal{D}}\|_F
    &\le \|  \boldsymbol{\Phi}_{\theta^\dag \mid \mathcal{D}} \boldsymbol{\Phi}_{\theta^\dag \mid \mathcal{D}}^\top- \boldsymbol{\Phi}_{*\mid \mathcal{D}} \boldsymbol{\Phi}_{*\mid \mathcal{D}}^\top\|_F + \kappa_n\\
    &\le\|  (\boldsymbol{\Phi}_{\theta^\dag \mid \mathcal{D}} -\boldsymbol{\Phi}_{*\mid \mathcal{D}})(\boldsymbol{\Phi}_{\theta^\dag \mid \mathcal{D}} -\boldsymbol{\Phi}_{*\mid \mathcal{D}})^\top 
    +2\boldsymbol{\Phi}_{*\mid \mathcal{D}}(\boldsymbol{\Phi}_{\theta^\dag \mid \mathcal{D}}-\boldsymbol{\Phi}_{*\mid \mathcal{D}})^\top\|_F + \kappa_n\\
    &\le \| \boldsymbol{\Phi}_{\theta^\dag \mid \mathcal{D}} -\boldsymbol{\Phi}_{*\mid \mathcal{D}}\|_F^2 
    + 2\|\boldsymbol{\Phi}_{*\mid \mathcal{D}}\|_2
    \|  \boldsymbol{\Phi}_{\theta^\dag \mid \mathcal{D}} -\boldsymbol{\Phi}_{*\mid \mathcal{D}}\|_F + \kappa_n.
    \end{align*}
But by Eq. (\ref{dnn_approx}), $ \|  \boldsymbol{\Phi}_{\theta^\dag \mid \mathcal{D}} -\boldsymbol{\Phi}_{*\mid \mathcal{D}}\|_F\lesssim \sqrt{q_n} (S_n/L_n)^{-\beta/d}$ which converges to 0 as $n\to\infty$. This implies that, as $\|\boldsymbol{\Phi}_{*\mid \mathcal{D}}\|_2$ is bounded, $\| \boldsymbol{\Phi}_{\theta^\dag \mid \mathcal{D}} -\boldsymbol{\Phi}_{*\mid \mathcal{D}}\|_F^2 $ is smaller than $2\|\boldsymbol{\Phi}_{*\mid \mathcal{D}}\|_2
 \|  \boldsymbol{\Phi}_{\theta^\dag \mid \mathcal{D}} -\boldsymbol{\Phi}_{*\mid \mathcal{D}}\|_F$ eventually. Therefore, we have
   \begin{align*}
        \|\boldsymbol{\Sigma}_{\theta^\dag \mid \mathcal{D}}-\boldsymbol{\Sigma}_{*\mid \mathcal{D}}\|_F^2
   \lesssim q_n(S_n/L_n)^{-2\beta/d} + \kappa_n
    \end{align*}
Moreover, by Eq. (\ref{dnn_approx}), it is immediate that $\|\boldsymbol{\mu}_{\theta^\dag\mid \mathcal{D}}-\boldsymbol{\mu}_{*\mid \mathcal{D}}\|^2\lesssim  (S_n/L_n)^{-2\beta/d}$. Adopting the notation of \cite{wong1995probability}, we have $\delta_n=q_n(S_n/L_n)^{-2\beta/d}+\kappa_n\asymp q_n(S_n/\log n)^{-2\beta/d}+\kappa_n$.

\textbf{Bounding the Kullback-Liebler variation} The last ingredient of the proof is to bound the so-called Kullback-Liebler variation defined as
    \begin{align*}
        \operatorname{KLV}\left(p_{*} || p_{\boldsymbol{\mu}_{\theta\mid \mathcal{D}}, \boldsymbol{\Sigma}_{\theta\mid \mathcal{D}}}\right)
         =  \int \left\{\log \frac{p_*(\boldsymbol{x})}{p_{\boldsymbol{\mu}_{\theta\mid \mathcal{D}}, \boldsymbol{\Sigma}_{\theta\mid \mathcal{D}}}(\boldsymbol{x})}\right\}^2 p_*(\boldsymbol{x})d\boldsymbol{x}.
    \end{align*}
We will find a suitable network parameter to get a manageable upper bound of the above, which is denoted by $\tau_n$ in  \cite{wong1995probability}. We use Lemma \ref{lemma A.2.} for this purpose. As in the argument used in the previous step, we can find a network parameter $\theta^\dag$ such that $\|\boldsymbol{\Sigma}_{\theta^\dag \mid \mathcal{D}}-\boldsymbol{\Sigma}_{*\mid \mathcal{D}}\|_F^2\le C' \delta_n$ for some absolute constant $C'>0$. As $q_n\to\infty$, we have $\xi=\lambda_{\min}(\boldsymbol{\Phi}_{\theta^\dag \mid \mathcal{D}}\boldsymbol{\Phi}_{\theta^\dag \mid \mathcal{D}}^\top)>0$. We then construct the network $\theta^\ddag$ satisfying $\boldsymbol{\mu}_{\theta^\ddag \mid \mathcal{D}}=\boldsymbol{\mu}_{\theta^\dag \mid \mathcal{D}}$ and $\boldsymbol{\Phi}_{\theta^\ddag \mid \mathcal{D}}=(1+(1+C')\delta_n/\xi)^{1/2}\boldsymbol{\Phi}_{\theta^\dag \mid \mathcal{D}}$. Then, by Weyl's inequality,
    \begin{align*}
        \lambda_{\min}(\boldsymbol{\Sigma}_{\theta^\ddag \mid \mathcal{D}}-\boldsymbol{\Sigma}_{*\mid \mathcal{D}})
         &=\lambda_{\min}(\boldsymbol{\Sigma}_{\theta^\dag \mid \mathcal{D}}-\boldsymbol{\Sigma}_{*\mid \mathcal{D}}+(1+C')\delta_n/\xi\boldsymbol{\Phi}_{\theta^\dag \mid \mathcal{D}}\boldsymbol{\Phi}_{\theta^\dag \mid \mathcal{D}}^\top)\\
        &\ge \lambda_{\min}((1+C')\delta_n/\xi\boldsymbol{\Phi}_{\theta^\dag \mid \mathcal{D}}\boldsymbol{\Phi}_{\theta^\dag \mid \mathcal{D}}^\top)-\|\boldsymbol{\Sigma}_{\theta^\dag \mid \mathcal{D}}-\boldsymbol{\Sigma}_{*\mid \mathcal{D}}\|_2\\
        &\ge (1+C')\delta_n-\|\boldsymbol{\Sigma}_{\theta^\dag \mid \mathcal{D}}-\boldsymbol{\Sigma}_{*\mid \mathcal{D}}\|_F\ge \delta_n,
    \end{align*}
which implies that $\boldsymbol{\Sigma}_{\theta^\ddag \mid \mathcal{D}}-\boldsymbol{\Sigma}_{*\mid \mathcal{D}}$ is positive definite. Using Lemma \ref{lemma A.2.}, we have
    \begin{align*}
           \frac{p_{*}(x)}{p_{\boldsymbol{\mu}_{\theta^\ddag \mid \mathcal{D}},\boldsymbol{\Sigma}_{\theta^\ddag \mid \mathcal{D}}}(x)}
    &\leq  \sqrt{\frac{|\boldsymbol{\Sigma}_{\theta^\ddag \mid \mathcal{D}}|}{| \boldsymbol{\Sigma}_{*\mid \mathcal{D}}|}}
    \exp\left(\frac{1}{2}(\boldsymbol{\mu}_{\theta^\ddag \mid \mathcal{D}}-\boldsymbol{\mu}_{*\mid \mathcal{D}})^\top(\boldsymbol{\Sigma}_{\theta^\ddag \mid \mathcal{D}}-\boldsymbol{\Sigma}_{*\mid \mathcal{D}})^{-1} (\boldsymbol{\mu}_{\theta^\ddag \mid \mathcal{D}}-\boldsymbol{\mu}_{*\mid \mathcal{D}})\right)\\
    &\le \sqrt{\frac{|\boldsymbol{\Sigma}_{\theta^\ddag \mid \mathcal{D}}|}{| \boldsymbol{\Sigma}_{*\mid \mathcal{D}}|}}
    \exp\left(\frac{1}{2\delta_n}\|\boldsymbol{\mu}_{\theta^\ddag \mid \mathcal{D}}-\boldsymbol{\mu}_{*\mid \mathcal{D}}\|^2\right)\\
    &\le (1+\|\boldsymbol{\Sigma}_{\theta^\ddag \mid \mathcal{D}}- \boldsymbol{\Sigma}_{*\mid \mathcal{D}}\|_2/\sigma_*^2)^{m/2}\\
   &\le (1+\|(1+C')\delta_n/(\xi\sigma_*^2)\boldsymbol{\Phi}_{\theta^\dag \mid \mathcal{D}}\boldsymbol{\Phi}_{\theta^\dag \mid \mathcal{D}}^\top\|_F+\|\boldsymbol{\Sigma}_{\theta^\dag \mid \mathcal{D}}- \boldsymbol{\Sigma}_{*\mid \mathcal{D}}\|_2/\sigma_*^2)^{m/2}\\
   &\le (1+(1+C')(Bmq_n)^{1/2}\delta_n/(\xi\sigma_*^2)+(C')^{1/2}\delta_n/\sigma_*^2)^{m/2}
    \end{align*}
where we use Weyl's inequality for the third inequality. Therefore, we have
 \begin{align*}
        \operatorname{KLV}\left(p_{*} || p_{\boldsymbol{\mu}_{\theta^\ddag\mid \mathcal{D}}, \boldsymbol{\Sigma}_{\theta^\ddag\mid \mathcal{D}}}\right)
         \lesssim \log(1+(q_n)^{1/2}\delta_n).
    \end{align*}
Adopting the notation of \cite{wong1995probability}, we set $\tau_n= \log(1+(q_n)^{1/2}\delta_n)$.

\textbf{Combining the pieces together}
Let $\epsilon_n^*=\epsilon_n \vee \sqrt{\delta_n}$. Then by Theorem 4 of \citep{wong1995probability}, there exists an absolute constant $C''>0$ such that
\begin{equation*}
    \begin{split}
        P_*\left(h\left(\hat{p}, p_*\right)>C_2\epsilon_n^*\right) 
        \lesssim  e^{-C'' n (\epsilon_n^{*})^2}+\frac{ \tau_n}{n (\epsilon_n^{*})^2},
    \end{split}
\end{equation*} 
which tends to zero as $n\to\infty$ by the assumptions $\epsilon^{*}_{n}q_n\to0$ and $n(\epsilon^{*}_{n})^2\to\infty$.
\end{proof}

\paragraph{Proof of Theorem \ref{theory-new}}

\begin{proof}
Since the total variation norm is upper bounded by the Hellinger distance, 
the total variation norm between $\hat{p}$ and $p_*$ is also upper bounded by $C_2 \epsilon_*$
with probability converging to 1.
In turn, by the definition of the total variation norm, $\sup_{\boldsymbol{x}\in \mathcal{D}^{\mathrm{design}}} 
d_1(\hat{p}_{\boldsymbol{x}}, p_{*,\boldsymbol{x}})$ is upper bounded by $C_2 \epsilon_*$
with probability converging to 1.
Due to the Lipschitz condition of $\hat\mu,\hat\Sigma$ as well as $\mu_*,\Sigma_*,$ 
there exists a constant $L>0$ such that
$d_1(\hat{p}_{\boldsymbol{x}},\hat{p}_{\boldsymbol{x}'}) \le L \|\boldsymbol{x} - \boldsymbol{x}'\|$
and
$d_1(p_{*,\boldsymbol{x}},p_{*,\boldsymbol{x}'}) \le L \|\boldsymbol{x} - \boldsymbol{x}'\|$
for any $\boldsymbol{x}$ and $\boldsymbol{x}'$ in $\mathbb{R}^d.$
Finally, we have
\begin{eqnarray*}
d_1(\hat{p}_{\boldsymbol{x}},p_{*,\boldsymbol{x}})
&\le& d_1(\hat{p}_{\boldsymbol{x}},\hat{p}_{\boldsymbol{x}_{(1)}})
+ d_1(\hat{p}_{\boldsymbol{x}_{(1)}},p_{*,\boldsymbol{x}_{(1)}})
+ d_1(p_{*,\boldsymbol{x}_{(1)}},p_{*,\boldsymbol{x}})\\
&\le& d_1(\hat{p}_{\boldsymbol{x}_{(1)}},p_{*,\boldsymbol{x}_{(1)}})+2L |\boldsymbol{x} - \boldsymbol{x}_{(1)}\|\\
&\le& C_2 \epsilon^*_n +2L |\boldsymbol{x} - \boldsymbol{x}_{(1)}\|
\end{eqnarray*}
with probability converging to 1. The proof is complete by letting $C_3=C_2$ and $C_4=2L.$

\end{proof}

\end{document}